\documentclass{article} %
\usepackage{iclr2027_conference,times}

\usepackage[utf8]{inputenc} %
\usepackage[T1]{fontenc}    %
\usepackage{hyperref}       %
\usepackage{url}            %
\usepackage{booktabs}       %
\usepackage{nicefrac}       %
\usepackage{microtype}      %
\usepackage{xcolor}         %

\usepackage{algorithmic}
\usepackage{algorithm}

\usepackage{srcltx}
\usepackage{etoolbox}
\usepackage{nameref}
\usepackage{comment}
\usepackage{mathrsfs}
\usepackage{enumerate}
\usepackage{amsfonts}
\usepackage{amsmath}
\usepackage{amssymb}
\usepackage{mathtools}
\usepackage{amsthm}
\usepackage{enumitem}

\usepackage{nicefrac} %
\usepackage{mathtools}
\usepackage{dsfont,mathrsfs}
\usepackage{xargs}
\usepackage{multirow}

\usepackage{cleveref}

\usepackage{autonum}

\usepackage{upgreek}

\usepackage{xfrac}

\newtheorem{assum}{A\hspace{-2pt}}

\crefname{assum}{A\hspace{-2pt}}{A\hspace{-2pt}}
\crefname{assumb}{B\hspace{-2pt}}{B\hspace{-2pt}}
\crefname{assumUGE}{UGE\hspace{-1pt}}{UGE\hspace{-1pt}}
\crefname{assumID}{IND\hspace{-1pt}}{IND\hspace{-1pt}}
\crefname{assumUE}{UE\hspace{-1pt}}{UE\hspace{-1pt}}
\crefname{assumSUP}{M\hspace{-1pt}}{M\hspace{-1pt}}
\theoremstyle{plain}
\newtheorem{theorem}{Theorem}[section]

\newtheorem{lemma}[theorem]{Lemma}
\newtheorem{claim}[theorem]{Claim}
\newtheorem{corollary}[theorem]{Corollary}
\theoremstyle{definition}

\theoremstyle{remark}

\usepackage{graphicx}
\usepackage{subfigure}

\renewcommand{\S}{\mathcal{S}}
\newcommand{\A}{\mathcal{A}}
\def\PMDP{P}

\def\covfeat{\Sigma^\varphi}
\def\FLSA{\textsf{FedLSA}}

\def\minlambda{\nu}

\newcommand{\perlsa}{\hyperref[{alg:personalized_lsa}]{\textsf{PF-LSA}}}

\newcommandx{\supconstepc}[1][1=]{\supnorm{\funnoisew_c}}
\newcommandx\conststab[1][1=p]{\varkappa_{#1}}

\newcommand{\PE}{\mathbb{E}}
\newcommand{\PP}{\mathbb{P}}
\newcommandx{\genericb}[1][1=]{b_{#1}}
\newcommandx{\Constros}[1][1=]{\operatorname{C}_{\operatorname{Ros},#1}}
\newcommandx{\Constburk}[1][1=]{\operatorname{C}_{\operatorname{Burk}}}
\newcommandx{\driftW}[1][1=]{W_{#1}}

\newcommandx{\metricd}[1][1=]{\mathsf{d}_{#1}}

\newcommandx\invmeasure[1][1=]{\Pi_{#1}}
\newcommandx{\PPjoint}[1][1=]{\PP^{\MKjoint[#1]}}
\newcommandx{\PEjoint}[1][1=]{\PE^{\MKjoint[#1]}}
\newcommandx{\PEMID}[1][1=\alpha]{\PE^{\MK[#1]}}
\newcommandx{\PPMID}[1][1=\alpha]{\PP^{\MK[#1]}}

\newcommand{\supnorm}[1]{\norm{ #1 }[\infty]}
\newcommandx{\MKjoint}[1][1=]{\bar{\operatorname{P}}_{#1}}
\newcommandx\costw[1][1=]{\mathsf{c}_{#1}}

\newcommandx\Intergrdist[1][1=]{\mathbb{M}_{1}(#1)}

\newcommandx{\mmarkov}[1][1=0]{m^{(\Markov)}_{#1}}

\def\Zset{\mathsf{Z}}

\def\rset{\mathbb{R}}

\def\nset{\ensuremath{\mathbb{N}}}
\def\nsets{\ensuremath{\mathbb{N}^*}}

\newcommand{\bConst}[1]{\operatorname{C}_{{\bf #1}}}

\def\avgnoise{\bar{\sigma}_{\varepsilon}}

\newcommandx\sequence[4][2=,3=,4=]
{\ifthenelse{\equal{#3}{}}{\ensuremath{\{ #1_{#2 #4}\}}}{\ensuremath{\{ #1_{#2 #4} \}_{#2 \in #3}}}}

\newcommandx\sequenceD[2][2=]
{\ifthenelse{\equal{#2}{}}{\ensuremath{\{ #1\}}}{\ensuremath{\{ #1\!~:\!~#2  \}}}}
\newcommandx\sequenceDouble[4][3=,4=]
{\ifthenelse{\equal{#3}{}}{\ensuremath{\{ (#1_{#3},#2_{#3})\}}}{\ensuremath{\{ (#1_{#3},#2_{#3})\}_{#3 \in #4}}}}

\newcommandx{\sequencen}[2][2=n\in\nset]{\ensuremath{\{ #1, \eqsp #2 \}}}
\newcommandx\sequencens[2][2=n]
{\ensuremath{\{ #1_{#2} \!~:\!~#2\in\nsets\}}}
\newcommandx\sequencet[4]
{\ensuremath{\{ #1{#2_{#3}} \, : \, \eqsp #3 \in #4 \}}}
\def\PE{\mathbb{E}}

\def\ProdB{\Gamma}

\newcommandx{\ProdBatc}[2]{\ProdB_{#1}^{(#2)}}
\newcommandx{\avgProdB}[2]{\bar{\Gamma}_{#1}^{(#2)}}
\newcommandx{\avgProdDetB}[2]{\bar{\Gamma}^{(#2)}_{#1}}
\newcommandx{\RavgProdDetB}[2]{\bar{\Theta}^{(#2)}_{#1}}
\newcommandx{\avgbias}[1]{\bar{\rho}_{#1}}
\newcommandx{\avgflbias}[2]{\bar{\tau}_{#1,#2}}
\newcommandx{\bbias}[2]{\rho_{#2}}
\newcommandx{\vbias}[2]{\tilde{v}_{#1,#2}}
\newcommandx{\avgfl}[2]{\bar{\varphi}_{#1,#2}}

\newcommandx{\PVar}[1][1=]{\ensuremath{\operatorname{Var}_{#1}}}
\newcommandx{\CovfuncA}[1][1=c]{\Sigma^{#1}_{\zmfuncAw}}
\newcommandx{\Covfuncb}[1][1=c]{V^{#1}_{\zmfuncbw}}

\def\noisecov{\Sigma^\varepsilon}

\newcommandx{\MK}[1][1=\alpha]{\mathrm{P}_{#1}}
\newcommandx\MKK[1][1=\alpha]{\mathrm{K}_{#1}}

\def\filtr{\mathcal{F}}

\newcommandx{\PEtilde}[1][1=]{\PE^{\mathrm{K}_{#1}}}
\newcommandx{\PPtilde}[1][1=]{\PP^{\mathrm{K}_{#1}}}

\newcommandx{\norm}[2][2=]{\Vert#1 \Vert_{{#2}}}
\newcommandx{\bnorm}[2][2=]{\Big\Vert#1 \Big\Vert_{{#2}}}
\newcommandx{\normLigne}[2][2=]{\Vert#1 \Vert_{{#2}}}
\newcommandx{\normLine}[2][2=]{\Vert#1 \Vert_{{#2}}}
\newcommandx{\normop}[2][2=]{\Vert{#1}\Vert_{{#2}}}
\newcommandx{\normopLigne}[2][2=]{\Vert{#1}\Vert_{{#2}}}
\newcommandx{\normopLine}[2][2=]{\Vert{#1}\Vert_{{#2}}}
\newcommandx{\osc}[2][1=]{\mathrm{osc}_{#1}(#2)}

\newcommandx{\normlip}[2][2=\operatorname{Lip}]{\Vert#1 \Vert_{{#2}}}
\newcommand{\lip}{\operatorname{L}}
\newcommandx{\lipspace}[1]{\lip_{#1}}

\newcommandx{\CPP}[3][1=]
{\ifthenelse{\equal{#1}{}}{{\mathbb P}\left(\left. #2 \, \right| #3 \right)}{{\mathbb P}_{#1}\left(\left. #2 \, \right | #3 \right)}}
\newcommandx{\CPPtilde}[3][1=]
{\ifthenelse{\equal{#1}{}}{{\tilde{\mathbb P}}\left(\left. #2 \, \right| #3 \right)}{{\tilde{\mathbb P}}_{#1}\left(\left. #2 \, \right | #3 \right)}}

\def\etal{et al.}

\def\iid{i.i.d.}

\newcommandx{\as}[1][1=\PP]{\ensuremath{#1\, -\mathrm{a.s.}}}

\newcommand{\eqsp}{\;}

\newcommand{\Id}{\mathrm{I}}

\newcommandx{\boundmetric}[1][1=]{\kappa_{\MKK[#1]}}

\newcommandx{\Nnorm}[2][1=V]{[ #2]_{#1}}
\newcommandx{\lipnorm}[2][1=g]{[ #1]_{#2}}

\newcommandx{\CPE}[3][1=]{{\mathbb E}^{#3}_{#1}\left[#2\right]}
\newcommandx{\CPEext}[3][1=]{\tilde{\mathbb E}^{#3}_{#1}\left[#2\right]}
\newcommandx{\CPEtilde}[3][1=]{{\tilde{\mathbb E}}^{#3}_{#1}\left[#2\right]}
\newcommandx{\CPEs}[3][1=]{{\mathbb E}^{#3}_{#1}[#2]}

\def\thetalim{\theta^\star}

\newcommand{\rmd}{\mathrm{d}}
\def\funcAw{\mathbf{A}}

\def\funcbw{\mathbf{b}}
\newcommand{\funcb}[1]{\funcbw(#1)}
\newcommandx{\zmfuncA}[2][1=]{\widetilde{\funcAw}_{#1}(#2)}
\newcommandx{\zmfuncAw}[1][1=]{\widetilde{\funcAw}_{#1}}
\newcommandx{\zmfuncbw}[1][1=]{\widetilde{\funcbw}_{#1}}
\newcommandx{\zmfuncb}[2][1=]{\widetilde{\funcbw}_{#1}(#2)}
\def\funnoisew{\varepsilon}
\newcommandx{\funcnoise}[2][2=]{\funnoisew_{#2}(#1)}
\newcommandx{\funcnoiselim}[2][2=]{\omega_{#2}(#1)}
\newcommandx{\avgfuncnoiselim}[1]{\bar\omega_{#1}}

\newcommandx{\funcct}[2][1=]{\funcctilde^{#1}(#2)}

\newcommand{\trace}[1]{\operatorname{Tr}(#1)}

\def\State{Z}

\newcommand{\pscal}[2]{\langle#1\,,\,#2\rangle}

\newcommandx{\CovC}[1][1=u]{\operatorname{C}_{#1}}

\def\msz{\mathsf{Z}}
\def\mcz{\mathcal{Z}}

\DeclareMathAlphabet{\mathpzc}{OT1}{pzc}{m}{it}

\def\lyapW{\mathpzc{W}}

\newcommandx{\bias}[1][1=\alpha]{\operatorname{B}_{#1}}

\newcommandx\probaMarkovTilde[2][2=]
{\ifthenelse{\equal{#2}{}}{{\widetilde{\mathbb{P}}_{#1}}}{\widetilde{\mathbb{P}}_{#1}\left[ #2\right]}}

\def\mcf{\mathcal{F}}

\newcommandx{\bA}[1][1=]{\bar{\mathbf{A}}_{#1}}

\def\thetas{\thetalim}

\def\funcctilde{\tilde{c}_u}

\newcommandx{\barb}[1][1=]{\bar{\mathbf{b}}_{#1}}
\newcommandx{\driftb}[1][1=p]{\bar{b}_{#1}}

\newcommandx{\barA}[1][1=]{\bar{A}^{#1}}

\newcommandx{\boldb}[1][1={q}]{\mathsf{b}_{#1}}

\newcommandx{\ConstGW}[1][1={n,\lyapW}]{\operatorname{G}_{#1}}

\newcommandx{\ConstMW}[1][1={n,\lyapW}]{\operatorname{M}_{#1}}

\Crefname{assumptionC}{\textbf{C}\hspace{-1pt}}{\textbf{C}\hspace{-1pt}}
\crefname{assumptionC}{\textbf{C}}{\textbf{C}}

\Crefname{assumptionM}{\textbf{UGE}\hspace{-1pt}}{\textbf{UGE}\hspace{-1pt}}
\crefname{assumptionM}{\textbf{UGE}}{\textbf{UGE}}

\Crefname{assumptionH}{\textbf{H}\hspace{-1pt}}{\textbf{H}\hspace{-1pt}}
\crefname{assumptionH}{\textbf{H}}{\textbf{H}}

\def\distance{\mathsf{d}}

\newcommandx{\vartconstwas}[1][1=V]{c_{#1}}

\newcommandx{\deltawas}[1][1=*]{\delta_{#1}}

\newcommandx{\wasser}[4][1=\distance,4=]{\mathbf{W}_{#1}^{#4}\left(#2,#3\right)}
\newcommandx{\covcoeff}[2]{\rho_{#1}^{(#2)}}

\newcommand{\dobrush}{\mathsf{\Delta}}
\newcommandx{\dobru}[3][1=,3=]{\dobrush_{#1}^{#3}( #2)}  %

\def\invariantQ{\pi}

\def\eqdef{:=}

\def\Markov{\mathrm{M}}

\newcommandx{\dlim}[1]{\ensuremath{\stackrel{#1}{\Longrightarrow}}}

\def\nagent{N}
\def\nrounds{T}

\def\nlupdates{H}
\newcommandx{\ContMat}[3][1=\step,2=\nlupdates,3=c]{\operatorname{C}_{#1,#2}^{#3}}

\newcommandx{\nfuncbw}[1][1=]{\funcbw_{{#1}}}
\newcommandx{\nfuncAw}[1][1=]{\funcAw_{{#1}}}
\newcommandx{\nfuncb}[2][1=]{\nfuncbw[#1](#2)}
\newcommandx{\shnfuncb}[2][1=]{\nfuncbw[#1]^{#2}}
\newcommandx{\nfuncA}[2][1=]{\nfuncAw[#1](#2)}
\newcommandx{\shnfuncA}[2][1=]{\nfuncAw[#1]^{#2}}

\newcommandx{\nbarA}[1][1=]{\bar{\mathbf{A}}_{{#1}}}
\newcommandx{\nbarb}[1][1=]{\bar{\mathbf{b}}_{{#1}}}

\newcommandx{\nzmfuncA}[3][1=,2=]{\tilde{\funcAw}^{#1}_{#2}(#3)}
\newcommandx{\nzmfuncb}[3][1=,2=]{\tilde{\funcbw}^{#1}_{#2}(#3)}

\def\step{\eta}
\def\mcf{\mathcal{F}}

\def\controlvar{\xi}

\newcommandx{\CPEcontrolvar}[2]{\CPE{\controlvar_{#1}^{#2}}{\mcf_{#1-1}}}

\newcommandx{\cvrflg}[1]{\bar{\phi}_{#1}}
\newcommandx{\cvrfll}[1]{\bar{\psi}_{#1}}

\newcommandx{\Cstepmat}[1][1=c]{C_{\step,\nlupdates}^{{#1}}}
\newcommandx{\Cmat}[2][1=c]{C_{#2}^{{#1}}}
\newcommandx{\dCmat}[2][1=c]{\widetilde{C}_{#2}^{{#1}}}
\newcommandx{\fldCmat}[2][1=c]{\widetilde{C}_{#2}^{{#1}}}
\newcommandx{\bCstepmat}[1][1=c]{\bar{C}_{\step,\nlupdates}^{{#1}}}

\newcommandx{\Prod}[2][1=]{\Gamma_{#2}^{#1}}
\newcommandx{\dProd}[2][1=]{\widetilde{\Gamma}_{#2}^{#1}}
\newcommandx{\flProd}[2][1=]{\Gamma_{#2}^{\text{(fl)}}}

\newcommandx{\avgfuncnoise}[1]{\bar \varepsilon^{#1}}
\newcommandx{\nfuncnoise}[2][2=]{\varepsilon_{#2}^{{#1}}}
\newcommandx{\ndfuncnoise}[2][2=]{\widetilde{\varepsilon}_{#2}^{{#1}}}

\newtheorem{assumTD}{\textbf{TD}\hspace{-1pt}}
\Crefname{assumTD}{\textbf{TD}\hspace{-1pt}}{\textbf{TD}\hspace{-1pt}}
\crefname{assumTD}{\textbf{TD}}{\textbf{TD}}

\def\mixingparam{\beta}

\newcommand{\grb}[3]{\mathrm{g}_{#1}^{#2}(#3)}
\newcommand{\grbmix}[1]{\mathrm{g}_{\text{mix}}^{#1}}

\newcommandx{\projerror}[2]{e^{#2}_{#1}}
\newcommandx{\avgprojerror}[1]{\bar{e}^{#1}}
\newcommandx{\avgerroragent}[1]{E^{#1}}
\newcommandx{\disagreement}[2]{d^{#2}_{#1}}
\newcommandx{\normdisagreement}[1]{D^{#1}}

\newcommandx{\Lyapunov}[1][1=]{\Phi^{#1}}

\newcommand{\term}[1]{\mathbf{(#1)}}

\def\hgty{\zeta}

\usepackage[textsize=tiny]{todonotes}

\usepackage{hyperref}
\usepackage{url}

\title{Best of Both Worlds in Federated LSA: Speedup When Possible, Personalization Always}

\author{Safwan Labbi\textsuperscript{1}, Paul Mangold\textsuperscript{1}, Eric Moulines\textsuperscript{2,3} \\
\textsuperscript{1}  CMAP, CNRS, École Polytechnique, Institut Polytechnique de Paris, 91120 Palaiseau, France
\\
\textsuperscript{2} Mohamed bin Zayed University of Artificial Intelligence, UAE 
\\
\textsuperscript{3} LRE EPITA , 94270 Le Kremlin-Bicêtre, France
\\
\texttt{\{safwan.labbi, paul.mangold\}@polytechnique.edu}
\\
\texttt{eric.moulines@mbzuai.ac.ae}
}

\iclrfinalcopy %
\begin{document}

\maketitle

\fancyhead[L]{Preprint}

\begin{abstract}
We study personalized federated linear stochastic approximation (LSA), a framework which notably encompass personalized temporal difference learning. In this setting, heterogeneous agents collaborate to solve distinct linear fixed-point equations, each corresponding to an agent-specific learning problem. A central open question in personalized learning is whether a single method can adapt to an unknown level of heterogeneity by converging to each agent's personalized solution in all regimes while achieving a linear speedup in the number of agents when their learning problems are sufficiently similar. We answer this question affirmatively by introducing \perlsa, a minimalist algorithm that mixes each agent's local stochastic update with the average update across agents, at no additional computational cost relative to standard federated methods. We prove that \perlsa\, achieves \emph{best-of-both-worlds} guarantees without any prior knowledge on the level of heterogeneity. Our analysis is based on a sharp decomposition of the error into consensus and disagreement components. The consensus error decays rapidly, whereas the disagreement error decays more slowly but becomes negligible in low-heterogeneity regimes.
\end{abstract}

\section{Introduction}
\label{Introduction}
Linear stochastic approximation (LSA) is a fundamental problem in stochastic optimization, where the goal is to solve a system of linear equations whose coefficients are only available through stochastic estimates.
It encompasses problems from optimization \citep{robbins1951stochastic,defossez2015averaged}, control and telecommunications \citep{benveniste1990adaptive,borkar2008stochastic}, and reinforcement learning \citep{tsitsiklis1996analysis,sutton2018reinforcement}. 
In the presence of noise, solving LSA problems generally requires a large number of samples \citep{bhandari2018finite}. This may quickly become impractical in real-world settings where data collection is costly or distributed across multiple sources. 

Federated stochastic approximation \citep{doan2019FiniteTime} has been developed as a way of improving the sample efficiency of stochastic approximation through collaboration.
The promise of this paradigm is to accelerate the training by allowing agents to benefit from the experience of other agents.
This provides a natural way of reducing overall sample complexity. 
Most federated approaches, such as Federated Averaging (FedAvg) \citep{mcmahan2017communication}, rely on averaging local updates to learn a single shared model. 
While this is effective when agents are homogeneous, the performance can degrade significantly in the presence of heterogeneity across agents, where one model cannot perfectly fit all agents \citep{kairouz2021advances,zhao2018federated}.

To address this limitation, a growing body of work has focused on \emph{personalized federated learning}, where the goal is to exploit data from other agents while still learning agent-specific solutions. Many approaches rely on structural assumptions to capture similarities between agents, such as clustering \citep{sattler2020clustered}, multi-task learning formulations \citep{smith2017federated,mansour2020three}, or regularization-based methods \citep{li2020federated,t2020personalized}. While effective in certain regimes, these methods typically depend on prior assumptions about the structure of heterogeneity. More recent approaches aim to promote collaboration among sufficiently similar agents, but they rely on prior knowledge of inter-agent similarities, which is unavailable in practice \citep{philippenko2025adaptive}.

In this work, we take a different perspective and propose a surprisingly simple approach.
Instead of attempting to identify which agents should collaborate, we allow all agents to share information through a lightweight mixing of their stochastic updates.
This approach is inspired by a form of \emph{inclusionism}, where every contribution is accepted on the premise that useful ones will ultimately dominate less effective ones over time.
To this end, we introduce \perlsa\, (Personalized Federated Linear Stochastic Approximation), where agents use a linear combination of their own update together with the averaged update of all agents.
Crucially, this is done without any consideration on the (non-)homogeneity of the agents.
The algorithm is thus very minimalist, as it simply combines local updates with their global average, without imposing any explicit structure on agent similarity.

We show that, despite its simplicity, this approach is effectively able to exploit all available data while preserving personalization.
Our contributions are summarized as follows:
\begin{itemize}[leftmargin=*]
    \item We introduce \perlsa, a novel personalized federated method which mixes updates across all agents, interpolating between fully local learning and classical federated averaging \citep{mcmahan2017communication}, with the relative importance of each part being determined by a single hyperparameter.
    
    \item We show that \perlsa\, converges to the optimal solution of each individual agent, ensuring full personalization even when incorporating global information from other agents.
    
    \item We derive non-asymptotic convergence rates based on a decomposition into consensus and disagreement components. The consensus error decreases rapidly, while the disagreement term decreases more slowly but is smaller in more homogeneous settings.
    
    \item We show that, under mild heterogeneity, \perlsa\, enables each agent to achieve a linear speedup.
    
\end{itemize}
Our results demonstrate that a minimal mixing strategy is sufficient to achieve the \emph{best of both worlds}: full personalization together with the benefits of collaboration, without relying on restrictive assumptions about agent similarity.
We first discuss related work in \Cref{sec:related-work}, and give context on LSA in \Cref{sec:federated-lsa}. Then, we introduce our method and analyze it in \Cref{sec:analysis-fed-lsa-iid}.
We then instantiate these results for TD learning in \Cref{sec:perso-td}. \Cref{sec:numerical} studies the numerical behaviour of our method.
Finally, we conclude and discuss perspectives in \Cref{sec:conclu}.

\section{Related Work}
\label{sec:related-work}

\paragraph{Federated LSA and Federated Reinforcement Learning.}
Single-agent Linear Stochastic Approximation (LSA) has been widely studied \citep{robbins1951stochastic,tsitsiklis1996analysis,bhandari2018finite}. 
Federated approaches for LSA have also been proposed, generally with the aim of learning a \emph{single} shared solution from multiple agents' data.
Standard FedAvg \citep{mcmahan2017communication,kairouz2021advances} extends to LSA by averaging local updates. 
In heterogeneous settings, FedAvg can diverge or bias the solution \citep{zhao2018federated}, and local training methods with heterogeneity bias correction still optimize a common global objective.
Recent work \citep{khodadadian2022federated,wang2024federated,pmlr-v258-labbi25a,pmlr-v300-labbi26a,mangold2025convergence} considers multiple agents collecting data under possibly different environments.  For example, \cite{khodadadian2022federated} propose federated versions of TD and $Q$-learning and prove linear convergence speedups in the number of agents under Markovian noise.  \cite{wang2024federated} study federated TD learning with heterogeneous environments, aggregating local updates to converge to the optimal value function of the \emph{mixture} environment.  Similarly, Mangold \etal~\citep{mangold2024scafflsa} analyze the FedLSA algorithm (a FedAvg-like method for linear fixed-point problems) and propose SCAFFLSA, which uses control variates to reduce client drift and achieve faster communication complexity.  
These federated LSA/RL methods demonstrate that multiple agents can accelerate learning \citep{khodadadian2022federated,wang2024federated}.  However, they target a \emph{single} global or mixture solution and do not address the fully personalized case.  In short, existing federated LSA approaches focus on global objectives (or mixture objectives) and either ignore or correct heterogeneity in complex ways.  By contrast, \perlsa\, is designed to converge to each agent’s own solution, while still attaining a linear speedup through simple gradient mixing.

\paragraph{Personalized Federated Learning.}
Personalized federated learning departs from federated averaging (FedAvg) by learning models adapted to each agent. 
A common approach is \emph{multi-task} learning, where agents have related but distinct objectives. \citet{smith2017federated,mansour2020three} formulate a problem where each agent learns its own model while sharing information through a shared representation or regularizer. 
A second approach is \emph{cluster-based}, assuming that clients can be partitioned into clusters with similar data distributions; for instance, clustered FL \citep{sattler2020clustered} alternates between grouping clients and training cluster-specific models.
Other approaches, such as \citep{li2020federated,t2020personalized} add personalization terms to standard federated optimization, optimizing a global model plus a personalization part, controlling the trade-off between the two with regularizers.
These methods improve over FedAvg in non-IID settings \citep{zhao2018federated,kairouz2021advances}, but they rely on additional assumptions: multi-task and clustering approaches assume either low-rank/latent structure or well-separated client clusters \citep{smith2017federated,sattler2020clustered}, which may not hold in practice. 
Regularization-based methods (e.g.\ \citep{li2020federated,t2020personalized}) still implicitly optimize a global objective and generally require tuning of penalty hyperparameters.  More recent approaches propose control-variate-style methods that allow each agent to converge to its own local solution, but require knowledge of pairwise agent data-distribution ratios, which are unavailable in practice \citep{zhang2025personalized}.
In contrast, our approach does not require such assumptions and simply aggregates updates from all agents, resulting in a remarkably simple algorithm.

\section{Background on Personalized Federated Linear Stochastic Approximation}
\label{sec:federated-lsa}
In personalized federated linear stochastic approximation, 
we consider a set of $\nagent$ agents, each of which has its own individual system of linear equations.
The goal is to solve for each system individually, aiming to find a $\thetas_c$ for each agent $c \in [\nagent]$ such that 
\begin{equation}
\label{eq:lsa_syst_fed}
\textstyle
 \bA[c] \thetas_c 
 = 
 \barb[c]
 \eqsp,
\end{equation}
where for $c \in [\nagent]$, $\bA[c] \in \rset^{d \times d}$, $\barb[c] \in \rset^d$.
Note that due to differences in each agent’s local problem, the values of $\bA[c]$'s and $\barb[c]$'s can be different from each other.
Following prior work on LSA, we assume that each $\bA[c]$ satisfies the following contraction condition \cite{mou2020linear}.
\begin{assum}
\label{assum:exp_stability}
There exist $a > 0$, $\step_{\infty} > 0$, such that $\step_{\infty} a \leq 1/2$, and for $\step \in (0;\step_{\infty})$, $c \in [\nagent]$, $u \in \rset^{d}$, it holds for $\State_0^c \sim \invariantQ_c$, that $\PE^{1/2}\bigl[ \normop{(\Id - \step \nfuncA[c]{\State_0^c})u}^{2}\bigr] \leq (1 - \step a) \norm{u}$.
\end{assum}
Under this assumption, there is a unique solution $\thetas_c$ for each linear system.
In personalized federated LSA, neither matrices $\bA[c]$ nor vectors $\barb[c]$ are observed directly.
Instead, each agent $c \in [N]$ has access to its own observation sequence $(\State_{k}^c)_{k \in \nset}$, which are independent across agents.
Agent $c$ obtains estimates $\{(\nfuncA[c]{\State_{k}^c}, \nfuncb[c]{\State_{k}^c})\}_{k \in \nset }$ of $\bA[c]$ and $\barb[c]$, where $\nfuncAw[c]: \msz \to \rset^{d \times d}$ and $\nfuncbw[c] : \msz \to \rset^d$ are two measurable functions.
We also define the error of estimation of $\nbarA[c]$ and $\nbarb[c]$ as
$\smash{\zmfuncb[c]{z} = \nfuncb[c]{z} - \nbarb[c]}$, 
$\smash{\zmfuncA[c]{z} = \nfuncA[c]{z} - \nbarA[c]}$.
This allows us to measure the noise at local solutions as
\begin{align}
    \label{eq:def-epsilon-noise}
    \funcnoise{z}[c]=  \zmfuncA[c]{z} \thetalim_c - \zmfuncb[c]{z}
    \eqsp, \quad \noisecov_c & \!=\!\!
\int_{\Zset} \funcnoise{z}[c]\funcnoise{z}[c]^\top \rmd \invariantQ_c(z)
\eqsp,
\end{align}
which are finite whenever the following assumption on $\{\State_c^{t}\}_{t \ge 0}$ holds.
\begin{assum}
\label{assum:noise-level-flsa}
For each agent $c$, $(Z_{c}^k)_{k \in \nset}$ are \iid\ random variables with values in $(\msz,\mcz)$ and distribution $\invariantQ_c$ satisfying $\PE_{\invariantQ_c}[\nfuncA[c]{\State_{c}^k}] = \bA[c]$ and $\PE_{\invariantQ_c}[ \funcb{\State_{c}^k} ] = \barb[c]$, 
and we assume
\begin{align}
\nonumber
\bConst{A} = \max_{c \in [\nagent]}\Big\{ \max\Big(\sup_{z \in \msz}\normop{\nfuncA[c]{z}}, \sup_{z \in \msz}\normop{\zmfuncA[c]{z}}\Big)  \Big\} < \infty \eqsp, \quad \max_{c \in [\nagent]} \PE[ \left\|\varepsilon_{c}(\State_{c}^1) \right\|^2 ] < \infty \eqsp.
\end{align}
\end{assum}
This assumption will allow us to carefully track the impact of the noise in the computations.
Finally, we define the following quantities, which measure the stochasticity and heterogeneity of the problem,
\begin{align}
\label{def:heterogeneity_agents}
\avgnoise^2 =\frac{1}{\nagent}
\sum_{c=1}^{\nagent}\trace{\noisecov_c} \eqsp, \quad \hgty = \max_{c \in \nagent} \left\| \bA[c] - \bA\right\|
\eqsp, \quad \Delta_{\star}  =
\frac{1}{\nagent}
\sum_{c=1}^{\nagent}
\norm{
\thetas_c
-
\frac{1}{\nagent}
\sum_{c'=1}^{\nagent}
\thetas_{c'}
}^2 \eqsp.
\end{align}
which we will use in our analysis to track noisy updates and to quantify the impact of heterogeneity.

\section{The \texorpdfstring{\perlsa~}{PF-LSA}Algorithm}
\label{sec:analysis-fed-lsa-iid}

We now introduce and analyze \perlsa, a new algorithm designed to solve the personalized federated LSA problem.
At each round $t \geq 0$, every agent $c \in [\nagent]$ maintains a local parameter $\theta_c^t$ and receives a fresh observation $\State_c^{t+1}$.
Using this observation, the agent computes the local stochastic update $
\nfuncA[c]{\State_c^{t+1}}\theta_c^t
-
\nfuncb[c]{\State_c^{t+1}},$
and sends it to the central server.
The server then averages the local updates and broadcasts the resulting global update back to all agents.
Each agent combines this averaged update with its own local update through the mixing parameter $\mixingparam \in (0,1)$, leading to the recursion
\begin{align}
\theta_c^{t+1}
=
\theta_c^t
&
\textstyle
- 
\step \mixingparam
\left(
\nfuncA[c]{\State_c^{t+1}}\theta_c^t
-
\nfuncb[c]{\State_c^{t+1}}
\right)
\nonumber -
\frac{\step(1-\mixingparam)}{\nagent}
\sum_{c'=1}^{\nagent}
\left(
\nfuncA[c']{\State_{c'}^{t+1}}\theta_{c'}^t
-
\nfuncb[c']{\State_{c'}^{t+1}}
\right)
\eqsp .
\end{align}
The parameter $\mixingparam$ controls the relative importance of the local update in comparison with the global update: the first term favors local learning, while the second term enables collaboration by incorporating information from all agents.
We give the pseudo-code of \perlsa~in \Cref{alg:personalized_lsa}. 

Next, in \Cref{sec:asymptotic-rates}, we motivate the mixing mechanism by showing that the limit point of \perlsa~ is the collection of local solutions, ensuring personalization.
We then derive non-asymptotic rates in \Cref{sec:non-asymptotic-rates}: a Lyapunov argument first gives a general but slow rate, which we then refine through a consensus/disagreement decomposition of the error, uncovering a faster convergence rate for the consensus.
These results show that, for any positive mixing parameter, \perlsa~ converges to the true local solutions, and that in suitable heterogeneous regimes it achieves personalization together with a linear speed-up from collaboration.

\begin{algorithm}[t]
\caption{\perlsa: Personalized Federated Linear Stochastic Approximation}
\label{alg:personalized_lsa}
\begin{algorithmic}
\STATE \textbf{Input: } $\step > 0$; $0 < \mixingparam< 1$; $\nrounds, \nagent > 0$; $\theta_1^{0}, \dots, \theta_{\nagent}^{0} \in \rset^d$
\FOR{$t=0$ to $\nrounds-1$}
\FOR{$c=1$ to $\nagent$}
\STATE Receive $\State_c^{t+1}$, compute:
$\grb{c}{\State_{c}^{t+1}}{\theta_{c}^{t}} \!=\! \nfuncA[c]{\State_{c}^{t+1}} \theta_c^{t} \!-\! \nfuncb[c]{\State_c^{t+1}}$ and send it to the central server
\ENDFOR
\STATE Server computes $\grbmix{t} = \frac{1}{\nagent} \sum_{c'=1}^{\nagent} \grb{c}{\State_{c'}^{t+1}}{\theta_{c'}^{t}}$ and broadcasts it to all agents
\FOR{$c=1$ to $\nagent$}
\STATE Agent updates its model: $\theta_c^{t+1} = \theta_{c}^{t} - \step \mixingparam \grb{c}{\State_{c}^{t+1}}{\theta_{c}^{t}} - \step (1-\mixingparam) \grbmix{t} $ 
\ENDFOR
\ENDFOR
\STATE Return $\{ \theta_c^{\nrounds} \}_{c=1}^\nagent$
\end{algorithmic}
\end{algorithm}

\subsection{\texorpdfstring{\perlsa\,}{PF-LSA}'s Limit Points are the Solutions of Individual Systems}
\label{sec:asymptotic-rates}

To establish the limit point of the \perlsa\, algorithm, we identify the set of parameters that cancels out the deterministic update of the algorithm.
By definition, any limit point of this algorithm $(\theta_{1}^{\infty}, \dots, \theta_{\nagent}^{\infty})$ must be a fixed-point, meaning that they must satisfy the stationary conditions
\begin{align}
  \label{eq:fixed-point}
  \theta_c^{\infty}
  & =
    \theta_c^{\infty}
    - \step \mixingparam \bA[c] ( \theta_c^{\infty} - \thetas_{c} )
    - \tfrac{\step (1 - \mixingparam)}{\nagent} \textstyle \sum_{c'=1}^\nagent \bA[c'] (\theta_{c'}^{\infty} - \thetas_{c'})
                   \eqsp, \quad \forall c\in [\nagent]\eqsp.
\end{align}
Solving these equations gives the following result, which shows that if \perlsa\, converges, then its limit must coincide with each agent’s optimal solution.
\begin{theorem}
  \label{thm:convergence-asympt-loc-solutions}
  Let $\{ \theta_c^t : c \in [\nagent], t \ge 0 \}$ be the sequence of parameters of the \perlsa\, algorithm with deterministic updates, $\step \le \step_\infty$ and $0 < \mixingparam \le 1$. Additionally assume that there exists $(\theta_{1}^{\infty}, \dots, \theta_{\nagent}^{\infty})$ such that $(\theta_1^t, \dots, \theta_{\nagent}^t) \xrightarrow[t\to\infty]{}(\theta_{1}^{\infty}, \dots, \theta_{\nagent}^{\infty})$.
  Then, for all $c \in [\nagent]$, we have that
  \begin{align}
    \lim_{t \rightarrow \infty} \theta_c^t = \thetas_c
    \eqsp.
  \end{align}
\end{theorem}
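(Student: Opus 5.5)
The plan is to pass to the limit in the deterministic recursion and then solve the resulting linear fixed-point system \eqref{eq:fixed-point} explicitly.

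\textbf{Step 1: the limit is a fixed point.} With deterministic updates, the recursion reads
\[
\theta_c^{t+1} = \theta_c^t - \step\mixingparam \bA[c](\theta_c^t - \thetas_c) - \tfrac{\step(1-\mixingparam)}{\nagent}\textstyle\sum_{c'} \bA[c'](\theta_{c'}^t - \thetas_{c'}) ,
\]
where I use $\barb[c] = \bA[c]\thetas_c$. The right-hand side is affine, hence continuous, in $(\theta_1^t,\dots,\theta_\nagent^t)$. Letting $t\to\infty$ on both sides and using the assumed convergence shows that the limit satisfies \eqref{eq:fixed-point}.

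\textbf{Step 2: reduce to $\delta_c = 0$.} Set $\delta_c = \bA[c](\theta_c^\infty - \thetas_c)$ and $\bar\delta = \frac1\nagent\sum_{c'}\delta_{c'}$. Cancelling $\theta_c^\infty$ and dividing by $\step>0$, \eqref{eq:fixed-point} becomes
\[
\mixingparam \delta_c + (1-\mixingparam)\bar\delta = 0 \quad \text{for all } c .
\]
Averaging over $c$ gives $\bar\delta = 0$. Since $\mixingparam > 0$, it follows that $\delta_c = 0$ for every $c$. Note that this is exactly where the condition $\mixingparam>0$ is needed. For $\mixingparam = 0$ only $\bar\delta=0$ follows, which is the FedAvg-type fixed point.

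\textbf{Step 3: invertibility of $\bA[c]$.} It remains to show that $\bA[c]$ is invertible, so that $\theta_c^\infty = \thetas_c$. By Jensen's inequality and \Cref{assum:exp_stability}, for $\step \in (0,\step_\infty)$ and any $u$,
\[
\norm{(\Id - \step\bA[c])u} = \norm{\PE[(\Id-\step\nfuncA[c]{\State_0^c})u]} \le \PE^{1/2}\bigl[\norm{(\Id-\step\nfuncA[c]{\State_0^c})u}^2\bigr] \le (1-\step a)\norm{u}.
\]
Hence if $\bA[c]u=0$, then $\norm{u} \le (1-\step a)\norm{u}$, which forces $u=0$. So $\bA[c]$ is invertible, and this is also what makes $\thetas_c$ unique.

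\textbf{Main obstacle.} The argument is essentially routine. The only delicate point is that the step size in the theorem may equal $\step_\infty$, while \Cref{assum:exp_stability} is stated on the open interval. This does not matter, since Step 3 may use any auxiliary $\step' \in (0,\step_\infty)$: invertibility of $\bA[c]$ does not depend on the algorithm's step size, which enters only through the harmless division by $\step$ in Step 2.
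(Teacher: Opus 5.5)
Your proposal is correct and follows essentially the same route as the paper: average the fixed-point equations \eqref{eq:fixed-point} to eliminate the global term, use $\mixingparam>0$ to get $\bA[c](\theta_c^\infty-\thetas_c)=0$, and conclude by uniqueness of the solution of each local system. You also make explicit two points the paper takes for granted, namely that the limit is a fixed point (by continuity of the affine update) and that $\bA[c]$ is invertible (via Jensen and \Cref{assum:exp_stability}); both arguments are correct.
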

\begin{proof}
  The proof follows from solving \eqref{eq:fixed-point}. Indeed, averaging \eqref{eq:fixed-point} over all the agents gives $
\textstyle
\sum_{c'=1}^\nagent \bA[c'] (\theta_{c'}^{\infty} - \thetas_{c'}) = \mathbf{0}$.
Plugging in the previous identity in \eqref{eq:fixed-point} gives $
\bA[c] ( \theta_c^{\infty} - \thetas_{c} ) = \mathbf{0}$
and the result follows from the uniqueness of the solution of the linear system $\bA[c] \theta_c^{\infty} = \barb[c]$
\end{proof}
This theorem highlights the core idea behind \perlsa: as long as $\mixingparam > 0$, meaning that local parameters are given a slightly increased importance compared to the others, each agent \emph{converges to its individual solution}.
Notably, this holds for any value of $\mixingparam$, highlighting the very different behavior of the method in the cases $\mixingparam = 0$ and $\mixingparam > 0$, and the fact that, as $\beta \rightarrow 0$, \perlsa\, does not boil down to FedAvg.
This phenomenon will allow us to choose very small values of $\mixingparam$, which will be helpful for controlling the variance of the updates.
Importantly, this will allow \perlsa\, to achieve linear speed-up, and fast convergence, which we study in the remaining part of this section.

\subsection{Non-Asymptotic Convergence Rates}
\label{sec:non-asymptotic-rates}

\paragraph{Lyapunov.}

We now establish a non-asymptotic convergence rate of the \perlsa\, algorithm.
Precisely, we start by proving that \perlsa\, converges towards each agent’s personalized solution by analyzing a Lyapunov function tailored to the algorithm's updates.
For any $\lambda > 0 $, we define
\begin{align}
  \label{eq:def-lyap-main}
  \Lyapunov[t]_{\lambda} & 
\textstyle= \PE\left[\norm{ \frac{1}{\nagent} \sum_{c'=1}^\nagent (\theta_{c'}^t - \thetas_{c'})}^2
                 + \mixingparam \lambda \frac{1}{\nagent} \sum_{c=1}^{\nagent} \norm{ \theta_c^t - \thetas_c - \frac{1}{\nagent} \sum_{c'=1}^\nagent (\theta_{c'}^t - \thetas_{c'})}^2\right]
                 \eqsp.
\end{align}
This Lyapunov function considers the agents' local iterates by splitting the error into two parts: a consensus part, controlling the average of errors $ \smash{\frac{1}{\nagent} \sum_{c'=1}^\nagent (\theta_{c'}^t - \thetas_{c'})}$ and a disagreement part, which controls the difference between agents' local error and averaged error.
Next, we show that, under suitable conditions on $\lambda$, the Lyapunov function contracts and converges to zero.
\begin{theorem}
\label{cor:convergence_rate_lyapunov_main}
Assume \Cref{assum:exp_stability} and \Cref{assum:noise-level-flsa}.  Let
$0<\mixingparam\leq1$, and let $
\step\in\left(0,\min\left\{\step_\infty,
\frac{a}{16\bConst{A}^2}\right\}\right)$.
For any $0<\lambda\leq1$ such that
$(1-\lambda)\hgty
\leq \frac{a\sqrt{\lambda}}2,$ and $
\frac{\step\bConst{A}^2}{a}
\frac{\max\{\mixingparam,\lambda\}}{\lambda}
\leq\frac16$, we have for every $t\geq0$ that
\begin{align}
\nonumber
\textstyle
\PE\left[
\Lyapunov[t]_{\lambda} 
\right]
\leq
(1-\step\mixingparam a)^t
\Lyapunov[0]_{\lambda}  +
\frac{2\step}{a}
\left[
\mixingparam\left(1-\frac1{\nagent}\right)
+\frac{\lambda}{\nagent}
\right]\avgnoise^2.
\end{align}
\end{theorem}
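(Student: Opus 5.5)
We plan to derive a one-step contraction for $\Lyapunov[t]_{\lambda}$ and then unroll it. Write $e_c^t = \theta_c^t - \thetas_c$, $\bar e^t = \frac1\nagent\sum_{c} e_c^t$ and $d_c^t = e_c^t - \bar e^t$. Using $\nfuncb[c]{z} = \nfuncA[c]{z}\thetas_c - \funcnoise{z}[c]$, the recursion becomes
\begin{align}
e_c^{t+1} = e_c^t - \step\mixingparam\bigl(\nfuncA[c]{\State_c^{t+1}} e_c^t + \funcnoise{\State_c^{t+1}}[c]\bigr) - \tfrac{\step(1-\mixingparam)}{\nagent}\textstyle\sum_{c'}\bigl(\nfuncA[c']{\State_{c'}^{t+1}} e_{c'}^t + \funcnoise{\State_{c'}^{t+1}}[c']\bigr).
\end{align}
Averaging over $c$ gives $\bar e^{t+1} = \bar e^t - \frac{\step}{\nagent}\sum_{c'}(\nfuncA[c']{\State_{c'}^{t+1}}e_{c'}^t + \funcnoise{\State_{c'}^{t+1}}[c'])$. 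Subtracting then gives $d_c^{t+1} = d_c^t - \step\mixingparam\bigl[g_c - \frac1\nagent\sum_{c'} g_{c'}\bigr]$, where $g_c = \nfuncA[c]{\State_c^{t+1}}e_c^t+\funcnoise{\State_c^{t+1}}[c]$. The disagreement sees only the step size $\step\mixingparam$, which is why the contraction rate will be $1-\step\mixingparam a$.

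We would then expand both squared norms conditionally on $\mcf_t$, separating drift from noise. The noise terms $\funcnoise{\State_c^{t+1}}[c]$ are zero-mean and independent across agents. The consensus noise therefore contributes $\frac{\step^2}{\nagent^2}\sum_c\trace{\noisecov_c} = \frac{\step^2}{\nagent}\avgnoise^2$. The disagreement noise contributes $\step^2\mixingparam^2\cdot\frac1\nagent\sum_c\PE\|\varepsilon_c - \bar\varepsilon\|^2 = \step^2\mixingparam^2(1-\frac1\nagent)\avgnoise^2$. Multiplying the latter by $\mixingparam\lambda$ and dividing by the contraction $\step\mixingparam a$ yields the stationary term $\frac{\step}{a}[\frac{\lambda\mixingparam}{\dots}]$, up to bookkeeping of constants.

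Some care with the weights is needed, because the target bound is $\frac{2\step}{a}[\mixingparam(1-\frac1\nagent) + \frac{\lambda}{\nagent}]\avgnoise^2$. We may instead weight the consensus by $\lambda$ after dividing through, which amounts to comparing $\Lyapunov_\lambda$ with $\lambda^{-1}$ times itself. Cross terms between the $\zmfuncA$-noise and $\varepsilon$ are bounded by Young's inequality, using $\bConst{A}$ together with $\step\bConst{A}^2\max\{\mixingparam,\lambda\}/(\lambda a)\le 1/6$ and $\step\le a/(16\bConst{A}^2)$. This absorbs all $\step^2\bConst{A}^2(\|\bar e\|^2+\frac1\nagent\sum\|d_c\|^2)$ terms into half of the linear contraction.

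For the drift, write $\bA[c] = \bA + (\bA[c]-\bA)$, with $\bA$ the mean matrix. The consensus drift is $\frac1\nagent\sum_c\bA[c]e_c = \bA\bar e + \frac1\nagent\sum_c(\bA[c]-\bA)d_c$. \Cref{assum:exp_stability}, via Jensen, applied to the averaged matrix and to each $\bA[c]$ gives the linear contractions $-2\step a\|\bar e\|^2$ and $-2\step\mixingparam a\|d_c\|^2$, respectively. The cross term in the consensus, $2\step\hgty\|\bar e\|\cdot(\frac1\nagent\sum\|d_c\|^2)^{1/2}$, is handled by Young with a parameter $\sqrt\lambda$. The cross terms in the disagreement, $\frac1\nagent\sum_c(\bA[c]-\bA)\bar e$ and the recentering, are handled similarly, with heterogeneity factors $\hgty$.

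The main obstacle is balancing these heterogeneity cross terms. They must fit into the available contraction, and this is exactly where the condition $(1-\lambda)\hgty\le a\sqrt\lambda/2$ should arise. The factor $(1-\lambda)$ suggests that the relevant cross terms from consensus and disagreement partially cancel when weighted by $1$ and $\mixingparam\lambda$, because the disagreement cross term carries $\mixingparam$ as well. We would first try to group them as a quadratic form in $(\|\bar e\|,(\frac1\nagent\sum\|d_c\|^2)^{1/2})$ and require it to be negative semidefinite, with diagonal entries $-\step\mixingparam a$ and $-\step\mixingparam\lambda a$ after using $\mixingparam\le1$.

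Once one obtains $\PE[\Lyapunov[t+1]_\lambda]\le(1-\step\mixingparam a)\PE[\Lyapunov[t]_\lambda]+2\step^2\mixingparam[\mixingparam(1-\frac1\nagent)+\frac\lambda\nagent]\avgnoise^2$, unrolling the recursion and summing the geometric series gives the claimed bound.
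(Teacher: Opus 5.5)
Your recursions for $\bar e^t$ and $d_c^t$, your noise bookkeeping via independence across agents, your absorption of the $\eta^2C_A^2$ terms through the second condition on $\lambda$, and your final unrolling essentially match the paper. There are, however, two genuine gaps at the core of the argument.

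The first gap is the weighting. You aim to contract $\|\bar e^t\|^2+\beta\lambda D^t$, as in the main-text display. The quantity the paper actually contracts (Theorem~\ref{lem:unified-weighted-energy}) is $D^t+\beta\lambda\|\bar e^t\|^2$: the main-text display has the two weights swapped. The stated noise floor is exactly that of the latter quantity: $\eta^2\beta^2(1-\frac1N)$ from $D$, plus $\beta\lambda\,\eta^2/N$ from the weighted consensus, divided by $\eta\beta a$, times $2$ from Young. With the weight $\beta\lambda$ on $D$, the inequality cannot hold in general. Take no noise, $\lambda=1$ (which satisfies both conditions) and $\bar e^0=0$. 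One step produces $\|\bar e^1\|^2=\eta^2\|\frac1N\sum_c(\bar{\mathbf A}_c-\bar{\mathbf A})d_c^0\|^2$, which can equal $\eta^2\zeta^2D^0$. Meanwhile $\beta D$ decreases by only $O(\eta\beta^2C_AD^0)$, so the bound breaks for small $\beta$. Your suggested fix (``comparing $\Phi_\lambda$ with $\lambda^{-1}$ times itself'') does not turn one weighting into the other. Your diagonal entries $-\eta\beta a$ and $-\eta\beta\lambda a$ also do not match your own weighting: weight $\beta\lambda$ on $D$ gives a disagreement drift $-2\eta\beta^2\lambda aD$. With $D+\beta\lambda\|\bar e\|^2$, both drifts carry the same factor $2\eta\beta$: step $\eta\beta$ for $d_c$, and weight $\beta\lambda$ times step $\eta$ for $\bar e$. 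That common factor is what the rest of the argument needs.

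The second, more substantive gap is the heterogeneity cross terms, which you flag as the main obstacle but do not resolve. Suppose you split $\bar{\mathbf A}_c=\bar{\mathbf A}+(\bar{\mathbf A}_c-\bar{\mathbf A})$ and bound each cross term by Cauchy--Schwarz. The consensus gives $\langle\bar e,\frac1N\sum_c(\bar{\mathbf A}_c-\bar{\mathbf A})d_c\rangle$ and the disagreement gives $\frac1N\sum_c\langle d_c,(\bar{\mathbf A}_c-\bar{\mathbf A})\bar e\rangle$. These are transposes of each other, not negatives, so nothing cancels. The scalar quadratic form in $(\|\bar e\|,\sqrt D)$ then has off-diagonal entry $(1+\lambda)\zeta/2$ against diagonal $(\lambda a,a)$. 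That forces $\zeta\lesssim a\sqrt\lambda$, which fails at $\lambda=1$ as soon as $\zeta\ge a$. This is exactly the high-heterogeneity case that the hypothesis $(1-\lambda)\zeta\le a\sqrt\lambda/2$ is meant to allow.

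The missing idea is not to split the consensus drift, but to substitute $\bar e=e_c-d_c$ inside it:
\[
\frac1N\sum_c\langle d_c,\bar{\mathbf A}_ce_c\rangle+\lambda\Big\langle\bar e,\frac1N\sum_c\bar{\mathbf A}_ce_c\Big\rangle
=\lambda\,\frac1N\sum_c\langle e_c,\bar{\mathbf A}_ce_c\rangle+(1-\lambda)\,\frac1N\sum_c\langle d_c,\bar{\mathbf A}_ce_c\rangle .
\]
The first term is at least $a(\|\bar e\|^2+D)$ by per-agent monotonicity $\langle x,\bar{\mathbf A}_cx\rangle\ge a\|x\|^2$ (Lemma~\ref{lem:bound_scalar_product}), so the cross terms hidden inside it cost nothing. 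Only the second term produces a heterogeneity cross term, $-(1-\lambda)\zeta\|\bar e\|\sqrt D$. Young together with the first condition on $\lambda$ then bounds the whole bracket below by $\frac{3a}{4}(\lambda\|\bar e\|^2+D)$. After absorbing the quadratic terms into one third of this, the inequality $\lambda\|\bar e\|^2+D\ge\beta\lambda\|\bar e\|^2+D$ gives the rate $1-\eta\beta a$.
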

The conditions on $\lambda$ can always be satisfied: for instance, setting
$
\lambda
=
\min\{
1,\;
\max\{
4\frac{\hgty^2}{a^2},
\;
6\frac{\step\mixingparam\bConst{A}^2}{a}
\}
\}
$
satisfies both conditions. The previous bound shows that \perlsa\, indeed converges toward each agent's individualized solution \emph{for any level of heterogeneity}. Although this result alone is not sufficient to establish linear speedup, this result provides a crucial ingredient for establishing linear speedup. Indeed, it allows us to control the disagreement component, which we combine with a refined analysis of the consensus dynamics below to derive sharper convergence guarantees and establish linear speedup in low-heterogeneity regimes.

\paragraph{Refined Convergence Bound.}
To better capture the dynamics of \perlsa, we decompose the average MSE over agents into the consensus component $\avgprojerror{t}$ and the disagreement component $\normdisagreement{t}$, %
\begin{align}
\textstyle
\projerror{c}{t} =\theta_{c}^{t} - \theta_{c}^{\star} \eqsp, \quad  \avgprojerror{t} = \frac{1}{\nagent} \sum_{c=1}^{\nagent} \theta_{c}^{t} - \theta_{c}^{\star} \eqsp, \quad 
\disagreement{c}{t} = (\theta_c^t - \thetas_c) - \avgprojerror{t}
\eqsp, \quad 
\normdisagreement{t} = \frac{1}{\nagent} \sum_{c=1}^{\nagent} \| \disagreement{c}{t} \|_{2}^2
\eqsp.
\end{align} 
Subsequently, we show that this decomposition reveals the two-clock structure of \perlsa: a fast consensus clock and a slow disagreement clock. Next, we derive a recursion on the consensus mode.
\begin{lemma}[Fast average recursion]
\label{thm:convergence_rate_avg_error_main}
Assume \Cref{assum:exp_stability} and \Cref{assum:noise-level-flsa}.  Let $
\step\in\left(0,\min\left\{\step_\infty,
\frac{a}{16\bConst{A}^2}\right\}\right)$, and $0<\mixingparam\leq1$, for every
$t\geq 0$, we have
\begin{align}
\nonumber
\textstyle
\PE\left[
\|\avgprojerror{t+1}\|_2^2\mid\filtr^t
\right]
\leq
(1-\step a)\|\avgprojerror{t}\|_2^2
+
\left(
2\step^2\bConst{A}^2+
\frac{4\step\hgty^2}{a}
\right)\normdisagreement{t}
+\frac{2\step^2}{\nagent}\avgnoise^2.
\end{align}
\end{lemma}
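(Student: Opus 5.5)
The plan is a one-step conditional second-moment computation on the averaged error. I split $\avgprojerror{t+1}$ into a part that is deterministic given $\filtr^t$ and a zero-mean noise part, so that the cross term vanishes. I write $\bA = \frac{1}{\nagent}\sum_{c}\bA[c]$ for the average matrix, as in the definition of $\hgty$.

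\emph{Step 1: reduce to a recursion in the errors.} Since $\bA[c]\thetas_c=\barb[c]$, we have $\nfuncA[c]{z}\theta_c^t-\nfuncb[c]{z}=\nfuncA[c]{z}\projerror{c}{t}+\funcnoise{z}[c]$. Averaging the \perlsa\ update over $c$ makes the weights $\mixingparam$ and $1-\mixingparam$ add up, so $\mixingparam$ disappears:
\begin{align}
\avgprojerror{t+1}=\avgprojerror{t}-\frac{\step}{\nagent}\sum_{c=1}^{\nagent}\bigl(\nfuncA[c]{\State_c^{t+1}}\projerror{c}{t}+\funcnoise{\State_c^{t+1}}[c]\bigr)\eqsp.
\end{align}
I substitute $\nfuncA[c]{\cdot}=\bA[c]+\zmfuncA[c]{\cdot}$ and $\projerror{c}{t}=\avgprojerror{t}+\disagreement{c}{t}$, and use $\sum_c\disagreement{c}{t}=0$. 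This gives $\avgprojerror{t+1}=D^t+R^t$, with
\begin{align}
D^t=(\Id-\step\bA)\avgprojerror{t}-\frac{\step}{\nagent}\sum_{c}(\bA[c]-\bA)\disagreement{c}{t}\eqsp,\qquad R^t=-\frac{\step}{\nagent}\sum_{c}\bigl(\zmfuncA[c]{\State_c^{t+1}}\projerror{c}{t}+\funcnoise{\State_c^{t+1}}[c]\bigr)\eqsp.
\end{align}
Here $D^t$ is $\filtr^t$-measurable and $R^t$ has zero conditional mean, so $\PE[\|\avgprojerror{t+1}\|^2\mid\filtr^t]=\|D^t\|^2+\PE[\|R^t\|^2\mid\filtr^t]$.

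\emph{Step 2: bound the noise term.} Under \Cref{assum:noise-level-flsa} the summands of $R^t$ are conditionally independent across agents and have zero mean, so the cross terms vanish. Using $\|x+y\|^2\le2\|x\|^2+2\|y\|^2$ and $\normop{\zmfuncA[c]{z}}\le\bConst{A}$ gives
\begin{align}
\PE[\|R^t\|^2\mid\filtr^t]\le\frac{2\step^2}{\nagent^2}\sum_c\bigl(\bConst{A}^2\|\projerror{c}{t}\|^2+\trace{\noisecov_c}\bigr)\eqsp.
\end{align}
Since $\frac1\nagent\sum_c\|\projerror{c}{t}\|^2=\|\avgprojerror{t}\|^2+\normdisagreement{t}$, this is at most
\begin{align}
\frac{2\step^2\bConst{A}^2}{\nagent}\bigl(\|\avgprojerror{t}\|^2+\normdisagreement{t}\bigr)+\frac{2\step^2}{\nagent}\avgnoise^2\eqsp,
\end{align}
which is bounded by $2\step^2\bConst{A}^2(\|\avgprojerror{t}\|^2+\normdisagreement{t})+\frac{2\step^2}{\nagent}\avgnoise^2$.

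\emph{Step 3: bound the drift term.} First, $\normop{\Id-\step\bA}\le1-\step a$. Indeed, \Cref{assum:exp_stability} and Jensen's inequality give $\|(\Id-\step\bA[c])u\|\le(1-\step a)\|u\|$ for each $c$, and averaging over $c$ with the triangle inequality gives the claim for $\bA$. Second, by Jensen, $\|\frac1\nagent\sum_c(\bA[c]-\bA)\disagreement{c}{t}\|^2\le\hgty^2\normdisagreement{t}$. With Young's inequality at parameter $\gamma=\step a/2$, and writing $x=\step a\le1/2$,
\begin{align}
\|D^t\|^2\le(1+x/2)(1-x)^2\|\avgprojerror{t}\|^2+\bigl(1+\tfrac{2}{\step a}\bigr)\step^2\hgty^2\normdisagreement{t}\eqsp.
\end{align}

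\emph{Step 4: collect the coefficients.} This bookkeeping is the only delicate point: the coefficient of $\|\avgprojerror{t}\|^2$ must come out as exactly $1-\step a$. Expanding, $(1+x/2)(1-x)^2=1-\tfrac32x+\tfrac12x^3\le1-\tfrac54x$ for $x\le1/2$. The extra $2\step^2\bConst{A}^2$ from Step 2 is at most $\step a/8$ because $\step\le a/(16\bConst{A}^2)$. The total coefficient is therefore at most $1-\step a$.

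For $\normdisagreement{t}$, the drift contributes $\bigl(\step^2+\tfrac{2\step}{a}\bigr)\hgty^2$. Since $\step a\le1/2$ gives $\step^2\le\step/(2a)$, this is at most $\tfrac{5\step\hgty^2}{2a}\le\tfrac{4\step\hgty^2}{a}$. Together with the $2\step^2\bConst{A}^2\normdisagreement{t}$ from the noise, this gives exactly the stated bound.
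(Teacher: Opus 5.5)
Your proof is correct and follows essentially the same route as the paper. The shared ingredients are the averaged recursion in which $\mixingparam$ cancels, the split $\frac1{\nagent}\sum_c \bA[c]\projerror{c}{t}=\bA\avgprojerror{t}+\frac1{\nagent}\sum_c(\bA[c]-\bA)\disagreement{c}{t}$ via $\sum_c\disagreement{c}{t}=0$, the contraction of $\Id-\step\bA$ deduced from \Cref{assum:exp_stability} by Jensen, a Young step with weight of order $\step a$, cross-agent independence for the noise, and $\step\le a/(16\bConst{A}^2)$ to absorb $2\step^2\bConst{A}^2$. The only difference is bookkeeping. The paper expands the square into a linear drift term plus an uncentered second moment, bounds the latter by $2\bConst{A}^2\frac1{\nagent}\sum_c\|\projerror{c}{t}\|^2+\frac{2}{\nagent}\avgnoise^2$, and applies Young to the scalar cross term $2\step\hgty\|\avgprojerror{t}\|\sqrt{\normdisagreement{t}}$. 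Your mean/fluctuation split instead applies Young to the squared conditional mean and bounds only the centered fluctuation, which leaves an extra factor $1/\nagent$ on the $\bConst{A}^2$ term that you then discard.
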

The first term in Lemma \ref{thm:convergence_rate_avg_error_main}, $
(1-\step a)^t \|\avgprojerror{0}\|_2^2,
$
corresponds to the consensus dynamics and contracts at the same rate as the initial error-forgetting term in single-agent LSA. 
Moreover, the variance term is of order $\step \avgnoise^2/(a\nagent)$, showing a linear variance reduction in the number of agents. The remaining term $(
2\step^2\bConst{A}^2+
\frac{4\step\hgty^2}{a})\normdisagreement{t}$ is the price paid for personalization and heterogeneity. 
This term measures how much the consensus is perturbed by disagreement and by heterogeneity of the local operators. 
By combining this recursion with \Cref{cor:convergence_rate_lyapunov_main}, we derive the following bound on the average MSE.
\begin{theorem}[Unified all-heterogeneity bound]
\label{cor:convergence_rate_individual_error_simplified_main} Assume \Cref{assum:exp_stability} and \Cref{assum:noise-level-flsa}.  Let $
\step\in\left(0,\min\left\{\step_\infty,
\frac{a}{16\bConst{A}^2}\right\}\right)$ and $0<\mixingparam\leq1$. For any
 $\lambda>0$ such that
$(1-\lambda)\hgty
\leq \frac{a\sqrt{\lambda}}2,$ and $
\frac{\step\bConst{A}^2}{a}
\frac{\max\{\mixingparam,\lambda\}}{\lambda}
\leq\frac16$, for every $t\geq0$,
\begin{align}
\textstyle
\nonumber
\frac1{\nagent}\sum\limits_{c=1}^{\nagent}
\PE\|\projerror{c}{t}\|_2^2
\lesssim
(1-\step a)^t
\|\avgprojerror{0}\|_2^2 \!+\!
[
1+
\frac{\hgty^2}{a^2}
]
(
1-\frac{\step\mixingparam a}{2}
)^t
(
\normdisagreement{0}
+\mixingparam\lambda
\|\avgprojerror{0}\|_2^2
)
+
\frac{2\step}{a}
(
1+
\frac{\hgty^2}{a^2}
)
[
\mixingparam
+\frac{1}{\nagent}
]
\avgnoise^2.
\end{align}
\end{theorem}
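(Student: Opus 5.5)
The plan is to bound the two components of the decomposition separately and add them. Since $\projerror{c}{t} = \avgprojerror{t} + \disagreement{c}{t}$ and $\sum_c \disagreement{c}{t} = 0$, the cross term vanishes:
\[
\frac1\nagent\sum_c \PE\|\projerror{c}{t}\|_2^2 = \PE\|\avgprojerror{t}\|_2^2 + \PE[\normdisagreement{t}].
\]
The disagreement part comes directly from \Cref{cor:convergence_rate_lyapunov_main}. Indeed $\Lyapunov[t]_\lambda = \PE\|\avgprojerror{t}\|^2 + \mixingparam\lambda \PE[\normdisagreement{t}]$, so dropping the consensus term gives
\[
\mixingparam\lambda\,\PE[\normdisagreement{t}] \le (1-\step\mixingparam a)^t \Lyapunov[0]_\lambda + \tfrac{2\step}{a}\bigl[\mixingparam(1-\tfrac1\nagent)+\tfrac{\lambda}{\nagent}\bigr]\avgnoise^2 .
\]
The difficulty is that dividing by $\mixingparam\lambda$ would inflate the initialization term. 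So I would use the Lyapunov bound for $\normdisagreement{t}$ only where it is multiplied by a small prefactor, namely inside the consensus recursion. For the disagreement itself I would run a direct recursion on $\normdisagreement{t}$, which should contract at rate $1-\step\mixingparam a$. The local update acts on $\disagreement{c}{t}$ with weight $\step\mixingparam$, and the coupling to $\avgprojerror{t}$ through $\bA[c]-\bA$ is of size $\hgty$. I expect this direct recursion to be derivable by the same computations that underlie \Cref{cor:convergence_rate_lyapunov_main}.

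For the consensus part, I would unroll \Cref{thm:convergence_rate_avg_error_main} after taking total expectations:
\[
\PE\|\avgprojerror{t}\|^2 \le (1-\step a)^t\|\avgprojerror{0}\|^2 + \Bigl(2\step^2\bConst{A}^2+\tfrac{4\step\hgty^2}{a}\Bigr)\sum_{s<t}(1-\step a)^{t-1-s}\PE[\normdisagreement{s}] + \tfrac{2\step}{a\nagent}\avgnoise^2 .
\]
I would then plug in a bound of the form $\PE[\normdisagreement{s}] \le C(1-\step\mixingparam a/2)^s(\normdisagreement{0}+\mixingparam\lambda\|\avgprojerror{0}\|^2) + (\text{noise})$. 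The convolution of the geometric sequences $(1-\step a)$ and $(1-\step\mixingparam a/2)$ is handled by factoring out $(1-\step\mixingparam a/2)^{t-1}$. What remains is $\sum_s \bigl((1-\step a)/(1-\step\mixingparam a/2)\bigr)^{t-1-s}$, and since $\mixingparam\le1$ this ratio is at most $1-\step a/2$, so the sum is at most $2/(\step a)$. Multiplying by the prefactor $\step^2\bConst{A}^2+\step\hgty^2/a$ then gives $O(\step\bConst{A}^2/a + \hgty^2/a^2)$. The step-size condition $\step\le a/(16\bConst{A}^2)$ makes the first piece $O(1)$, which yields exactly the factor $[1+\hgty^2/a^2]$ in front of the slow mode. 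The halving of the rate to $\step\mixingparam a/2$ is the slack consumed in this convolution.

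The noise terms are treated the same way. The stationary disagreement noise, of order $\tfrac{\step}{a}(\mixingparam+1/\nagent)\avgnoise^2$ from the Lyapunov/direct bound, is multiplied by $\sum_s(1-\step a)^{t-1-s}\le 1/(\step a)$ and the prefactor, which again gives the factor $(1+\hgty^2/a^2)$. This combines with the $\tfrac{2\step}{a\nagent}\avgnoise^2$ consensus variance into $\tfrac{2\step}{a}(1+\hgty^2/a^2)[\mixingparam+1/\nagent]\avgnoise^2$.

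The main obstacle is getting the disagreement estimate with noise level $(\mixingparam+1/\nagent)$ and initialization $\normdisagreement{0}+\mixingparam\lambda\|\avgprojerror{0}\|^2$, without the $1/(\mixingparam\lambda)$ loss that a naive division of \Cref{cor:convergence_rate_lyapunov_main} would cause. My first attempt would be to write the one-step recursion $\PE[\normdisagreement{t+1}|\filtr^t]\le(1-\step\mixingparam a)\normdisagreement{t} + \step\hgty^2 a^{-1}\mixingparam^{-1}\cdot(\text{coupling})\|\avgprojerror{t}\|^2+\step^2\mixingparam\avgnoise^2$-type terms. I would then absorb the coupling using the consensus recursion via the joint Lyapunov weight $\lambda$, which is precisely what the condition $(1-\lambda)\hgty\le a\sqrt\lambda/2$ is designed to allow.
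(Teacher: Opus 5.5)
The way you assemble the final bound is the paper's. You split the MSE into $\mathbb{E}\|\bar e^t\|^2+\mathbb{E}[D^t]$ and unroll the fast average recursion (\Cref{thm:convergence_rate_avg_error_main}) with a slow-mode bound on $\mathbb{E}[D^s]$ plugged in. You handle the two-clock convolution by relaxing the slow rate to $\eta\beta a/2$, and then collect the factors $1+\zeta^2/a^2$ and $\beta+1/N$.

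\textbf{The gap.} Everything rests on the estimate
\[
\mathbb{E}[D^t]\lesssim\Bigl(1-\frac{\eta\beta a}{2}\Bigr)^t\bigl(D^0+\beta\lambda\|\bar e^0\|^2\bigr)+\frac{\eta}{a}\Bigl(\beta+\frac1N\Bigr)\bar\sigma_\varepsilon^2 ,
\]
which you need both for the $D^t$ term and as the input of the convolution. Neither of your two routes produces it.
\begin{itemize}
\item Your reading of \Cref{cor:convergence_rate_lyapunov_main}, divided by $\beta\lambda$, leaves $\|\bar e^0\|^2/(\beta\lambda)$ on the slow mode and noise $\frac{\eta}{a}[\frac1\lambda+\frac1{\beta N}]\bar\sigma_\varepsilon^2$. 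The consensus prefactor is only $O(\eta\mathrm{C}_{\mathbf A}^2/a+\zeta^2/a^2)$, which does not compensate a factor $1/(\beta\lambda)$; for example, with $\beta=1/N$ the noise term is no longer $O(\eta\bar\sigma_\varepsilon^2/(aN))$.
\item The ``direct recursion'' is not carried out, and the way you propose to close it fails under large heterogeneity. Suppose the heterogeneity cross-terms are bounded separately by $\zeta\|\bar e^t\|\sqrt{D^t}$ in each recursion, as in the fast average lemma and in a standalone recursion for $D^t$. (The natural coupling there is $\eta\beta\zeta^2a^{-1}\|\bar e^t\|^2$, not $\eta\zeta^2a^{-1}\beta^{-1}\|\bar e^t\|^2$.) Then the drift of $D^t+\beta\lambda\|\bar e^t\|^2$ is only bounded by $-2\eta\beta\bigl[aD^t+\lambda a\|\bar e^t\|^2-(1+\lambda)\zeta\|\bar e^t\|\sqrt{D^t}\bigr]$. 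You get $(1+\lambda)\zeta$ where the theorem has $(1-\lambda)\zeta$. Since $(1+\lambda)^2\ge4\lambda$, this bound is negative definite only if $\zeta<a$, for every $\lambda$ and indeed for any weight. So the high-heterogeneity regime, which an all-heterogeneity bound must cover, is lost.
\end{itemize}

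\textbf{The missing idea.} Expand the conditional drift of the combined quantity $D^t+\beta\lambda\|\bar e^t\|^2$ exactly, before bounding any cross-term, and use
\[
\Bigl\langle\bar e^t,\frac1N\sum_{c}\bar{\mathbf A}_ce_c^t\Bigr\rangle=\frac1N\sum_{c}\langle e_c^t,\bar{\mathbf A}_ce_c^t\rangle-\frac1N\sum_{c}\langle d_c^t,\bar{\mathbf A}_ce_c^t\rangle .
\]
The drift then becomes $-2\eta\beta\bigl[\lambda\frac1N\sum_c\langle e_c^t,\bar{\mathbf A}_ce_c^t\rangle+(1-\lambda)\frac1N\sum_c\langle d_c^t,\bar{\mathbf A}_ce_c^t\rangle\bigr]$.
\begin{itemize}
\item The first sum is at least $a(\|\bar e^t\|^2+D^t)$ by positivity of each $\bar{\mathbf A}_c$ separately, whatever $\zeta$ is.
\item Only the second sum carries the cross-term, now with the factor $(1-\lambda)$. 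This is exactly where $(1-\lambda)\zeta\le a\sqrt\lambda/2$ is used, and at $\lambda=1$ the cross-term vanishes.
\end{itemize}

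This is the paper's proof of the Lyapunov theorem (\Cref{lem:unified-weighted-energy}). The function analyzed there is $D^t+\beta\lambda\|\bar e^t\|^2$, with the weight $\beta\lambda$ on the consensus, not on the disagreement as the main-text display of $\Phi^t_\lambda$ suggests. Only this reading matches that theorem's noise level $\frac{2\eta}{a}[\beta(1-\frac1N)+\frac\lambda N]\bar\sigma_\varepsilon^2$ and the initialization $D^0+\beta\lambda\|\bar e^0\|^2$ in the statement you are proving.

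Dropping the nonnegative consensus part gives $\mathbb{E}[D^t]\le(1-\eta\beta a)^t(D^0+\beta\lambda\|\bar e^0\|^2)+\frac{2\eta}{a}[\beta(1-\frac1N)+\frac\lambda N]\bar\sigma_\varepsilon^2$, with no $1/(\beta\lambda)$ loss. This bound does not involve the consensus recursion, so feeding it into the fast average recursion is a triangular system with no constraint on $\zeta$. The rest of your plan then goes through as written.
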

Theorem~\ref{cor:convergence_rate_individual_error_simplified_main} makes the two-clock structure of \perlsa\, explicit: a fast consensus clock $(1-\step a)^t$ and a slower personalization clock $(1-\step\mixingparam a/2)^t$. When heterogeneity is large, the slow mode dominates and \perlsa\, recovers the single-agent LSA rate; when heterogeneity is small, the slow mode becomes negligible and the variance reduction $(\mixingparam+1/\nagent)$ enables a linear speedup while preserving personalization. To characterize more precisely the convergence guarantees in different heterogeneity regimes, we instantiate the bound with the following explicit admissible choice of $\lambda$:
$
\lambda
=
\min\left\{
1,\;
\max\left\{
4\frac{\hgty^2}{a^2},
\;
6\frac{\step\mixingparam\bConst{A}^2}{a}
\right\}
\right\},$
which allows us to characterize more precisely when collaboration is beneficial.
\paragraph{Low-Heterogeneity Regime.}
We first consider the regime in which the local operators are sufficiently similar, that is
$\hgty^2<a^2/4$. In this case, the admissible choice of $\lambda$ introduced above does not saturate at $1$. Instantiating Theorem~\ref{cor:convergence_rate_individual_error_simplified_main} with this choice of $\lambda$ yields the following simplified bound.
\begin{corollary}[Low-heterogeneity regime]
\label{cor:unified-low-heterogeneity-main}
Assume \Cref{assum:exp_stability}, \Cref{assum:noise-level-flsa} and assume that
$\hgty^2<a^2/4$.  Let $
\step\in\left(0,\min\left\{\step_\infty,
\frac{a}{16\bConst{A}^2}\right\}\right)$, 
Then, for every $0<\mixingparam\leq1$ and every $t\geq0$,
\begin{align}
\nonumber
\textstyle
\frac1{\nagent}\sum\limits_{c=1}^{\nagent}
\PE\|\projerror{c}{t}\|_2^2
& \textstyle\lesssim
(1-\step a)^t\|\avgprojerror{0}\|_2^2
+
(
1-\frac{\step\mixingparam a}{2}
)^t
[
\normdisagreement{0}
+
\mixingparam\frac{\hgty^2}{a^2}
\|\avgprojerror{0}\|_2^2
] +
\frac{\step}{a}
(
\mixingparam
+\frac1{\nagent}
)
\avgnoise^2
+
\frac{\step\mixingparam^2\bConst{A}^2}{a}
\|\avgprojerror{0}\|_2^2.
\end{align}
\end{corollary}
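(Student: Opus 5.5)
The plan is to obtain the corollary by instantiating Theorem~\ref{cor:convergence_rate_individual_error_simplified_main} with
\[
\lambda=\min\Bigl\{1,\max\bigl\{4\tfrac{\hgty^2}{a^2},\,6\tfrac{\step\mixingparam\bConst{A}^2}{a}\bigr\}\Bigr\},
\]
and then simplifying each term under $\hgty^2<a^2/4$.

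\textbf{Admissibility of $\lambda$.} First I check that this $\lambda$ satisfies both conditions of the theorem.

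For the condition $(1-\lambda)\hgty\le a\sqrt\lambda/2$, note that $\lambda\ge 4\hgty^2/a^2$ gives $\hgty\le a\sqrt\lambda/2$. Since $1-\lambda\le 1$, the condition follows; if $\lambda=1$ it holds trivially.

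For the second condition, $\tfrac{\step\bConst{A}^2}{a}\tfrac{\max\{\mixingparam,\lambda\}}{\lambda}\le \tfrac16$, split into two cases:
\begin{itemize}
\item If $\lambda\ge\mixingparam$, the left side equals $\step\bConst{A}^2/a$. This is at most $1/16\le 1/6$ by the step-size constraint $\step\le a/(16\bConst{A}^2)$.
\item If $\lambda<\mixingparam$, the left side is $\step\bConst{A}^2\mixingparam/(a\lambda)$. This is at most $1/6$ as long as $\lambda\ge 6\step\mixingparam\bConst{A}^2/a$.
\end{itemize}
The cap at $1$ cannot break the second case. Indeed, $6\step\mixingparam\bConst{A}^2/a\le 6\mixingparam/16<1$, and $4\hgty^2/a^2<1$ in this regime. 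So $\lambda<1$ and the minimum is never active.

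\textbf{Simplifying the constants.} Since $\hgty^2/a^2<1/4$, the factor $1+\hgty^2/a^2$ is bounded by an absolute constant and can be absorbed into $\lesssim$. The variance term then becomes $\tfrac{\step}{a}(\mixingparam+1/\nagent)\avgnoise^2$, up to constants.

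\textbf{The initial-disagreement term.} I bound $\mixingparam\lambda\|\avgprojerror{0}\|^2$ using $\max\{x,y\}\le x+y$:
\[
\mixingparam\lambda\|\avgprojerror{0}\|^2\le 4\mixingparam\tfrac{\hgty^2}{a^2}\|\avgprojerror{0}\|^2+6\tfrac{\step\mixingparam^2\bConst{A}^2}{a}\|\avgprojerror{0}\|^2 .
\]
The first piece stays multiplied by the slow factor $(1-\step\mixingparam a/2)^t$, giving the bracket in the statement. For the second piece, I use $(1-\step\mixingparam a/2)^t\le 1$ and obtain the standalone term $\tfrac{\step\mixingparam^2\bConst{A}^2}{a}\|\avgprojerror{0}\|^2$. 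The fast term $(1-\step a)^t\|\avgprojerror{0}\|^2$ is carried over unchanged.

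\textbf{Expected obstacle.} The only delicate step is the admissibility check, specifically confirming that $\lambda<1$ so that the $\min$ never saturates. This is exactly where the hypothesis $\hgty^2<a^2/4$ and the step-size bound $\step\le a/(16\bConst{A}^2)$ are used. Everything else is bookkeeping of absolute constants inside $\lesssim$.
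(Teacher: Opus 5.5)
Your proposal is correct and follows essentially the same route as the paper: instantiate the unified bound with the explicit $\lambda$, observe that $\hgty^2<a^2/4$ and $\step\le a/(16\bConst{A}^2)$ force $\lambda<1$, bound $\lambda$ by the sum of the two candidates, and split off the $\step\mixingparam^2\bConst{A}^2\|\avgprojerror{0}\|_2^2/a$ piece using $(1-\step\mixingparam a/2)^t\le 1$. Your admissibility check is the same case analysis as the paper's lemma on the choice of $\lambda$, and your absorption of $1+\hgty^2/a^2$ into an absolute constant matches the paper's explicit prefactor bounds.
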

The resulting bound highlights the favorable effect of low heterogeneity. The fast consensus component still contracts at rate $(1-\step a)^t$, while the amplitude of the slow personalization component is controlled by $
\normdisagreement{0}
+
\mixingparam\frac{\hgty^2}{a^2}
\|\avgprojerror{0}\|_2^2.$ 
Hence, when the local operators are sufficiently similar and the discrepancy between the agents' optimal solutions is negligible compared with the initial consensus error, \perlsa\, can simultaneously achieve personalization and linear speedup. We formalize this favorable regime in the following result.

\begin{corollary}[Sample complexity in the low-heterogeneity regime]
\label{cor:low-heterogeneity-sample-complexity-main}
Assume \Cref{assum:exp_stability}, \Cref{assum:noise-level-flsa} and assume that
$\hgty^2<a^2/4$.
Let $\epsilon>0$ and choose 
$\mixingparam=1/\nagent$ and the step size such that $
\step
\leq
\min\left\{
\step_\infty,\;
\frac{a}{16\bConst{A}^2},\;
\frac{a\nagent\epsilon}{144\avgnoise^2},\;
\frac{a\nagent^2\epsilon}
{96\bConst{A}^2\|\avgprojerror{0}\|_2^2}
\right\} $.
Then \perlsa\, satisfies
$\frac1{\nagent}\sum_{c=1}^{\nagent}
\PE\|\projerror{c}{T}\|_2^2
\leq\epsilon$ provided
\begin{align}
\nonumber
\textstyle
T
\gtrsim
\frac1a
\max\left\{
\frac1{\step_\infty},
\frac{\bConst{A}^2}{a},
\frac{\avgnoise^2}{a\nagent\epsilon},
\frac{\bConst{A}^2\|\avgprojerror{0}\|_2^2}
{a\nagent^2\epsilon}
\right\}
\max\left\{
\log\left(
\frac{\|\avgprojerror{0}\|_2^2}{\epsilon}
\right),
\;
\nagent
\log\left(
\left(
\normdisagreement{0}
+
\frac{\hgty^2}{a^2\nagent}
\|\avgprojerror{0}\|_2^2
\right)
/\epsilon
\right)
\right\}.
\end{align}
\end{corollary}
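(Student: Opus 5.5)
The plan is to plug $\mixingparam = 1/\nagent$ into \Cref{cor:unified-low-heterogeneity-main} and then make each of the four resulting terms at most $\epsilon/4$. Up to the absolute constant hidden in $\lesssim$ (absorbed into the numerical constants 144 and 96), the bound at time $T$ reads
\begin{align}
\nonumber
(1-\step a)^T\|\avgprojerror{0}\|_2^2
+ \bigl(1-\tfrac{\step a}{2\nagent}\bigr)^T
\Bigl[\normdisagreement{0} + \tfrac{\hgty^2}{a^2\nagent}\|\avgprojerror{0}\|_2^2\Bigr]
+ \tfrac{2\step}{a\nagent}\avgnoise^2
+ \tfrac{\step\bConst{A}^2}{a\nagent^2}\|\avgprojerror{0}\|_2^2 .
\end{align}
We handle the two stationary (bias/variance) terms through the step size, and the two transient terms through $T$.

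For the stationary terms, $\step \le a\nagent\epsilon/(144\avgnoise^2)$ gives $2\step\avgnoise^2/(a\nagent) \le \epsilon/72$. Likewise, $\step \le a\nagent^2\epsilon/(96\bConst{A}^2\|\avgprojerror{0}\|_2^2)$ gives $\step\bConst{A}^2\|\avgprojerror{0}\|_2^2/(a\nagent^2)\le \epsilon/96$. After accounting for the hidden constant, both are at most $\epsilon/4$. The remaining step-size constraints $\step \le \min\{\step_\infty, a/(16\bConst{A}^2)\}$ are exactly those needed to invoke \Cref{cor:unified-low-heterogeneity-main}. The assumption $\hgty^2<a^2/4$ is used only through that corollary.

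For the transient terms, we use $(1-x)^T\le e^{-xT}$. The fast term is below $\epsilon/4$ once $T \ge (\step a)^{-1}\log(4\|\avgprojerror{0}\|_2^2/\epsilon)$. The slow term is below $\epsilon/4$ once $T \ge 2\nagent(\step a)^{-1}\log\bigl(4(\normdisagreement{0}+\hgty^2\|\avgprojerror{0}\|_2^2/(a^2\nagent))/\epsilon\bigr)$. Hence it suffices to have $T \gtrsim (\step a)^{-1}\max\{\log(\cdot),\nagent\log(\cdot)\}$.

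Finally, we take $\step$ equal to (a constant fraction of) the minimum of its four constraints. Then $1/\step$ is at most a constant times
\begin{align}
\nonumber
\max\Bigl\{\tfrac{1}{\step_\infty}, \tfrac{\bConst{A}^2}{a}, \tfrac{\avgnoise^2}{a\nagent\epsilon}, \tfrac{\bConst{A}^2\|\avgprojerror{0}\|_2^2}{a\nagent^2\epsilon}\Bigr\},
\end{align}
which yields the displayed sample complexity. The main obstacle is bookkeeping rather than substance. We must check that the implicit constant in \Cref{cor:unified-low-heterogeneity-main} is compatible with the stated numerical constants. We must also treat the degenerate case where a logarithm is negative (then that requirement is vacuous). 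Both are handled by adjusting absolute constants inside $\gtrsim$.
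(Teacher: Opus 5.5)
Your proposal follows the paper's proof step for step: set $\mixingparam = 1/\nagent$ in the low-heterogeneity bound, make each of the four terms at most $\epsilon/4$ (the two stationary ones through the step-size constraints, the two transient ones through $(1-x)^T \le e^{-xT}$), and then bound $1/\step$ by the maximum of the reciprocal constraints. The one point to tighten is the constants. You should invoke the explicit version \Cref{cor:unified-low-heterogeneity}, whose stationary terms at $\mixingparam = 1/\nagent$ are $36\step\avgnoise^2/(a\nagent)$ and $24\step\bConst{A}^2\|\avgprojerror{0}\|_2^2/(a\nagent^2)$; this is exactly why $144$ and $96$ give $\epsilon/4$ each, and it cannot be left to constants ``inside $\gtrsim$'' as you suggest, because these terms do not depend on $T$ and the constants $144, 96$ are fixed in the hypothesis.
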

This corollary shows that, when the heterogeneity level $\hgty$ is small and the initial disagreement $\normdisagreement{0}$, which coincides with $\Delta_{\star}$ when all agents have the same initialization and quantifies the average dispersion of the local optimal solutions around their mean, is negligible compared with the initial consensus error, that is when $\nagent
\log\left(
\left(
\normdisagreement{0}
+
\frac{\hgty^2}{a^2\nagent}
\|\avgprojerror{0}\|_2^2
\right)
/\epsilon
\right) \leq \log\left(
\frac{\|\avgprojerror{0}\|_2^2}{\epsilon}
\right)$, \perlsa, achieves a sample complexity of order $\widetilde{\mathcal{O}}(1/(\nagent\epsilon))$.This highlights a key property of \perlsa: \emph{when the local problems are sufficiently similar relative to the common initial error, the algorithm simultaneously achieves personalization and a linear speedup in the number of agents}. Intuitively, in this regime, the agents' updates remain sufficiently aligned for collaboration to reduce the effective variance, while the local component of the algorithm still drives each agent toward its own individualized solution. Importantly, this behavior emerges without any tuning of the mixing parameter and arises automatically whenever the problem lies in this favorable regime.

\paragraph{High-Heterogeneity Regime.}

We now consider the complementary regime in which the local operators are significantly different. In this case, the admissible choice of $\lambda$ introduced above saturates at
$\lambda=1$. Instantiating Theorem~\ref{cor:convergence_rate_individual_error_simplified_main} with $\lambda=1$ yields the following result.
\begin{corollary}[High-heterogeneity regime]
\label{cor:unified-high-heterogeneity-main}
Assume \Cref{assum:exp_stability}, \Cref{assum:noise-level-flsa} and assume that
$\hgty^2 \geq a^2/4$. Let $
\step\in\left(0,\min\left\{\step_\infty,
\frac{a}{16\bConst{A}^2}\right\}\right)$.
Then, for every $0<\mixingparam\leq1$ and every $t\geq0$, it holds that
\begin{align}
\nonumber
\textstyle
\frac1{\nagent}\sum_{c=1}^{\nagent}
\PE\|\projerror{c}{t}\|_2^2
\lesssim
(1-\step a)^t\|\avgprojerror{0}\|_2^2
+
\frac{\hgty^2}{a^2}
\left(
1-\frac{\step\mixingparam a}{2}
\right)^t
\left(
\normdisagreement{0}
+\mixingparam\|\avgprojerror{0}\|_2^2
\right) +
\frac{\hgty^2}{a^2}
\cdot\frac{\step}{a}
\left[
\mixingparam
+
\frac{1}{\nagent}
\right]
\avgnoise^2.
\end{align}
\end{corollary}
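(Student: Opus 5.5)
The plan is to instantiate Theorem~\ref{cor:convergence_rate_individual_error_simplified_main} with $\lambda = 1$ and then absorb constants using $\hgty^2 \ge a^2/4$.

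First I check that $\lambda=1$ is admissible. The heterogeneity condition $(1-\lambda)\hgty \le a\sqrt{\lambda}/2$ becomes $0 \le a/2$, so it holds trivially. For the step-size condition, $\max\{\mixingparam,\lambda\}/\lambda = 1$ because $\mixingparam \le 1$. The requirement therefore reduces to $\step \bConst{A}^2/a \le 1/6$, and this follows from $\step < a/(16\bConst{A}^2)$.

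Plugging $\lambda=1$ into Theorem~\ref{cor:convergence_rate_individual_error_simplified_main} gives
\begin{align}
\nonumber
\frac1{\nagent}\sum_{c=1}^{\nagent}\PE\|\projerror{c}{t}\|_2^2
\lesssim (1-\step a)^t\|\avgprojerror{0}\|_2^2
+ \Bigl[1+\frac{\hgty^2}{a^2}\Bigr]\Bigl(1-\frac{\step\mixingparam a}{2}\Bigr)^t\bigl(\normdisagreement{0}+\mixingparam\|\avgprojerror{0}\|_2^2\bigr)
+ \frac{2\step}{a}\Bigl(1+\frac{\hgty^2}{a^2}\Bigr)\Bigl[\mixingparam+\frac1{\nagent}\Bigr]\avgnoise^2 .
\end{align}

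The only remaining step is absorbing constants. In this regime $\hgty^2/a^2 \ge 1/4$, so $1 \le 4\hgty^2/a^2$. This gives $1+\hgty^2/a^2 \le 5\hgty^2/a^2$. Substituting this into the second and third terms, and hiding the factor $5$ (and the $2$ in the variance term) inside $\lesssim$, yields exactly the claimed bound.

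There is no real obstacle here. The only points needing care are that $\lambda=1$ is a legitimate admissible value, which relies on $\mixingparam \le 1$ and on the step-size bound, and that the implicit constant in $\lesssim$ is absolute. The latter holds because the absorption used only the numerical factor $5$.
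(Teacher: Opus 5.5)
Your proposal is correct and matches the paper's proof: the paper also sets $\lambda=1$ (its explicit choice of $\lambda$ saturates at $1$ when $\hgty^2\ge a^2/4$), applies the unified all-heterogeneity bound, and absorbs constants using $\hgty^2/a^2\ge 1/4$. The only difference is cosmetic: the paper works from the explicit-constant appendix version of the unified bound and tracks the resulting numerical constants $16$, $24$, $32$, rather than hiding them in $\lesssim$.
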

The above bound makes explicit what changes in the highly heterogeneous regime. The fast consensus component still contracts at rate $(1-\step a)^t$. However, the slow personalization component,
$
\frac{\hgty^2}{a^2}
\left(
1-\frac{\step\mixingparam a}{2}
\right)^t
\left(
\normdisagreement{0}
+\mixingparam\|\avgprojerror{0}\|_2^2
\right),$
is no longer negligible. Thus, unlike in the low-heterogeneity regime, collaboration does not necessarily translate into a linear improvement of the overall convergence rate. Importantly, however, this slower dynamics is precisely what allows each agent to keep adapting toward its own individualized solution. Hence, \emph{even under arbitrarily large heterogeneity, \perlsa\, retains personalization.}
The corresponding sample-complexity guarantee is given next.
\begin{corollary}[Sample complexity in the high-heterogeneity regime]
\label{cor:high-heterogeneity-sample-complexity-main}
Assume \Cref{assum:exp_stability}, \Cref{assum:noise-level-flsa} and assume that
$\hgty^2 \geq a^2/4$.
Let $\epsilon>0$ and choose the step size such that
$
\step
\leq
\min\left\{
\step_\infty,\;
\frac{a}{16\bConst{A}^2},\;
\frac{a^3\nagent\epsilon}
{168\hgty^2\avgnoise^2}
\right\}$ and set $\mixingparam= 1/\nagent$. 
Then \perlsa\, achieves $
\frac1{\nagent}\sum_{c=1}^{\nagent}
\PE\|\projerror{c}{T}\|_2^2
\leq\epsilon$
provided
\begin{align}
\nonumber
\textstyle
T
\gtrsim
\frac1a
\max\left\{
\frac1{\step_\infty},
\frac{\bConst{A}^2}{a},
\frac{\hgty^2\avgnoise^2}
{a^3\nagent\epsilon}
\right\}
\max\left\{
\log\left(
\frac{\|\avgprojerror{0}\|_2^2}{\epsilon}
\right),
\nagent
\log\left(
\hgty^2
\left(
\normdisagreement{0}
+\nagent^{-1}\|\avgprojerror{0}\|_2^2
\right)
/
(a^2\epsilon)
\right)
\right\}.
\end{align}
\end{corollary}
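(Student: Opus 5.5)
The plan is to start from \Cref{cor:unified-high-heterogeneity-main} with $\mixingparam = 1/\nagent$. Since $\hgty^2 \geq a^2/4$, the three contributions on its right-hand side can be written as
\begin{align}
\mathrm{(I)} &= (1-\step a)^T \|\avgprojerror{0}\|_2^2 \eqsp, \\
\mathrm{(II)} &= \frac{\hgty^2}{a^2}\Bigl(1-\frac{\step a}{2\nagent}\Bigr)^T\bigl(\normdisagreement{0} + \nagent^{-1}\|\avgprojerror{0}\|_2^2\bigr) \eqsp, \\
\mathrm{(III)} &= \frac{\hgty^2}{a^2}\cdot\frac{2\step}{a\nagent}\avgnoise^2 \eqsp.
\end{align}
For (III) I use $\mixingparam + 1/\nagent = 2/\nagent$. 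The aim is to make each term at most $\epsilon/3$, keeping track of the absolute constant hidden in the $\lesssim$ of that corollary; it comes from \Cref{cor:convergence_rate_individual_error_simplified_main}.

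The noise term (III) is handled by the step size alone. The condition $\step \le a^3\nagent\epsilon/(168\hgty^2\avgnoise^2)$ makes (III) at most (constant)$\cdot\epsilon/84$. The number $168$ is presumably calibrated so that, once the hidden constant (of order $2\cdot 3\cdot$ small factors) is included, this is $\le \epsilon/3$. If my constants do not match exactly, I only adjust the numerical factor.

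Next, the transient terms. Using $(1-x)^T \le e^{-xT}$, term (I) is $\le \epsilon/3$ as soon as $T \ge (\step a)^{-1}\log(3\|\avgprojerror{0}\|_2^2/\epsilon)$. Term (II) is $\le \epsilon/3$ once
\begin{align}
T \ge \frac{2\nagent}{\step a}\log\Bigl(3\hgty^2\bigl(\normdisagreement{0}+\nagent^{-1}\|\avgprojerror{0}\|_2^2\bigr)/(a^2\epsilon)\Bigr) \eqsp.
\end{align}
Both conditions are implied by
\begin{align}
T \gtrsim \frac{1}{\step a}\max\Bigl\{\log(\cdot),\, \nagent\log(\cdot)\Bigr\} \eqsp.
\end{align}

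Finally, take $\step$ equal to the minimum of the three admissible upper bounds (up to a factor $1/2$, to stay strictly inside the open interval required). Then $1/\step \lesssim \max\{1/\step_\infty,\, \bConst{A}^2/a,\, \hgty^2\avgnoise^2/(a^3\nagent\epsilon)\}$, which gives exactly the claimed form of $T$.

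The only delicate point is verifying that \Cref{cor:unified-high-heterogeneity-main} indeed applies with $\mixingparam = 1/\nagent$, since its proof relies on the choice $\lambda = 1$. With $\lambda = 1$, the first condition $(1-\lambda)\hgty \le a\sqrt{\lambda}/2$ is trivial. The second, $\step\bConst{A}^2\max\{\mixingparam,1\}/a \le 1/6$, follows from $\step \le a/(16\bConst{A}^2)$. Everything else is bookkeeping of constants and logarithms.
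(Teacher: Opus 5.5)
Your proposal is correct and follows the paper's proof. You set $\mixingparam = 1/\nagent$ in the high-heterogeneity bound, check that $\lambda = 1$ is admissible, and make each of the three terms at most $\epsilon/3$: the noise term through the step-size condition and the two transients through $1-x\le e^{-x}$. You then read off $T$ with $\step$ at the minimum of its admissible bounds.

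On the constant you left open, the explicit (appendix) version of that bound has noise prefactor $\frac{\hgty^2}{a^2}\frac{\step}{a}[24\mixingparam + 32/\nagent]$. At $\mixingparam = 1/\nagent$ the coefficient is exactly $56$, and $168 = 3\cdot 56$ gives (III) $\le \epsilon/3$ with no adjustment, which is just as well, since $168$ is fixed in the hypothesis.
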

This result illustrates the other side of the best-of-both-worlds behavior. Setting $\mixingparam=1/\nagent$ still yields a $1/\nagent$ reduction in the stochastic variance term. However, the personalization dynamics now evolves on the slower time scale $
1-\frac{\step a}{2\nagent},$ 
which introduces a factor $\nagent$ in the transient complexity. Consequently, the variance reduction obtained through collaboration can be offset by the slower personalization clock. This behavior is natural: when agents are highly heterogeneous, one cannot expect collaboration to accelerate the learning process, since their local learning problems are substantially different. In such a regime, the best one can reasonably expect is to recover the statistical complexity of independent learning, that $\widetilde{\mathcal{O}}(1/\epsilon)$ which is the case for \perlsa. Thus, \emph{even when heterogeneity is large, \perlsa\, falls back to the statistical rate of individualized learning while still preserving convergence toward each agent's own solution}.

These results establish the desired best-of-both-worlds behavior of \perlsa: when the agents are sufficiently similar, collaboration yields linear speedup; when they are highly heterogeneous, this speedup  disappear, but personalization is preserved and the algorithm recovers the statistical behavior of single-agent learning, without requiring any prior knowledge of the heterogeneity level.

\section{Application to Personalized Federated TD learning}
\label{sec:perso-td}
\paragraph{Background on TD Learning.} In this section, we instantiate our results on Personalized Federated LSA for Personalized Federated Temporal Difference (TD) learning.
Consider $\nagent$ Markov Decision Processes $\{(\S, \A, \PP^{\text{MDP}}_c, r_{c}, \gamma)\}_{c \in [\nagent]}$ with shared state space $\S$, action space $\A$, and discounting factor \(\gamma \in (0,1)\).
Each agent $c \in [\nagent]$ has its own transition kernel $\PP^{\text{MDP}}_c$, where \(\PP^{\text{MDP}}_c(\cdot|s,a)\) specifies the  transition probability from state $s$ upon taking action $a$ for this specific agent, 
as well as its own reward function $r_{c} : \S \times \A \to [0,1]$. %

In Personalized Federated TD learning, each agent uses its own policy $\pi_c$ and aims to estimate its local value function.
The goal is to approximate $V_c^{\pi_c}$, defined as, for $s \in \S$ and $c \in [\nagent]$,
\begin{align}
\textstyle
V_{c}^{\pi_c}(s) = \PE\Big[\sum\nolimits_{k=0}^{\infty}\gamma^{k} r_{c}(S_{c}^k,A_{c}^k)\Big]
\eqsp,
\!\eqsp\text{ with }
S_{c}^0= s, 
A_{c}^k \sim \pi_{c}(\cdot | S_{c}^k), 
\text{ and }
S_c^k \sim \PP_c^{\text{MDP}}(\cdot| S_c^k,A_c^k)
\eqsp.
\end{align}
In the following, we aim to approximate $V_{c}^{\pi_c}(s)$ as a linear combination of features built using a mapping $\varphi: \S \to \rset^{d}$.
Formally, we look for $\theta \in \rset^d$ such that the function $\mathcal{\hat{V}}_{\theta}(s) = \varphi^\top(s) \theta$ properly estimates the true value.
For $c \in [\nagent]$, we denote by $\mu_c$  the invariant distribution over $\S$ induced by $\pi_c$ and the transition kernel $\PP^{\text{MDP}}_c$ of agent $c$. Our goal is to find a parameter $\thetas_{c}$ which is defined as the unique solution to the projected Bellman equation, see \cite{tsitsiklis:td:1997}, which defines the best linear approximation of $V_{c}^{\pi_c}$. This can be seen as a LSA problem \citep{patil2023finite,wang2023federated}, viewing the local optimum $\thetas_{c}$ as the solution of the system
$\bA[c] \thetas_{c} = \barb[c]$\eqsp, where
\begin{equation}
\label{eq:system_matrix}
\bA[c] = \PE_{s \sim \mu_c, s' \sim \PMDP_{c}^{\pi_c}(\cdot|s)} [\phi(s)\{\phi(s)-\gamma \phi(s')\}^{\top}]\eqsp, \quad \text{and} \quad
\barb[c] = \PE_{s \sim \mu_{c}, a \sim \pi_{c}(\cdot|s)}[\phi(s) r_c(s,a)]\eqsp.
\end{equation}
In practice, when computing value functions, the tuples $\{ (S_c^t, A_c^t, S_c^{t+1}) \}_{t\in \nset}$ are sampled i.i.d.
\newcounter{assumTDbak}
\setcounter{assumTDbak}{\value{assumTD}}
\begin{assumTD}
\label{assum:generative_model}
\textit{$(S_c^t,A_c^t,S_c^{t+1})$ are generated \iid\, with $S_c^t \sim \mu_{c}$, $A_c^t \sim \pi_c(\cdot|S_c^{t})$, $S_c^{t+1} \sim \PP_c^{\text{MDP}}(\cdot|S_c^{t},A_c^t)$}\eqsp.
\end{assumTD}
The generative model assumption \Cref{assum:generative_model} is common in TD learning~\citep{dalal:td0:2018, li2023sharp, patil2023finite,samsonov2023finite}.
We also require the following assumption on the feature design matrix $\covfeat_{c} = \PE_{\mu^c}[\varphi(S_c^0)\varphi(S_c^0)^{\top}] \in \mathbb{R}^{d \times d}$.
\begin{assumTD}
\label{assum:feature_design}
\textit{Matrices $\Sigma_c^{\varphi}$ are non-degenerate with the minimal eigenvalue $\minlambda = \min_{c \in [\nagent]}\lambda_{\min}(\Sigma_c^{\varphi}) > 0$. Moreover, the feature mapping $\varphi(\cdot)$ satisfies $\sup_{s \in \S} \|\varphi(s) \| \leq 1$}.
\end{assumTD}
This assumption ensures the uniqueness of the optimal parameter $\thetas_{c}$. Under \Cref{assum:generative_model} and \Cref{assum:feature_design} we check that the Personalized Federated LSA assumptions \Cref{assum:exp_stability}, and \Cref{assum:noise-level-flsa}  hold with the following values.
\begin{claim}
\label{prop:condition_check} %
Assume \Cref{assum:generative_model} and \Cref{assum:feature_design}. Then the sequence of TD(0) updates satisfies \Cref{assum:noise-level-flsa} and \Cref{assum:exp_stability} with
\begin{gather}
\bConst{A} \!= 1\!+\!\gamma \eqsp, \eqsp
\trace{\noisecov_c} \!\leq 2 (1\!+\!\gamma)^2 \left(\norm{\thetas_{c}}^2 \!+\! 1\right) \eqsp, \eqsp
a \!= \tfrac{(1-\gamma) \minlambda }{2} \eqsp, \eqsp
\step_{\infty} \!= \tfrac{(1-\gamma)}{4}
\eqsp.
\end{gather}
\end{claim}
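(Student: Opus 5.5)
We verify each constant directly from the explicit form of the TD(0) update. Write $Z_c^{k}=(S_c^k,A_c^k,S_c^{k+1})$ and set
\begin{align}
\nfuncA[c]{z} = \varphi(s)\{\varphi(s)-\gamma\varphi(s')\}^\top, \qquad \nfuncb[c]{z} = \varphi(s) r_c(s,a).
\end{align}
Under \Cref{assum:generative_model}, these give unbiased i.i.d. estimates of $\bA[c]$ and $\barb[c]$, which is the first part of \Cref{assum:noise-level-flsa}.

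\textbf{Bound on $\bConst{A}$.} Since $\sup_s\|\varphi(s)\|\le 1$ by \Cref{assum:feature_design}, the rank-one structure gives $\normop{\nfuncA[c]{z}}\le \|\varphi(s)\|\,\|\varphi(s)-\gamma\varphi(s')\|\le 1+\gamma$. By Jensen, $\normop{\bA[c]}\le 1+\gamma$, so a crude bound on the centred matrix would be $2(1+\gamma)$. To reach the stated constant $1+\gamma$ for $\zmfuncA[c]{z}$ as well, I would write $\zmfuncA[c]{z}=\nfuncA[c]{z}-\PE[\nfuncA[c]{Z}]$ and use the convention of \citet{patil2023finite,samsonov2023finite}. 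If that does not yield $1+\gamma$, I will accept a constant factor.

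\textbf{Noise trace.} We have $\funcnoise{z}[c]=\nfuncA[c]{z}\thetas_c-\nfuncb[c]{z}-(\bA[c]\thetas_c-\barb[c])$. The centred quantity has second moment at most that of the uncentred one, so
\begin{align}
\trace{\noisecov_c}\le \PE\|\nfuncA[c]{Z}\thetas_c-\nfuncb[c]{Z}\|^2\le 2(1+\gamma)^2\|\thetas_c\|^2+2\le 2(1+\gamma)^2(\|\thetas_c\|^2+1),
\end{align}
using $r_c\in[0,1]$ and $\|\varphi\|\le 1$. This is finite, which completes \Cref{assum:noise-level-flsa}.

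\textbf{Stability, the main step.} For \Cref{assum:exp_stability}, expand
\begin{align}
\PE\|(\Id-\step\nfuncA[c]{Z})u\|^2=\|u\|^2-2\step\, u^\top\bA[c]u+\step^2\PE\|\nfuncA[c]{Z}u\|^2.
\end{align}
For the linear term, the classical Tsitsiklis--Van Roy argument uses stationarity of $\mu_c$ and Cauchy--Schwarz:
\begin{align}
u^\top\bA[c]u=\PE[(\varphi(s)^\top u)^2]-\gamma\PE[\varphi(s)^\top u\,\varphi(s')^\top u]\ge(1-\gamma)\,u^\top\covfeat_c u\ge(1-\gamma)\minlambda\|u\|^2.
\end{align}
For the quadratic term, $\|\nfuncA[c]{z}u\|^2=\|\varphi(s)\|^2\{(\varphi(s)-\gamma\varphi(s'))^\top u\}^2$. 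Using $(x-\gamma y)^2\le(1+\gamma)(x^2+\gamma y^2)$ and stationarity again, $\PE\|\nfuncA[c]{Z}u\|^2\le(1+\gamma)^2\,u^\top\covfeat_c u$.

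Combining the two terms,
\begin{align}
\PE\|(\Id-\step\nfuncA[c]{Z})u\|^2\le\|u\|^2-\step\bigl(2(1-\gamma)-\step(1+\gamma)^2\bigr)u^\top\covfeat_c u.
\end{align}
For $\step\le\step_\infty=(1-\gamma)/4$, we have $\step(1+\gamma)^2\le 1-\gamma$, so the bracket is at least $1-\gamma$. The right-hand side is therefore at most $(1-\step(1-\gamma)\minlambda)\|u\|^2\le(1-\step a)^2\|u\|^2$ with $a=(1-\gamma)\minlambda/2$, since $1-2x\le(1-x)^2$.

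It remains to check $\step_\infty a\le 1/2$. This holds because $\minlambda\le 1$ (as $\trace{\covfeat_c}\le 1$), so $\step_\infty a\le (1-\gamma)^2/8\le 1/2$.

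The hard step is the pair of stationarity-based inequalities for the linear and quadratic terms. These rest on $s'$ having marginal law $\mu_c$ when $s\sim\mu_c$, which holds because $\mu_c$ is invariant under $\PP^{\pi_c}_c$.
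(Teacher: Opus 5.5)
\textbf{Verdict.} Your proposal is correct except for one component of $\bConst{A}$. That component is not closed, and it cannot be closed, because the claim is false there as literally stated.

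\textbf{What is correct.} Your verification of \Cref{assum:exp_stability} is right, and it is the standard argument. You use Cauchy--Schwarz plus invariance of $\mu_c$ to get $u^\top\bA[c]u\ge(1-\gamma)u^\top\covfeat_c u$. You bound $\PE\|\nfuncA[c]{Z}u\|^2\le(1+\gamma)^2u^\top\covfeat_c u$. You then conclude via $\step(1+\gamma)^2\le 1-\gamma$ and $1-2x\le(1-x)^2$, with $\minlambda\le 1$ giving $\step_\infty a\le 1/2$. The noise-trace bound and $\sup_z\|\nfuncA[c]{z}\|\le 1+\gamma$ are also correct. The paper gives no proof of its own and only cites \citet{mangold2024scafflsa}.

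\textbf{The gap.} The unclosed step is the centred half of $\bConst{A}$. \Cref{assum:noise-level-flsa} defines $\bConst{A}$ so that it also bounds $\sup_z\|\zmfuncA[c]{z}\|$. Writing ``use the convention of \citet{patil2023finite,samsonov2023finite}'' is not an argument. Moreover, no argument can give $1+\gamma$ here, because that bound fails in general.

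Here is a counterexample. Take $d=1$, a single action, and three states with $\varphi(s_1)=1$, $\varphi(s_2)=-1$, $\varphi(s_3)=\gamma/2$. Let $s_1\to s_2$ with probability $1-q$ and $s_1\to s_3$ with probability $q$, and let $s_2$ and $s_3$ both move to $s_1$.
\begin{itemize}
\item The invariant law is $(\tfrac{1}{2},\tfrac{1-q}{2},\tfrac{q}{2})$, and \Cref{assum:generative_model} and \Cref{assum:feature_design} both hold.
\item One computes $\bA[c]=(1-q)(1+\gamma)+\tfrac{q}{2}\bigl(1-\tfrac{3\gamma^2}{4}\bigr)$.
\item For the positive-probability transition $z=(s_3,a,s_1)$ you get $\nfuncA[c]{z}=\tfrac{\gamma}{2}\bigl(\tfrac{\gamma}{2}-\gamma\bigr)=-\gamma^2/4$.
\item Hence $|\zmfuncA[c]{z}|=\bA[c]+\gamma^2/4\to 1+\gamma+\gamma^2/4>1+\gamma$ as $q\to 0$.
\end{itemize}

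\textbf{How to finish instead.} Do not chase $1+\gamma$ for the centred sup-norm; state what is true. There are two honest options.
\begin{itemize}
\item State \Cref{assum:noise-level-flsa} with $\bConst{A}=2(1+\gamma)$. This follows from $\|\zmfuncA[c]{z}\|\le\|\nfuncA[c]{z}\|+\|\bA[c]\|$ and Jensen's inequality.
\item Observe that the paper's analysis uses the centred matrices only through a conditional second moment, namely the bound on $\term{B2}+\term{B3}$ in the proof of \Cref{lem:unified-weighted-energy}. There, your own quadratic-term computation already gives $\PE\|\zmfuncA[c]{Z}u\|^2=\PE\|\nfuncA[c]{Z}u\|^2-\|\bA[c]u\|^2\le(1+\gamma)^2\|u\|^2$.
\end{itemize}
The second moment bound, together with $\|\nfuncA[c]{z}\|\le 1+\gamma$ and $\|\bA[c]\|\le 1+\gamma$, is all the downstream results need. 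So the TD corollaries still go through with $1+\gamma$, even though the claim's literal sup-norm statement does not hold.
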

We refer to \cite{mangold2024scafflsa} for a proof of this claim.

\paragraph{Application of \perlsa~to TD learning.}
We can now instantiate the results of \Cref{sec:non-asymptotic-rates} in the setting of personalized TD learning. In particular, we derive two sample-complexity guarantees corresponding to the low- and high-heterogeneity regimes, determined by the level of heterogeneity of the local operators $\zeta_{\rm TD}$.
\begin{corollary}[Sample complexity of personalized federated TD:
low-heterogeneity regime]
\label{cor:pf-td-low-sample-complexity-main}
Assume \Cref{assum:generative_model} and \Cref{assum:feature_design}, and suppose that $
\zeta_{\rm TD}^2
< (1-\gamma)^2\nu^2/16,$
and set $\beta=1/N$.
Let $\epsilon>0$ and choose the step size such that
\begin{align}
\textstyle
\eta
\leq
\min\left\{
\frac{1-\gamma}{4},
\frac{(1-\gamma)\nu}{32(1+\gamma)^2},
\frac{(1-\gamma)\nu N\epsilon}
{288\bar\sigma_{\varepsilon,{\rm TD}}^2},
\frac{(1-\gamma)\nu N^2\epsilon}
{192(1+\gamma)^2\|\bar e^0\|_2^2}
\right\}.
\label{eq:pf-td-low-step-complexity-main}
\end{align}
Then \perlsa\, satisfies $
\frac1N\sum_{c=1}^N
\PE\|\theta_c^T-\theta_c^\star\|_2^2
\leq\epsilon$
, provided
\begin{align}
\nonumber
\textstyle
T
\gtrsim
\frac{1}{(1-\gamma)\nu}
\max
& \left\{\textstyle
\frac{1}{1-\gamma},
\frac{(1+\gamma)^2}{(1-\gamma)\nu},
\frac{\bar\sigma_{\varepsilon,{\rm TD}}^2}
{(1-\gamma)\nu N\epsilon},
\frac{
(1+\gamma)^2\|\bar e^0\|_2^2
}{
(1-\gamma)\nu N^2\epsilon
}
\right\}
\\
&\textstyle \qquad\times \textstyle
\max\left\{
\log\left(
\frac{\|\bar e^0\|_2^2}{\epsilon}
\right),
N
\log\left(
(
D^0+
\frac{
\zeta_{\rm TD}^2
}{
(1-\gamma)^2\nu^2N
}
\|\bar e^0\|_2^2
)/
\epsilon
\right)
\right\}.
\label{eq:pf-td-low-sample-complexity}
\end{align}
\end{corollary}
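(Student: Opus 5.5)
This corollary should follow by plugging the constants of \Cref{prop:condition_check} into \Cref{cor:low-heterogeneity-sample-complexity-main}. The plan is to check its hypotheses one by one and then substitute.

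First, \Cref{prop:condition_check} gives \Cref{assum:exp_stability} and \Cref{assum:noise-level-flsa} with $a=(1-\gamma)\nu/2$, $\bConst{A}=1+\gamma$ and $\step_\infty=(1-\gamma)/4$. We write $\bar\sigma^2_{\varepsilon,{\rm TD}}$ for the resulting $\avgnoise^2$, which is bounded by $2(1+\gamma)^2(\frac1N\sum_c\|\thetas_c\|^2+1)$. The heterogeneity $\zeta_{\rm TD}=\max_c\|\bA[c]-\bA\|$ is the quantity $\hgty$ evaluated on the TD matrices \eqref{eq:system_matrix}. The low-heterogeneity hypothesis $\hgty^2<a^2/4$ then reads $\zeta_{\rm TD}^2<(1-\gamma)^2\nu^2/16$, which is exactly the assumption of the statement.

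Second, we translate the step-size constraints, with $\mixingparam=1/N$ as in the general corollary:
\begin{align}
\frac{a}{16\bConst{A}^2} &= \frac{(1-\gamma)\nu}{32(1+\gamma)^2}, \\
\frac{aN\epsilon}{144\avgnoise^2} &= \frac{(1-\gamma)\nu N\epsilon}{288\bar\sigma^2_{\varepsilon,{\rm TD}}}, \\
\frac{aN^2\epsilon}{96\bConst{A}^2\|\avgprojerror{0}\|^2} &= \frac{(1-\gamma)\nu N^2\epsilon}{192(1+\gamma)^2\|\avgprojerror{0}\|^2}.
\end{align}
These are precisely the four terms in \eqref{eq:pf-td-low-step-complexity-main}, so the step-size condition of \Cref{cor:low-heterogeneity-sample-complexity-main} holds.

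Third, we substitute into the bound on $T$. The prefactor $1/a$ becomes $2/((1-\gamma)\nu)$. Inside the first max:
\begin{itemize}
\item $1/\step_\infty=4/(1-\gamma)$;
\item $\bConst{A}^2/a=2(1+\gamma)^2/((1-\gamma)\nu)$;
\item $\avgnoise^2/(aN\epsilon)$ becomes $\bar\sigma^2_{\varepsilon,{\rm TD}}/((1-\gamma)\nu N\epsilon)$ up to a factor 2;
\item the last term transforms the same way.
\end{itemize}
In the logarithm, $\hgty^2/(a^2N)=4\zeta_{\rm TD}^2/((1-\gamma)^2\nu^2N)$. The factor 4 sits inside a logarithm and only adds an additive $O(N)$ to the second entry of the log-max, which is absorbed by $\gtrsim$, provided the log argument exceeds a constant. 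This holds in the nontrivial regime; otherwise that term is not binding.

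None of these steps is a real obstacle. The only care needed is that all numerical constants are absorbed into $\gtrsim$, and that $\zeta_{\rm TD}$ and $\bar\sigma_{\varepsilon,{\rm TD}}$ are exactly the instances of $\hgty$ and $\avgnoise$ for the TD operators.
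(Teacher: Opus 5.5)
Your proposal is correct and follows the same route as the paper, which obtains this corollary (stated in the appendix with explicit constants and no separate argument) by substituting $a=(1-\gamma)\nu/2$, $\bConst{A}=1+\gamma$, $\step_\infty=(1-\gamma)/4$ from \Cref{prop:condition_check} into the low-heterogeneity sample-complexity corollary, exactly as you do for the step-size thresholds and the bound on $T$. Your remark about absorbing the factor $4$ inside the logarithm mirrors what the paper itself does when it passes from its explicit-constant appendix version (which carries $16\zeta_{\rm TD}^2/((1-\gamma)^2\nu^2N)$ and an outer factor $16$ inside the log) to the main-text $\gtrsim$ statement.
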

The low-heterogeneity condition takes a particularly interpretable form in TD learning: $
\zeta_{\rm TD}<\frac{(1-\gamma)\nu}{4}.$
Hence, shorter effective horizons (larger $1-\gamma$) and better-conditioned feature representations (larger $\nu$) allow for greater heterogeneity while remaining in the favorable regime. Moreover, the previous result shows that when the initial disagreement is negligible relative to the initial consensus error, \perlsa\, achieves the desired linear-speedup rate
$\widetilde{\mathcal{O}}(1/(N\epsilon))$
while still converging to each agent's personalized TD solution. We now turn to the complementary high-heterogeneity regime.
\begin{corollary}[Sample and communication complexity of personalized federated TD:
high-heterogeneity regime]
\label{cor:pf-td-high-sample-complexity-main}
Assume \Cref{assum:generative_model} and \Cref{assum:feature_design}, and suppose that
$
\zeta_{\rm TD}^2
\geq (1-\gamma)^2\nu^2/16$, and set $
\beta=1/N.$
Let $\epsilon>0$ and choose the step size such that
\begin{align}
\textstyle
\eta
\leq
\min\left\{
\frac{1-\gamma}{4},
\frac{(1-\gamma)\nu}{32(1+\gamma)^2},
\frac{
(1-\gamma)^3\nu^3N\epsilon
}{
1344\zeta_{\rm TD}^2
\bar\sigma_{\varepsilon,{\rm TD}}^2
}
\right\}.
\label{eq:pf-td-high-step-complexity}
\end{align}
Then \perlsa\, satisfies $
\frac1N\sum_{c=1}^N
\PE\|\theta_c^T-\theta_c^\star\|_2^2
\leq\epsilon$, provided
\begin{align}
\nonumber
\textstyle
T
\gtrsim
\frac{1}{(1-\gamma)\nu}
\max\left\{
\frac{1}{1-\gamma},
\frac{(1+\gamma)^2}{(1-\gamma)\nu},
\frac{
\zeta_{\rm TD}^2
\bar\sigma_{\varepsilon,{\rm TD}}^2
}{
(1-\gamma)^3\nu^3N\epsilon
}
\right\}
\max\left\{
\log\left(
\frac{\|\bar e^0\|_2^2}{\epsilon}
\right),
N
\log\left(
\frac{
\zeta_{\rm TD}^2
\left(
D^0+\frac1N\|\bar e^0\|_2^2
\right)
}{
(1-\gamma)^2\nu^2\epsilon
}
\right)
\right\}.
\label{eq:pf-td-high-sample-complexity-main}
\end{align}
\end{corollary}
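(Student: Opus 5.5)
The plan is to derive this directly from \Cref{cor:high-heterogeneity-sample-complexity-main}, instantiated with the TD constants of \Cref{prop:condition_check}. Under \Cref{assum:generative_model} and \Cref{assum:feature_design}, \Cref{prop:condition_check} shows that \Cref{assum:exp_stability} and \Cref{assum:noise-level-flsa} hold with
\begin{align}
\bConst{A}=1+\gamma, \qquad a=(1-\gamma)\nu/2, \qquad \step_\infty=(1-\gamma)/4 .
\end{align}
The noise level $\bar\sigma_{\varepsilon,\rm TD}^2$ is the average of $\trace{\noisecov_c}$, and it is finite by the same claim. The heterogeneity $\hgty$ is the quantity $\zeta_{\rm TD}$ computed for the TD matrices \eqref{eq:system_matrix}.

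\textbf{Step 1: regime condition.} The high-heterogeneity assumption $\hgty^2\ge a^2/4$ becomes
\begin{align}
\zeta_{\rm TD}^2 \ge (1-\gamma)^2\nu^2/16 ,
\end{align}
which is exactly the hypothesis of the corollary.

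\textbf{Step 2: step-size constraints.} I substitute the constants into the three constraints of \Cref{cor:high-heterogeneity-sample-complexity-main}:
\begin{align}
\frac{a}{16\bConst{A}^2} &= \frac{(1-\gamma)\nu}{32(1+\gamma)^2}, \\
\frac{a^3\nagent\epsilon}{168\hgty^2\avgnoise^2} &= \frac{(1-\gamma)^3\nu^3 N\epsilon}{8\cdot168\,\zeta_{\rm TD}^2\bar\sigma_{\varepsilon,\rm TD}^2} = \frac{(1-\gamma)^3\nu^3N\epsilon}{1344\,\zeta_{\rm TD}^2\bar\sigma_{\varepsilon,\rm TD}^2}.
\end{align}
Together with $\step_\infty=(1-\gamma)/4$, these reproduce \eqref{eq:pf-td-high-step-complexity} exactly. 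The choice $\beta=1/N$ is the same as in the general result.

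\textbf{Step 3: horizon.} The general bound on $T$ has prefactor $1/a = 2/((1-\gamma)\nu)$. Inside the first maximum:
\begin{itemize}
\item $1/\step_\infty = 4/(1-\gamma)$;
\item $\bConst{A}^2/a = 2(1+\gamma)^2/((1-\gamma)\nu)$;
\item $\hgty^2\avgnoise^2/(a^3N\epsilon) = 8\zeta_{\rm TD}^2\bar\sigma_{\varepsilon,\rm TD}^2/((1-\gamma)^3\nu^3N\epsilon)$.
\end{itemize}
In the second maximum, the logarithmic argument $\hgty^2(\normdisagreement{0}+N^{-1}\|\avgprojerror{0}\|^2)/(a^2\epsilon)$ becomes
\begin{align}
\frac{4\,\zeta_{\rm TD}^2\,\bigl(D^0+N^{-1}\|\bar e^0\|^2\bigr)}{(1-\gamma)^2\nu^2\epsilon}.
\end{align}
All the numerical factors here (2, 4, 8) are absorbed into $\gtrsim$. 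The factor 4 inside the logarithm contributes only an additive constant times $N$, so it is also absorbed once $\epsilon$ is small enough that the logarithm is at least a constant. Otherwise the bound is trivial, since then the right-hand side is dominated by the first term.

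\textbf{Main obstacle.} The only delicate point is that absorption of constants inside the logarithm. Everything else is a direct substitution, and the conclusion $\frac1N\sum_c\PE\|\theta_c^T-\theta_c^\star\|^2\le\epsilon$ then follows from \Cref{cor:high-heterogeneity-sample-complexity-main}.
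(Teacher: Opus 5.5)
Your proposal is correct and matches how the paper obtains this corollary: substitute $\bConst{A}=1+\gamma$, $a=(1-\gamma)\nu/2$ and $\step_\infty=(1-\gamma)/4$ from \Cref{prop:condition_check} into \Cref{cor:high-heterogeneity-sample-complexity}, and your constants ($8\cdot 168=1344$, the factor $4$ coming from $1/a^2$) agree with the appendix version, which keeps them explicitly, e.g.\ $\frac{2}{(1-\gamma)\nu}$, $\frac{32(1+\gamma)^2}{(1-\gamma)\nu}$ and $2N\log\bigl(192\zeta_{\rm TD}^2(D^0+N^{-1}\|\bar e^0\|_2^2)/((1-\gamma)^2\nu^2\epsilon)\bigr)$. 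Keeping those constants inside the logarithms is the cleaner way to handle the point you flag. Your remark that ``otherwise the bound is trivial'' is not quite accurate when the log argument is of order one, since the exact condition can then still require $T$ of order $N/(\step a)$ while the constant-free logarithm is nonpositive; this only concerns how $\gtrsim$ is read and is not a gap.
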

The high-heterogeneity result illustrates the second side of the best-of-both-worlds behavior. Even when the local TD problems are substantially different, \perlsa\, still estimates each agent's local value function to any desired precision, while recovering the same $\widetilde{\mathcal{O}}(1/\epsilon)$ dependence on the target accuracy as independent TD learning.

\section{Numerical Experiments}
\label{sec:numerical}
\begin{figure}[t]
\centering
\subfigure[$\zeta_{\text{ker}}=0.0,\zeta_{\text{rew}}=0.0$][$\zeta_{\text{ker}}=0$, $\zeta_{\text{rew}}=0$]{\includegraphics[width=0.24\linewidth]{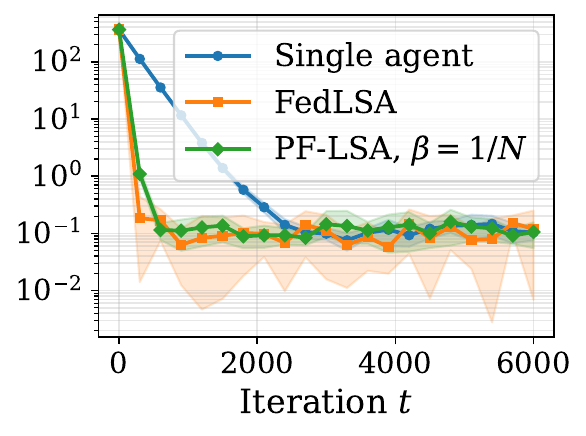}\label{fig:1}}%
\subfigure[$\zeta_{\text{ker}}=0.0,\zeta_{\text{rew}}=0.1$][$\zeta_{\text{ker}}=0$, $\zeta_{\text{rew}}=0.1$]{\includegraphics[width=0.24\linewidth]{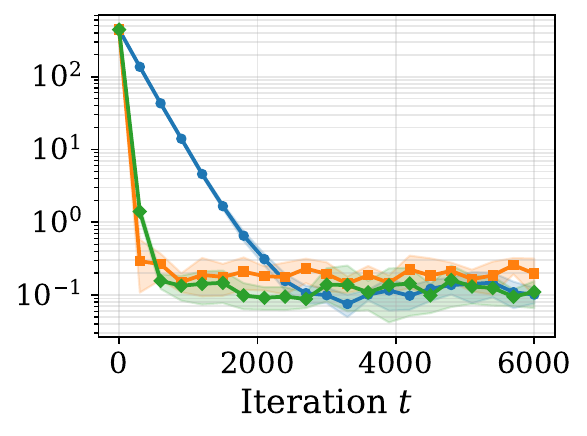}\label{fig:2}}%
\subfigure[$\zeta_{\text{ker}}=0.0,\zeta_{\text{rew}}=0.5$][$\zeta_{\text{ker}}=0$, $\zeta_{\text{rew}}=0.5$]{\includegraphics[width=0.24\linewidth]{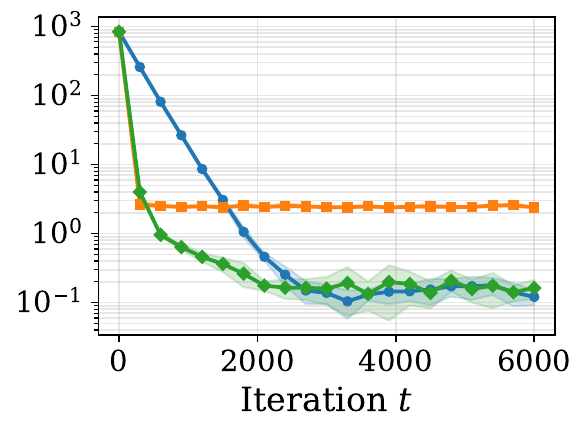}\label{fig:3}}
\subfigure[$\zeta_{\text{ker}}=0,\zeta_{\text{rew}}=1$][$\zeta_{\text{ker}}=0$, $\zeta_{\text{rew}}=1$]{\includegraphics[width=0.24\linewidth]{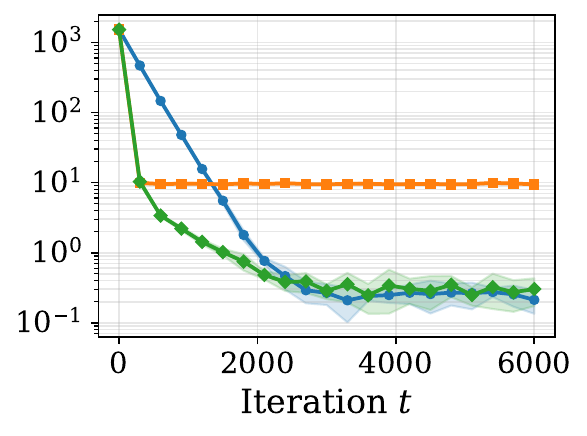}\label{fig:4}}

\subfigure[$\zeta_{\text{ker}}=1,\zeta_{\text{rew}}=0$][$\zeta_{\text{ker}}=1$, $\zeta_{\text{rew}}=0$]{\includegraphics[width=0.24\linewidth]{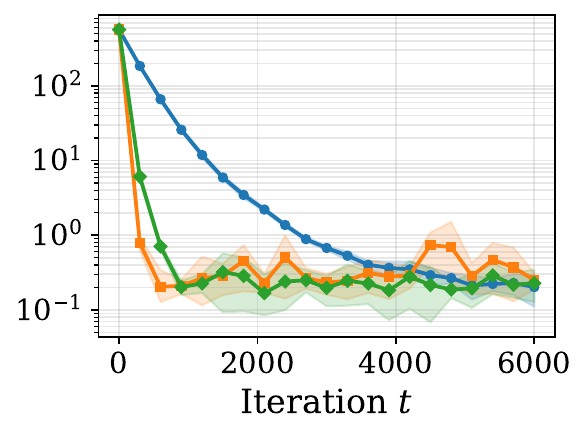}\label{fig:show-bias-1}}%
\subfigure[$\zeta_{\text{ker}}=1,\zeta_{\text{rew}}=0.1$][$\zeta_{\text{ker}}=1$, $\zeta_{\text{rew}}=0.1$]{\includegraphics[width=0.24\linewidth]{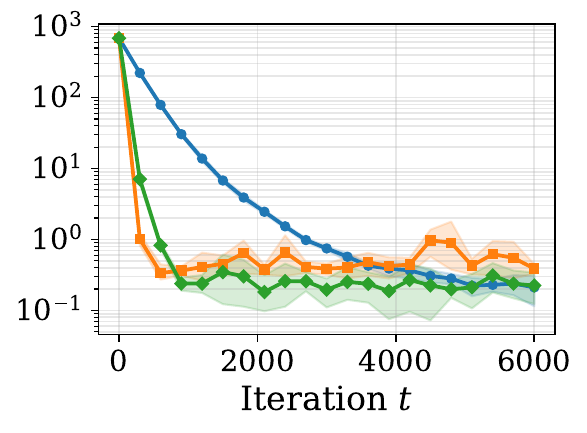}\label{fig:show-bias-2}}
\subfigure[$\zeta_{\text{ker}}=1,\zeta_{\text{rew}}=0.5$][$\zeta_{\text{ker}}=1$, $\zeta_{\text{rew}}=0.5$]{\includegraphics[width=0.24\linewidth]{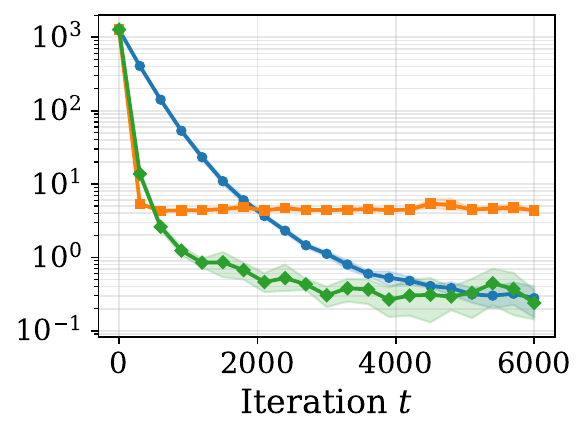}\label{fig:show-bias-3}}%
\subfigure[$\zeta_{\text{ker}}=1,\zeta_{\text{rew}}=1$][$\zeta_{\text{ker}}=1$, $\zeta_{\text{rew}}=1$]{\includegraphics[width=0.24\linewidth]{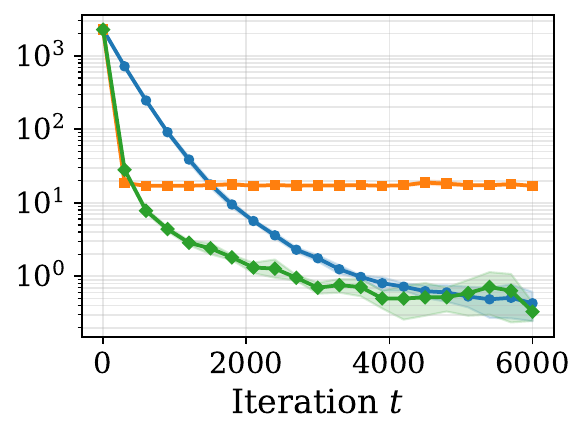}\label{fig:show-bias-4}}

\vspace{-0.1em}

\caption{Average MSE over agents $\frac{1}{\nagent} \sum_{c=1}^{\nagent} \left\| \theta_{c}^{t} - \theta_{c}^{\star} \right\|_{2}^2$ as a function of the number of iterations for \perlsa, \FLSA, and Single agent LSA applied to personalized federated TD(0) under varying levels of heterogeneity of the transition kernels $\zeta_{\text{ker}}$ and rewards $\zeta_{\text{rew}}$, for a fixed number of agents $\nagent=10$. The step-size used for Single agent LSA is $\step = 0.2$, for \perlsa, we use $\step = 0.2\times \nagent/2$, and for \FLSA, we use $\step = 0.2\times \nagent$.  
For each algorithm, we report the average MSE over agents and variance over $5$ runs.}
\label{fig:numerical_experiments}
\vspace{-0.2em}
\end{figure}

In this section, we demonstrate the performance of \perlsa~ under varying levels of heterogeneity.
We consider the Garnet problem \citep{archibald1995generation,geist2014off},
with $n=10$ states embedded in $d=5$ dimensions, $a=10$ actions, and each state is linked to $b=3$ others in the transition kernel.
We aim to estimate the value function of the policy that chooses actions uniformly at random, in homogeneous and heterogeneous setups. The experiments were conducted on a machine equipped with a 13th Gen Intel(R) Core(TM) i9-13900H CPU running at 2.60 GHz.

\textbf{\perlsa~ properly handles heterogeneity.}
Figure~\ref{fig:numerical_experiments} shows that \perlsa~ remains stable and converges towards the personalized solutions across all considered levels of heterogeneity.
When the agents are homogeneous, all methods converge to comparable limiting errors, although \perlsa~ reaches this regime much faster than Single agent LSA.
As the reward heterogeneity $\zeta_{\mathrm{rew}}$ increases, \FLSA~ no longer converges to the personalized solutions and instead plateaus at a large error, reflecting the bias induced by learning a single global model.
This effect is especially visible for $\zeta_{\mathrm{rew}}\in\{0.5,1\}$, where the error of \FLSA~ remains several orders of magnitude above that of \perlsa.
By contrast, \perlsa~ keeps a low final error even in the most heterogeneous settings.

\textbf{\perlsa~ successfully accelerates the learning process.}
Compared with Single agent LSA, \perlsa~ consistently reaches the low-error regime in significantly fewer iterations.
This acceleration is already visible in the homogeneous setting and persists under both reward and transition-kernel heterogeneity.
The improvement comes from the gradient-mixing mechanism, which allows each agent to benefit from the stochastic updates collected by all agents, thereby reducing the effective variance of the update.
This is consistent with our theoretical prediction that, for $\beta=1/N$, the averaged component of the dynamics enjoys a linear speed-up in the number of agents.
Empirically, \perlsa~ typically matches or improves upon the final accuracy of Single agent LSA while requiring substantially fewer iterations to reach the same error level.

\textbf{\perlsa~ achieves the best of both worlds between \FLSA~ and single agent learning.}
The experiments highlight the complementary limitations of the two baselines.
Single agent LSA is fully personalized, but it does not exploit the samples collected by the other agents and therefore converges slowly.
On the other hand, \FLSA~ benefits from collaboration and can converge quickly in nearly homogeneous settings, but it learns a shared solution and becomes biased when the agents are heterogeneous.
\perlsa~ combines the advantages of both approaches: it uses global information through the averaged update, which accelerates learning, while retaining a local update component, which guarantees convergence to each agent's own solution.
The results in Figure~\ref{fig:numerical_experiments} therefore support the main message of the paper: a minimal gradient-mixing mechanism is sufficient to obtain fast collaborative learning without sacrificing personalization.

\section{Conclusion and Perspectives}
\label{sec:conclu}
We introduce \perlsa, a minimalist federated algorithm for personalized LSA, that relies on a lightweight mixing of local updates.
Remarkably, \perlsa~finds agent-specific optimal solutions without imposing any assumptions on the heterogeneity level, as is typical in federated learning. 
Our theoretical analysis establishes that \perlsa~achieves true personalization while still benefiting from collaboration across the network. 
These theoretical results build upon a precise non-asymptotic characterization of the error, which we decompose into "consensus" , that is shared by all agents and "disagreement".
We identify regimes in which this simple mechanism yields a linear speed-up over isolated learning. 
In particular, we show that collaboration is most effective when heterogeneity remains controlled, allowing information sharing to accelerate convergence while still guaranteeing convergence to each agent’s optimal solution.
Our work primarily aims to highlight the remarkable property of a very simple combination of local and global updates.  Extending the present results beyond i.i.d. noise to Markovian settings would be an important step toward applications in reinforcement learning and temporal difference learning.
More broadly, future work could explore algorithmic enhancements such as warm-start strategies to reduce early-stage disagreement and preconditioning methods to mitigate operator heterogeneity.

\section*{Acknowledgements}

The work of S. Labbi, and P. Mangold has been supported by Technology Innovation Institute (TII), project Fed2Learn. The work of E. Moulines has been partly funded by the European Union (ERC-2022-SYG-OCEAN-101071601). Views and opinions expressed are however those of the author(s) only and do not necessarily reflect those of the European Union or the European Research Council Executive Agency. Neither the European Union nor the granting authority can be held responsible for them.

\bibliography{references}

@inproceedings{mou2020linear,
  title={On linear stochastic approximation: Fine-grained {P}olyak-{R}uppert and non-asymptotic concentration},
  author={Mou, Wenlong and Li, Chris Junchi and Wainwright, Martin J and Bartlett, Peter L and Jordan, Michael I},
  booktitle={Conference on Learning Theory},
  pages={2947--2997},
  year={2020},
  organization={PMLR}
}

@inproceedings{khodadadian2022federated,
  title={Federated reinforcement learning: Linear speedup under markovian sampling},
  author={Khodadadian, Sajad and Sharma, Pranay and Joshi, Gauri and Maguluri, Siva Theja},
  booktitle={International Conference on Machine Learning},
  pages={10997--11057},
  year={2022},
  organization={PMLR}
}

@article{li2020federated,
  title={Federated learning: Challenges, methods, and future directions},
  author={Li, Tian and Sahu, Anit Kumar and Talwalkar, Ameet and Smith, Virginia},
  journal={IEEE signal processing magazine},
  volume={37},
  number={3},
  pages={50--60},
  year={2020},
  publisher={IEEE}
}

@inproceedings{mcmahan2017communication,
  title={Communication-efficient learning of deep networks from decentralized data},
  author={McMahan, Brendan and Moore, Eider and Ramage, Daniel and Hampson, Seth and y Arcas, Blaise Aguera},
  booktitle={Artificial intelligence and statistics},
  pages={1273--1282},
  year={2017},
  organization={PMLR}
}

@InProceedings{samsonov2023finite,
  title = 	 {Improved {H}igh-{P}robability {B}ounds for the {T}emporal {D}ifference {L}earning {A}lgorithm via {E}xponential {S}tability},
  author =       {Samsonov, Sergey and Tiapkin, Daniil and Naumov, Alexey and Moulines, Eric},
  booktitle = 	 {Proceedings of Thirty Seventh Conference on Learning Theory},
  pages = 	 {4511--4547},
  year = 	 {2024},
  editor = 	 {Agrawal, Shipra and Roth, Aaron},
  volume = 	 {247},
  series = 	 {Proceedings of Machine Learning Research},
  month = 	 {30 Jun--03 Jul},
  publisher =    {PMLR},
  url = 	 {https://proceedings.mlr.press/v247/samsonov24a.html},
}

@inproceedings{dalal:td0:2018,
	author = {Dalal, G. and Sz{\"o}r{\'e}nyi, Bal{\'a}zs and Thoppe, G. and Mannor, S.},
	booktitle = {Thirty-Second AAAI Conference on Artificial Intelligence},
	title = {{Finite sample analyses for TD(0) with function approximation}},
	year = {2018}}

@inproceedings{bhandari2018finite,
	author = {Bhandari, J. and Russo, D. and Singal, R.},
	booktitle = {Conference On Learning Theory},
	pages = {1691--1692},
	title = {A Finite Time Analysis of Temporal Difference Learning With Linear Function Approximation},
	year = {2018}}

@article{zhang2025personalized,
  title={Personalized Collaborative Learning with Affinity-Based Variance Reduction},
  author={Zhang, Chenyu and Azizan, Navid},
  journal={arXiv preprint arXiv:2510.16232},
  year={2025}
}

@article{philippenko2025adaptive,
  title={Adaptive collaboration for online personalized distributed learning with heterogeneous clients},
  author={Philippenko, Constantin and Bars, Batiste Le and Scaman, Kevin and Massouli{\~A}{\v{S}}, Laurent},
  journal={arXiv preprint arXiv:2507.06844},
  year={2025}
}

@article{kairouz2021advances,
  title={Advances and open problems in federated learning},
  author={Kairouz, Peter and McMahan, H Brendan and Avent, Brendan and Bellet, Aur{\'e}lien and Bennis, Mehdi and Bhagoji, Arjun Nitin and Bonawitz, Kallista and Charles, Zachary and Cormode, Graham and Cummings, Rachel and others},
  journal={Foundations and Trends{\textregistered} in Machine Learning},
  volume={14},
  number={1--2},
  pages={1--210},
  year={2021},
  publisher={Now Publishers, Inc.}
}

@inproceedings{patil2023finite,
  title={Finite time analysis of temporal difference learning with linear function approximation: Tail averaging and regularisation},
  author={Patil, Gandharv and Prashanth, LA and Nagaraj, Dheeraj and Precup, Doina},
  booktitle={International Conference on Artificial Intelligence and Statistics},
  pages={5438--5448},
  year={2023},
  organization={PMLR}
}

@article{li2023sharp,
  author={Li, Gen and Wu, Weichen and Chi, Yuejie and Ma, Cong and Rinaldo, Alessandro and Wei, Yuting},
  journal={IEEE Transactions on Information Theory}, 
  title={High-Probability Sample Complexities for Policy Evaluation With Linear Function Approximation}, 
  year={2024},
  volume={70},
  number={8},
  pages={5969-5999},
  doi={10.1109/TIT.2024.3394685}
}

@article{tsitsiklis:td:1997,
	author = {J. N. {Tsitsiklis} and B. {Van Roy}},
	doi = {10.1109/9.580874},
	issn = {2334-3303},
	journal = {IEEE Transactions on Automatic Control},
	month = {May},
	number = {5},
	pages = {674-690},
	title = {An analysis of temporal-difference learning with function approximation},
	volume = {42},
	year = {1997}}

@article{zhao2018federated,
  title={Federated learning with non-iid data},
  author={Zhao, Yue and Li, Meng and Lai, Liangzhen and Suda, Naveen and Civin, Damon and Chandra, Vikas},
  journal={arXiv preprint arXiv:1806.00582},
  year={2018}
}

@article{wang2023federated,
  title={Federated temporal difference learning with linear function approximation under environmental heterogeneity},
  author={Wang, Han and Mitra, Aritra and Hassani, Hamed and Pappas, George J and Anderson, James},
  journal={arXiv preprint arXiv:2302.02212},
  year={2023}
}

@article{geist2014off,
  title={Off-policy learning with eligibility traces: a survey.},
  author={Geist, Matthieu and Scherrer, Bruno and others},
  journal={J. Mach. Learn. Res.},
  volume={15},
  number={1},
  pages={289--333},
  year={2014}
}

@article{archibald1995generation,
  title={On the generation of markov decision processes},
  author={Archibald, TW and McKinnon, KIM and Thomas, LC},
  journal={Journal of the Operational Research Society},
  volume={46},
  number={3},
  pages={354--361},
  year={1995},
  publisher={Taylor \& Francis}
}

@inproceedings{doan2019FiniteTime,
	title = {Finite-{Time} {Analysis} of {Distributed} {TD}(0) with {Linear} {Function} {Approximation} on {Multi}-{Agent} {Reinforcement} {Learning}},
	url = {https://proceedings.mlr.press/v97/doan19a.html},
	language = {en},
	urldate = {2024-01-31},
	booktitle = {Proceedings of the 36th {International} {Conference} on {Machine} {Learning}},
	publisher = {PMLR},
	author = {Doan, Thinh and Maguluri, Siva and Romberg, Justin},
	month = may,
	year = {2019},
	note = {ISSN: 2640-3498},
	pages = {1626--1635},
}

@article{robbins1951stochastic,
  title={A stochastic approximation method},
  author={Robbins, Herbert and Monro, Sutton},
  journal={The annals of mathematical statistics},
  pages={400--407},
  year={1951},
  publisher={JSTOR}
}

@book{benveniste1990adaptive,
  title={Adaptive Algorithms and Stochastic Approximations},
  author={Benveniste, A. and M{\'e}tivier, M. and Priouret, P.},
  isbn={9783540528944},
  lccn={91127124},
  series={Applications of mathematics},
  url={https://books.google.fr/books?id=vtuovgEACAAJ},
  year={1990},
  publisher={Springer-Verlag}
}

@inproceedings{defossez2015averaged,
  title={Averaged least-mean-squares: Bias-variance trade-offs and optimal sampling distributions},
  author={D{\'e}fossez, Alexandre and Bach, Francis},
  booktitle={Artificial Intelligence and Statistics},
  pages={205--213},
  year={2015},
  organization={PMLR}
}

@article{tsitsiklis1996analysis,
  title={Analysis of temporal-diffference learning with function approximation},
  author={Tsitsiklis, John and Van Roy, Benjamin},
  journal={Advances in neural information processing systems},
  volume={9},
  year={1996}
}

@book{sutton2018reinforcement,
  title={Reinforcement Learning: An Introduction},
  author={Sutton, Richard S and Barto, Andrew G},
  year={2018},
  publisher={MIT Press}
}

@book{borkar2008stochastic,
  title={Stochastic approximation: a dynamical systems viewpoint},
  author={Borkar, Vivek S and Borkar, Vivek S},
  volume={100},
  year={2008},
  publisher={Springer}
}

@article{smith2017federated,
  title={Federated multi-task learning},
  author={Smith, Virginia and Chiang, Chao-Kai and Sanjabi, Maziar and Talwalkar, Ameet S},
  journal={Advances in neural information processing systems},
  volume={30},
  year={2017}
}

@article{sattler2020clustered,
  title={Clustered federated learning: Model-agnostic distributed multitask optimization under privacy constraints},
  author={Sattler, Felix and M{\"u}ller, Klaus-Robert and Samek, Wojciech},
  journal={IEEE transactions on neural networks and learning systems},
  volume={32},
  number={8},
  pages={3710--3722},
  year={2020},
  publisher={IEEE}
}

@article{mansour2020three,
  title={Three approaches for personalization with applications to federated learning},
  author={Mansour, Yishay and Mohri, Mehryar and Ro, Jae and Suresh, Ananda Theertha},
  journal={arXiv preprint arXiv:2002.10619},
  year={2020}
}

@article{t2020personalized,
  title={Personalized federated learning with moreau envelopes},
  author={T Dinh, Canh and Tran, Nguyen and Nguyen, Josh},
  journal={Advances in neural information processing systems},
  volume={33},
  pages={21394--21405},
  year={2020}
}

@article{
wang2024federated,
title={Federated {TD} Learning with Linear Function Approximation under Environmental Heterogeneity},
author={Han Wang and Aritra Mitra and Hamed Hassani and George J. Pappas and James Anderson},
journal={Transactions on Machine Learning Research},
issn={2835-8856},
year={2024},
url={https://openreview.net/forum?id=hdQspgyFrk},
note={}
}

@article{mangold2024scafflsa,
  title   = {SCAFFLSA: Taming Heterogeneity in Federated Linear Stochastic Approximation and TD Learning},
  author  = {Mangold, Paul and Samsonov, Sergey and Labbi, Safwan and Levin, Ilya and Alami, Reda and Naumov, Alexey and Moulines, Eric},
  journal = {Advances in Neural Information Processing Systems (NeurIPS)},
  volume  = {37},
  year    = {2024}
}

@InProceedings{pmlr-v300-labbi26a,
  title = 	 { On Global Convergence Rates for Federated Softmax Policy Gradient Under HeterogeneousEnvironments },
  author =       {Labbi, Safwan and Mangold, Paul and Tiapkin, Daniil and Moulines, Eric},
  booktitle = 	 {Proceedings of The 29th International Conference on Artificial Intelligence and Statistics},
  pages = 	 {4960--4968},
  year = 	 {2026},
  editor = 	 {Khan, Emtiyaz and Li, Yingzhen and Solin, Arno and Ramdas, Aaditya},
  volume = 	 {300},
  series = 	 {Proceedings of Machine Learning Research},
  month = 	 {02--05 May},
  publisher =    {PMLR},
  url = 	 {https://proceedings.mlr.press/v300/labbi26a.html}
}

@InProceedings{pmlr-v258-labbi25a,
  title = 	 {Federated UCBVI: Communication-Efficient Federated Regret Minimization with Heterogeneous Agents},
  author =       {Labbi, Safwan and Tiapkin, Daniil and Mancini, Lorenzo and Mangold, Paul and Moulines, Eric},
  booktitle = 	 {Proceedings of The 28th International Conference on Artificial Intelligence and Statistics},
  pages = 	 {1315--1323},
  year = 	 {2025},
  editor = 	 {Li, Yingzhen and Mandt, Stephan and Agrawal, Shipra and Khan, Emtiyaz},
  volume = 	 {258},
  series = 	 {Proceedings of Machine Learning Research},
  month = 	 {03--05 May},
  publisher =    {PMLR},
  url = 	 {https://proceedings.mlr.press/v258/labbi25a.html}
}

@article{mangold2025convergence,
  title={Convergence guarantees for federated sarsa with local training and heterogeneous agents},
  author={Mangold, Paul and Berthier, Elo{\~A}{\NG}se and Moulines, Eric},
  journal={arXiv preprint arXiv:2512.17688},
  year={2025}
}
\bibliographystyle{iclr2027_conference}

\clearpage
\newpage

\appendix

\section{Convergence of a Lyapunov function}
\label{sec:appendix_lyapunov}
\allowdisplaybreaks
In this appendix, we denote the errors and disagreement as follows:
\begin{align}
\label{def:errors_and_disagreement}
\projerror{c}{t}\eqdef \theta_{c}^t - \theta_c^{\star} \eqsp,\quad
\avgprojerror{t}\eqdef\frac{1}{\nagent}\sum_{c=1}^\nagent \projerror{c}{t} \eqsp,\quad
\disagreement{c}{t} \eqdef \projerror{c}{t}-\avgprojerror{t} \eqsp,\quad
\normdisagreement{t}\eqdef \frac{1}{\nagent}\sum_{c=1}^{\nagent} \|\disagreement{c}{t}\|^2 \eqsp.
\end{align}
We also introduce the short-hand notations for the noise variables
\begin{align}
\label{def:short-hand-noise}
\nfuncnoise{t+1}[c] 
=
\nfuncnoise{}[c](\State_{c}^{t+1}) 
\eqsp,
\quad
\avgfuncnoise{t+1} 
=
\frac{1}{\nagent}\sum_{c=1}^\nagent
\nfuncnoise{{t+1}}[c] 
\eqsp,
\quad
\ndfuncnoise{t+1}[c] 
=
\avgfuncnoise{t+1}
- \nfuncnoise{t+1}[c]
\eqsp,
\end{align}
as well as
\begin{align}
\label{def:short-hand-noise-Ab}
\shnfuncA[c]{t+1}
= \shnfuncA[c]{}(\State_{c}^{t+1}) 
\eqsp,
\quad
\shnfuncb[c]{t+1}
= \nfuncb[c]{\State_{c}^{t+1}} 
\eqsp,
\end{align}
and define the filtration $\{\filtr^{t}\}_{t\ge 0}$ that contains all randomness up to time $t$.
\begin{align}
\nonumber
\filtr^{t}
\eqdef 
\sigma\Big( \State_{c}^p : p \in \{1, \dots, t \}, \eqsp c \in [\nagent]\Big) \eqsp, 
\end{align}
The following lemma explicits the recursion on the individual error and the disagreement.

\begin{theorem}
\label{lem:unified-weighted-energy}
Assume \Cref{assum:exp_stability} and \Cref{assum:noise-level-flsa}.  Let
$0<\mixingparam\leq1$, let
\[
\step\in\left(0,\min\left\{\step_\infty,
\frac{a}{16\bConst{A}^2}\right\}\right),
\]
and let $0<\lambda\leq1$ satisfy
\begin{align}
(1-\lambda)\hgty
\leq \frac{a\sqrt{\lambda}}2 \eqsp, \quad
\frac{\step\bConst{A}^2}{a}
\frac{\max\{\mixingparam,\lambda\}}{\lambda}
\leq\frac16 \eqsp.
\label{eq:unified-lambda-conditions}
\end{align}
Then, for every $t\geq0$,
\begin{align}
\PE\left[
\normdisagreement{t+1}
+\mixingparam\lambda\|\avgprojerror{t+1}\|_2^2
\right] \leq
(1-\step\mixingparam a)
\PE\left[
\normdisagreement{t}
+\mixingparam\lambda\|\avgprojerror{t}\|_2^2
\right] +
2\step^2
\left[
\mixingparam^2\left(1-\frac1{\nagent}\right)
+\frac{\mixingparam\lambda}{\nagent}
\right]\avgnoise^2 \eqsp.
\label{eq:unified-weighted-energy-recursion}
\end{align}
\end{theorem}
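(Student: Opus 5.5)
The plan is to write exact one-step recursions for the consensus error $\avgprojerror{t}$ and the disagreements $\disagreement{c}{t}$, take conditional expectations given $\filtr^t$, and add them with weights $1$ and $\mixingparam\lambda$. Since $\nfuncA[c]{z}\theta - \nfuncb[c]{z} = \nfuncA[c]{z}(\theta-\thetas_c) + \funcnoise{z}[c]$, and writing $g_c^{t+1} = \shnfuncA[c]{t+1}\projerror{c}{t} + \nfuncnoise{t+1}[c]$, we get
\begin{align}
\avgprojerror{t+1} = \avgprojerror{t} - \tfrac{\step}{\nagent}\textstyle\sum_{c'} g_{c'}^{t+1} \eqsp, \qquad
\disagreement{c}{t+1} = \disagreement{c}{t} - \step\mixingparam\bigl(g_c^{t+1} - \tfrac1\nagent\textstyle\sum_{c'} g_{c'}^{t+1}\bigr) \eqsp.
\end{align}
The disagreement therefore moves with effective step $\step\mixingparam$, and the consensus with step $\step$. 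This two-clock structure explains the rate $(1-\step\mixingparam a)$ and the weight $\mixingparam\lambda$ on $\|\avgprojerror{t}\|^2$.

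\textbf{Step 1 (noise terms).} Conditionally on $\filtr^t$, the $\nfuncnoise{t+1}[c]$ are centered and independent across agents. Hence cross terms between $\filtr^t$-measurable quantities and the noise vanish. I split each squared increment as $\|x+y\|^2\le 2\|x\|^2+2\|y\|^2$, separating the $\shnfuncA[c]{t+1}$-part from the noise part. The noise part of $\normdisagreement{t+1}$ is $\step^2\mixingparam^2\frac1\nagent\sum_c\PE\|\nfuncnoise{t+1}[c]-\avgfuncnoise{t+1}\|^2 = \step^2\mixingparam^2(1-\frac1\nagent)\avgnoise^2$. The noise part of $\mixingparam\lambda\|\avgprojerror{t+1}\|^2$ is $\mixingparam\lambda\step^2\avgnoise^2/\nagent$. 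With the factor $2$ from the splitting, this gives exactly the additive term in \eqref{eq:unified-weighted-energy-recursion}.

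\textbf{Step 2 (drift terms).} Decompose $\projerror{c}{t}=\disagreement{c}{t}+\avgprojerror{t}$ and $\bA[c]=\bA+(\bA[c]-\bA)$, where $\bA=\frac1\nagent\sum_c\bA[c]$. For the disagreement, apply \Cref{assum:exp_stability} agentwise to the $\disagreement{c}{t}$ component. Because $\frac1\nagent\sum_c\disagreement{c}{t}=0$, subtracting the average only helps, and we get
\begin{align}
\tfrac1\nagent\textstyle\sum_c \PE\|\disagreement{c}{t}-\step\mixingparam(\cdot)\|^2 \lesssim (1-\step\mixingparam a)^2\normdisagreement{t} + \text{cross}(\avgprojerror{t}) \eqsp.
\end{align}
The coupling through $\avgprojerror{t}$ is of first order, $2\step\mixingparam\,\hgty\|\avgprojerror{t}\|\sqrt{\normdisagreement{t}}$, because $\bA[c]\avgprojerror{t}-\bA\avgprojerror{t}=(\bA[c]-\bA)\avgprojerror{t}$. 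For the consensus, \Cref{assum:exp_stability} averaged over agents gives contraction $(1-\step a)$ on $\|\avgprojerror{t}\|^2$. The coupling through $\frac1\nagent\sum_c(\bA[c]-\bA)\disagreement{c}{t}$ costs $2\step\hgty\|\avgprojerror{t}\|\sqrt{\normdisagreement{t}}$. All remaining second-order terms are bounded by $\step^2\bConst{A}^2$ times $\normdisagreement{t}$ or $\|\avgprojerror{t}\|^2$, using \Cref{assum:noise-level-flsa}.

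\textbf{Step 3 (closing the recursion) and main obstacle.} Adding the two bounds with weight $\mixingparam\lambda$ on the consensus, the diagonal terms give $(1-2\step\mixingparam a)\normdisagreement{t}$ and $\mixingparam\lambda(1-2\step a)\|\avgprojerror{t}\|^2$. Since $\mixingparam\le 1$, the latter is at most the target contraction $(1-\step\mixingparam a)$ with room to spare. The delicate part is the heterogeneity cross term, which is proportional to $\step\mixingparam\hgty\|\avgprojerror{t}\|\sqrt{\normdisagreement{t}}$. I would first try to combine the two cross contributions exactly, using that both involve $\langle \disagreement{c}{t},(\bA[c]-\bA)\avgprojerror{t}\rangle$-type pairings. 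The aim is to obtain a net coefficient $(1-\lambda)\hgty$, as the condition \eqref{eq:unified-lambda-conditions} suggests. Then Young's inequality, $2(1-\lambda)\hgty xy\le \frac{a}{2}\sqrt{\lambda}\,(x^2/\sqrt\lambda\cdot\lambda + y^2/\sqrt\lambda)$ suitably balanced, lets the condition $(1-\lambda)\hgty\le a\sqrt\lambda/2$ absorb it into half of the spare contraction $\step\mixingparam a$ on both $\normdisagreement{t}$ and $\lambda\|\avgprojerror{t}\|^2$. If exact cancellation fails, I fall back to a crude Young bound and check it is still absorbed for $\lambda$ close to $1$. Finally, the second condition $\frac{\step\bConst{A}^2}{a}\frac{\max\{\mixingparam,\lambda\}}{\lambda}\le\frac16$ guarantees the $O(\step^2\bConst{A}^2)$ terms consume at most the other half. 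The worst case is the $\step^2\bConst{A}^2\|\avgprojerror{t}\|^2$ term coming from $\normdisagreement{t+1}$, which must be compared with $\step\mixingparam\lambda a\|\avgprojerror{t}\|^2$. Taking total expectation then yields \eqref{eq:unified-weighted-energy-recursion}.
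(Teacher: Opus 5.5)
The gap is in Step 3, at the point you yourself flag as delicate. Your recursions and the Step 1 noise bookkeeping are fine; Step 1 reproduces $2\eta^2[\beta^2(1-1/N)+\beta\lambda/N]\bar\sigma_\varepsilon^2$ exactly.

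In Step 2 you apply positivity separately to the disagreement, $\frac1N\sum_c\langle d_c^t,\bar{\mathbf A}_c d_c^t\rangle\ge aD^t$, and to the consensus, $\langle\bar e^t,\bar{\mathbf A}\bar e^t\rangle\ge a\|\bar e^t\|^2$. The weighted drift then leaves the cross terms
\[
-2\eta\beta\,\frac1N\sum_{c=1}^N\Bigl(\langle d_c^t,M_c\bar e^t\rangle+\lambda\langle\bar e^t,M_c d_c^t\rangle\Bigr),\qquad M_c=\bar{\mathbf A}_c-\bar{\mathbf A}.
\]
These two terms carry the same sign and involve $M_c$ and $M_c^\top$. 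No rearrangement of them produces a factor $1-\lambda$; the best you get is $2\eta\beta(1+\lambda)\zeta\|\bar e^t\|\sqrt{D^t}$. Absorbing this into half of the spare contraction, as you plan, requires $2(1+\lambda)\zeta\le a\sqrt\lambda$. That forces $\zeta\le a/4$ for every $\lambda\in(0,1]$.

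So your fallback ("check it is still absorbed for $\lambda$ close to $1$") fails exactly at $\lambda=1$ with arbitrary $\zeta$. The hypothesis $(1-\lambda)\zeta\le a\sqrt\lambda/2$ allows that case, and the high-heterogeneity corollaries depend on it.

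The missing idea is not a cancellation between the two cross terms. Instead, do not split the consensus drift at all; substitute $\bar e^t=e_c^t-d_c^t$ inside it. The total linear term then becomes
\[
-2\eta\beta\Bigl[\lambda\,\frac1N\sum_{c=1}^N\langle e_c^t,\bar{\mathbf A}_c e_c^t\rangle+(1-\lambda)\,\frac1N\sum_{c=1}^N\langle d_c^t,\bar{\mathbf A}_c e_c^t\rangle\Bigr].
\]
Bound the first sum below by $a(\|\bar e^t\|^2+D^t)$, using $\langle x,\bar{\mathbf A}_c x\rangle\ge a\|x\|^2$ on the \emph{whole} error $e_c^t$. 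This inequality follows from \Cref{assum:exp_stability} by letting the step tend to $0$. The joint positivity absorbs $\lambda$ times both cross terms at no cost.

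Only the second sum is decomposed, via $e_c^t=d_c^t+\bar e^t$ and $\sum_c d_c^t=0$. This gives $aD^t-\zeta\|\bar e^t\|\sqrt{D^t}$, now with weight $1-\lambda$. The first condition plus Young's inequality then yield a decrease of $\frac32\eta\beta a(\lambda\|\bar e^t\|^2+D^t)$, and the rest of your plan goes through.

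Two smaller points:
\begin{itemize}
\item The quadratic term coming from $D^{t+1}$ is of order $\eta^2\beta^2\operatorname{C}_{\mathbf A}^2\|\bar e^t\|^2$, not $\eta^2\operatorname{C}_{\mathbf A}^2\|\bar e^t\|^2$.
\item To fit the constant $1/6$, merge the disagreement and consensus quadratic terms using $\frac1N\sum_c\|x_c-\bar x\|^2=\frac1N\sum_c\|x_c\|^2-\|\bar x\|^2$. They then cost $\beta\max\{\beta,\lambda\}$ rather than $\beta(\beta+\lambda)$. Simply dropping the average, as in your Step 2, loses this.
\end{itemize}
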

\begin{proof}
The average-error and disagreement recursions are
\begin{align}
\avgprojerror{t+1}
&=
\avgprojerror{t}
-\frac{\step}{\nagent}\sum_{c=1}^{\nagent}
\left(
\shnfuncA[c]{t+1}e_c^t
+\nfuncnoise{t+1}[c]
\right),
\label{eq:unified-direct-average-recursion}
\\
d_c^{t+1}
&=
d_c^t
-\step\mixingparam
\Bigg[
\shnfuncA[c]{t+1}e_c^t
+\nfuncnoise{t+1}[c]
-\frac1{\nagent}\sum_{c'=1}^{\nagent}
\left(
\shnfuncA[c']{t+1}e_{c'}^t
+\nfuncnoise{t+1}[c']
\right)
\Bigg].
\label{eq:unified-direct-disagreement-recursion}
\end{align}
Squaring \eqref{eq:unified-direct-disagreement-recursion}, averaging over
$c\in[\nagent]$, and using $\sum_{c=1}^{\nagent}d_c^t=0$ gives
\begin{align}
\nonumber
\normdisagreement{t+1}
&=
\normdisagreement{t}
-\frac{2\step\mixingparam}{\nagent}
\sum_{c=1}^{\nagent}
\left\langle
d_c^t,
\shnfuncA[c]{t+1}e_c^t
+\nfuncnoise{t+1}[c]
\right\rangle
\\
&+\frac{\step^2\mixingparam^2}{\nagent}
\sum_{c=1}^{\nagent}
\Bigg\|
\shnfuncA[c]{t+1}e_c^t
+\nfuncnoise{t+1}[c]
-\frac1{\nagent}\sum_{c'=1}^{\nagent}
\left(
\shnfuncA[c']{t+1}e_{c'}^t
+\nfuncnoise{t+1}[c']
\right)
\Bigg\|_2^2.
\label{eq:unified-disagreement-expansion}
\end{align}
Indeed, the term involving the empirical average in the linear part
vanishes, since
\begin{align}
\sum_{c=1}^{\nagent}
\left\langle
d_c^t,
\frac1{\nagent}\sum_{c'=1}^{\nagent}
\left(
\shnfuncA[c']{t+1}e_{c'}^t
+\nfuncnoise{t+1}[c']
\right)
\right\rangle 
=
\left\langle
\sum_{c=1}^{\nagent} d_c^t,
\frac1{\nagent}\sum_{c'=1}^{\nagent}
\left(
\shnfuncA[c']{t+1}e_{c'}^t
+\nfuncnoise{t+1}[c']
\right)
\right\rangle
=0.
\end{align}
Likewise,
\begin{align}
\nonumber
\mixingparam\lambda\|\avgprojerror{t+1}\|_2^2
&=
\mixingparam\lambda\|\avgprojerror{t}\|_2^2
-2\step\mixingparam\lambda
\left\langle
\avgprojerror{t},
\frac1{\nagent}\sum_{c=1}^{\nagent}
\left(
\shnfuncA[c]{t+1}e_c^t
+\nfuncnoise{t+1}[c]
\right)
\right\rangle
\\
&\quad
+\step^2\mixingparam\lambda
\left\|
\frac1{\nagent}\sum_{c=1}^{\nagent}
\left(
\shnfuncA[c]{t+1}e_c^t
+\nfuncnoise{t+1}[c]
\right)
\right\|_2^2.
\label{eq:unified-average-expansion-first}
\end{align}

By \Cref{assum:noise-level-flsa},
\begin{align}
\PE\left[
\shnfuncA[c]{t+1}e_c^t
+\nfuncnoise{t+1}[c]
\,\middle|\,\filtr^t
\right]
=
\bA[c]e_c^t.
\end{align}
Therefore, taking conditional expectations in
\eqref{eq:unified-disagreement-expansion} and
\eqref{eq:unified-average-expansion-first}, we obtain
\begin{align}
\nonumber
\PE\left[
\normdisagreement{t+1}
+\mixingparam\lambda\|\avgprojerror{t+1}\|_2^2
\,\middle|\,\filtr^t
\right]
&=
\normdisagreement{t}
+\mixingparam\lambda\|\avgprojerror{t}\|_2^2
\\
\nonumber
&\quad
\underbrace{
-2\step\mixingparam
\left[
\frac1{\nagent}\sum_{c=1}^{\nagent}
\left\langle
d_c^t,\bA[c]e_c^t
\right\rangle
+
\lambda
\left\langle
\avgprojerror{t},
\frac1{\nagent}\sum_{c=1}^{\nagent}
\bA[c]e_c^t
\right\rangle
\right]
}_{\term{A}}
\\
\nonumber
&\quad
+\underbrace{
\step^2\PE\Bigg[
\frac{\mixingparam^2}{\nagent}\sum_{c=1}^{\nagent}
\Bigg\|
\shnfuncA[c]{t+1}e_c^t
+\nfuncnoise{t+1}[c]
-\frac1{\nagent}\sum_{c'=1}^{\nagent}
\left(
\shnfuncA[c']{t+1}e_{c'}^t
+\nfuncnoise{t+1}[c']
\right)
\Bigg\|_2^2
\,\Bigg|\,\filtr^t
\Bigg]
}_{\term{B}}
\\
&\quad
+\underbrace{
\step^2\mixingparam\lambda
\PE\Bigg[
\left\|
\frac1{\nagent}\sum_{c=1}^{\nagent}
\left(
\shnfuncA[c]{t+1}e_c^t
+\nfuncnoise{t+1}[c]
\right)
\right\|_2^2
\,\Bigg|\,\filtr^t
\Bigg]
}_{\term{B}}.
\label{eq:unified-weighted-exact-expansion}
\end{align}
\paragraph{Bounding (A).}
By definition of the disagreement,
$d_c^t=e_c^t-\bar e^t$, and hence
$\bar e^t=e_c^t-d_c^t$. Therefore, by linearity of the inner
product,
\begin{align}
\nonumber
\left\langle
\bar e^t,
\frac1{\nagent}\sum_{c=1}^{\nagent}\bA[c]e_c^t
\right\rangle
&=
\frac1{\nagent}\sum_{c=1}^{\nagent}
\left\langle
e_c^t-d_c^t,\bA[c]e_c^t
\right\rangle
\\
&=
\frac1{\nagent}\sum_{c=1}^{\nagent}
\left\langle
e_c^t,\bA[c]e_c^t
\right\rangle
-
\frac1{\nagent}\sum_{c=1}^{\nagent}
\left\langle
d_c^t,\bA[c]e_c^t
\right\rangle.
\end{align}
Substituting this identity into $\term{A}$ and collecting the
coefficients of
$\nagent^{-1}\sum_c\langle d_c^t,\bA[c]e_c^t\rangle$ yields
\begin{align}
\term{A}
=
-2\step\mixingparam
\left[
\lambda \underbrace{\frac1{\nagent}\sum_{c=1}^{\nagent}
\left\langle
e_c^t,\bA[c]e_c^t
\right\rangle}_{\term{A1}}
+
(1-\lambda)\underbrace{\frac1{\nagent}\sum_{c=1}^{\nagent}
\left\langle
d_c^t,\bA[c]e_c^t
\right\rangle}_{\term{A2}}
\right].
\end{align}
Using Lemma \ref{lem:bound_scalar_product}, we obtain the following bound on $\term{A1}$
\begin{align}
\term{A1} = \frac1{\nagent}\sum_{c=1}^{\nagent}
\left\langle
e_c^t,\bA[c]e_c^t
\right\rangle
\geq
\frac a{\nagent}\sum_{c=1}^{\nagent}\|e_c^t\|_2^2 =
a\left(
\|\avgprojerror{t}\|_2^2
+\normdisagreement{t}
\right).
\label{eq:unified-error-monotonicity_a1}
\end{align}
Since $\sum_c\disagreement{c}{t}=0$ and
$\projerror{c}{t}=\avgprojerror{t}+\disagreement{c}{t}$, we can decompose $\term{A2}$ as
\begin{align}
\term{A2} = \frac1{\nagent}\sum_{c=1}^{\nagent}
\left\langle
\disagreement{c}{t},
\bA[c]\projerror{c}{t}
\right\rangle =
\frac1{\nagent}\sum_{c=1}^{\nagent}
\left\langle
\disagreement{c}{t},
\bA[c]\disagreement{c}{t}
\right\rangle
+\frac1{\nagent}\sum_{c=1}^{\nagent}
\left\langle
\disagreement{c}{t},
(\bA[c]-\bA)\avgprojerror{t}
\right\rangle.
\label{eq:unified-disagreement-drift}
\end{align}
Using Lemma \ref{lem:bound_scalar_product} and Cauchy-Schwartz inequality, we obtain the following bound on $\term{A2}$
\begin{align}
\nonumber
\term{A2} = \frac1{\nagent}\sum_{c=1}^{\nagent}
\left\langle
d_c^t,\bA[c]e_c^t
\right\rangle
&\geq
a\normdisagreement{t}
-\frac1{\nagent}\sum_{c=1}^{\nagent}
\|d_c^t\|_2
\left\|
\bA[c]
-\frac1{\nagent}\sum_{c'=1}^{\nagent}\bA[c']
\right\|
\|\avgprojerror{t}\|_2
\\
& = 
a\normdisagreement{t}
-\hgty
\|\avgprojerror{t}\|_2
\sqrt{\normdisagreement{t}},
\label{eq:unified-disagreement-lower-bound}
\end{align}
where the last equality follows from the definition of $\hgty$, see \eqref{def:heterogeneity_agents}.
Combining \eqref{eq:unified-error-monotonicity_a1} and
\eqref{eq:unified-disagreement-lower-bound}, the expression inside the
square brackets in $\term{A}$ is bounded from below by
\begin{align}
a\lambda\|\avgprojerror{t}\|_2^2
+a\normdisagreement{t}
-(1-\lambda)\hgty
\|\avgprojerror{t}\|_2
\sqrt{\normdisagreement{t}} \eqsp.
\label{eq:unified-linear-lower-bound}
\end{align}
By the first condition in
\eqref{eq:unified-lambda-conditions},
\begin{align}
\nonumber
(1-\lambda)\hgty
\|\avgprojerror{t}\|_2
\sqrt{\normdisagreement{t}}
\leq
\frac{a\sqrt{\lambda}}2
\|\avgprojerror{t}\|_2
\sqrt{\normdisagreement{t}}
\leq
\frac a4
\left(
\lambda\|\avgprojerror{t}\|_2^2
+\normdisagreement{t}
\right) \eqsp,
\end{align}
where the second inequality follows from Young's inequality. Consequently,
\begin{align}
a\lambda\|\avgprojerror{t}\|_2^2
+a\normdisagreement{t}
-(1-\lambda)\hgty
\|\avgprojerror{t}\|_2
\sqrt{\normdisagreement{t}}
\geq
\frac{3a}{4}
\left(
\lambda\|\avgprojerror{t}\|_2^2
+\normdisagreement{t}
\right),
\label{eq:unified-linear-three-fourths}
\end{align}
and therefore
\begin{align}
\term{A}
\leq
-\frac{3}{2}\step\mixingparam a
\left(
\lambda\|\avgprojerror{t}\|_2^2
+\normdisagreement{t}
\right).
\label{eq:unified-A-bound}
\end{align}

\paragraph{Bounding (B).}
Applying the variance decomposition around the conditional mean yields
\begin{align}
\term{B} &= 
\step^2\PE\Bigg[
\frac{\mixingparam^2}{\nagent}\sum_{c=1}^{\nagent}
\Bigg\|
\shnfuncA[c]{t+1}e_c^t
+\nfuncnoise{t+1}[c]
-\frac1{\nagent}\sum_{c'=1}^{\nagent}
\left(
\shnfuncA[c']{t+1}e_{c'}^t
+\nfuncnoise{t+1}[c']
\right)
\Bigg\|_2^2
\,\Bigg|\,\filtr^t
\Bigg]
\\
&
+
\step^2\mixingparam\lambda
\PE\Bigg[
\left\|
\frac1{\nagent}\sum_{c=1}^{\nagent}
\left(
\shnfuncA[c]{t+1}e_c^t
+\nfuncnoise{t+1}[c]
\right)
\right\|_2^2
\,\Bigg|\,\filtr^t
\Bigg] 
\\
& = \underbrace{\step^2\PE\Bigg[
\frac{\mixingparam^2}{\nagent}\sum_{c=1}^{\nagent}
\Bigg\|
\bA[c]e_c^t
-\frac1{\nagent}\sum_{c'=1}^{\nagent}
\bA[c']e_{c'}^t
\Bigg\|_2^2
\,\Bigg|\,\filtr^t
\Bigg] + \step^2\mixingparam\lambda
\PE\Bigg[
\left\|
\frac1{\nagent}\sum_{c=1}^{\nagent}
\bA[c]e_c^t
\right\|_2^2
\,\Bigg|\,\filtr^t
\Bigg]}_{\term{B1}}
\\
& + \underbrace{\step^2\PE\Bigg[
\frac{\mixingparam^2}{\nagent}\sum_{c=1}^{\nagent}
\Bigg\|
(\shnfuncA[c]{t+1}  -\bA[c] )e_c^t
+\nfuncnoise{t+1}[c]
-\frac1{\nagent}\sum_{c'=1}^{\nagent}
\left(
(\shnfuncA[c']{t+1}- \bA[c']) e_{c'}^t
+\nfuncnoise{t+1}[c']
\right)
\Bigg\|_2^2
\,\Bigg|\,\filtr^t
\Bigg]}_{\term{B2}}
\\
&
+ \underbrace{\step^2\mixingparam\lambda
\PE\Bigg[
\left\|
\frac1{\nagent}\sum_{c=1}^{\nagent}
\left(
(\shnfuncA[c]{t+1} - \bA[c] )e_c^t
+\nfuncnoise{t+1}[c]
\right)
\right\|_2^2
\,\Bigg|\,\filtr^t
\Bigg]}_{\term{B3}} \eqsp.
\end{align}

\textit{- Bounding (B1).} Using the
identity
\begin{align}
\label{eq:pythagoras}
\frac1{\nagent}\sum_{c=1}^{\nagent}
\left\|
x_c-\frac1{\nagent}\sum_{c'=1}^{\nagent}x_{c'}
\right\|_2^2
=
\frac1{\nagent}\sum_{c=1}^{\nagent}\|x_c\|_2^2
-
\left\|
\frac1{\nagent}\sum_{c=1}^{\nagent}x_c
\right\|_2^2,
\end{align}
with $x_c=\bA[c]e_c^t$, we obtain
\begin{align}
\nonumber
&
\frac{\mixingparam^2}{\nagent}\sum_{c=1}^{\nagent}
\left\|
\bA[c]e_c^t
-\frac1{\nagent}\sum_{c'=1}^{\nagent}
\bA[c']e_{c'}^t
\right\|_2^2
+
\mixingparam\lambda
\left\|
\frac1{\nagent}\sum_{c=1}^{\nagent}
\bA[c]e_c^t
\right\|_2^2
\\
&=
\frac{\mixingparam^2}{\nagent}\sum_{c=1}^{\nagent}
\|\bA[c]e_c^t\|_2^2
+
(\mixingparam\lambda-\mixingparam^2)
\left\|
\frac1{\nagent}\sum_{c=1}^{\nagent}
\bA[c]e_c^t
\right\|_2^2 \eqsp.
\label{eq:unified-deterministic-exact}
\end{align}
If $\lambda\leq\mixingparam$, the second term on the right-hand side is
non-positive, and therefore
\begin{align}
\frac{\mixingparam^2}{\nagent}\sum_{c=1}^{\nagent}
\|\bA[c]e_c^t\|_2^2
+
(\mixingparam\lambda-\mixingparam^2)
\left\|
\frac1{\nagent}\sum_{c=1}^{\nagent}
\bA[c]e_c^t
\right\|_2^2
\leq
\frac{\mixingparam^2}{\nagent}
\sum_{c=1}^{\nagent}
\|\bA[c]e_c^t\|_2^2.
\end{align}
If $\lambda>\mixingparam$, Jensen's inequality gives
\begin{align}
\left\|
\frac1{\nagent}\sum_{c=1}^{\nagent}\bA[c]e_c^t
\right\|_2^2
\leq
\frac1{\nagent}\sum_{c=1}^{\nagent}
\|\bA[c]e_c^t\|_2^2,
\end{align}
and hence
\begin{align}
\frac{\mixingparam^2}{\nagent}\sum_{c=1}^{\nagent}
\|\bA[c]e_c^t\|_2^2
+
(\mixingparam\lambda-\mixingparam^2)
\left\|
\frac1{\nagent}\sum_{c=1}^{\nagent}
\bA[c]e_c^t
\right\|_2^2
\leq
\frac{\mixingparam\lambda}{\nagent}
\sum_{c=1}^{\nagent}
\|\bA[c]e_c^t\|_2^2.
\end{align}
Thus, in both cases,
\begin{align}
\term{B1} \leq
\step^2 \mixingparam\max\{\mixingparam,\lambda\}
\bConst{A}^2
\left(
\|\avgprojerror{t}\|_2^2
+\normdisagreement{t}
\right).
\label{eq:unified-deterministic-quadratic}
\end{align}

\textit{- Bounding (B1) + (B3).}
Applying \eqref{eq:pythagoras} with $x_c = (\shnfuncA[c]{t+1} - \bA[c] )e_c^t
+\nfuncnoise{t+1}[c] $ gives
\begin{align}
&\term{B2} + \term{B3} \\
&  \quad =  \step^2\PE\Bigg[
\frac{\mixingparam^2}{\nagent}\sum_{c=1}^{\nagent}
\Bigg\|
(\shnfuncA[c]{t+1}  -\bA[c] )e_c^t
+\nfuncnoise{t+1}[c]
-\frac1{\nagent}\sum_{c'=1}^{\nagent}
\left(
(\shnfuncA[c']{t+1}- \bA[c']) e_{c'}^t
+\nfuncnoise{t+1}[c']
\right)
\Bigg\|_2^2
\,\Bigg|\,\filtr^t
\Bigg]
\\
&
 \quad + \step^2\mixingparam\lambda
\PE\Bigg[
\left\|
\frac1{\nagent}\sum_{c=1}^{\nagent}
\left(
(\shnfuncA[c]{t+1} - \bA[c] )e_c^t
+\nfuncnoise{t+1}[c]
\right)
\right\|_2^2
\,\Bigg|\,\filtr^t
\Bigg] 
\\
&  \quad= \frac{\step^2 \mixingparam^2}{\nagent} \PE\Bigg[ \sum_{c=1}^{\nagent} \|(\shnfuncA[c]{t+1} - \bA[c] )e_c^t
+\nfuncnoise{t+1}[c]\|_2^2\Bigg|\,\filtr^t
\Bigg] 
\\
& \quad +
`\step^2 (\mixingparam\lambda-\mixingparam^2)
\PE\Bigg[ \left\|
\frac1{\nagent}\sum_{c=1}^{\nagent}
(\shnfuncA[c]{t+1} - \bA[c] )e_c^t
+\nfuncnoise{t+1}[c]
\right\|_2^2\Bigg|\,\filtr^t
\Bigg]  
\end{align}
By independence of the noise across
agents, we obtain
\begin{align}
\label{eq:intermediate_bound_b_23}
\term{B2} + \term{B3} &= \step^2 \left[\mixingparam^2 + \frac{\mixingparam\lambda-\mixingparam^2}{\nagent}\right]\frac{1}{\nagent} \PE\Bigg[ \sum_{c=1}^{\nagent} \|(\shnfuncA[c]{t+1} - \bA[c] )e_c^t
+\nfuncnoise{t+1}[c]\|_2^2\Bigg|\,\filtr^t
\Bigg] \eqsp.
\end{align}
Moreover, by Young's inequality and 
\Cref{assum:noise-level-flsa}, we have
\begin{align}
\nonumber
\frac1{\nagent}\sum_{c=1}^{\nagent}
\PE\left[
\left\|
(\shnfuncA[c]{t+1}-\bA[c])e_c^t
+\nfuncnoise{t+1}[c]
\right\|_2^2
\,\middle|\,\filtr^t
\right]
&\leq
\frac{2}{\nagent}\sum_{c=1}^{\nagent}
\PE\left[
\left\|
(\shnfuncA[c]{t+1}-\bA[c])e_c^t
\right\|_2^2
\,\middle|\,\filtr^t
\right]
+
2\avgnoise^2
\\
&\leq
\frac{2\bConst{A}^2}{\nagent}
\sum_{c=1}^{\nagent}\|e_c^t\|_2^2
+
2\avgnoise^2
\\
&=
2\bConst{A}^2
\left(
\|\avgprojerror{t}\|_2^2
+\normdisagreement{t}
\right)
+
2\avgnoise^2.
\label{eq:unified-innovation-variance}
\end{align}
where in the last equality we used $
\frac1{\nagent}\sum_{c=1}^{\nagent}\|e_c^t\|_2^2
=
\|\avgprojerror{t}\|_2^2+\normdisagreement{t}$. Plugging in the previous bound in \eqref{eq:intermediate_bound_b_23} yields
\begin{align}
\term{B2} + \term{B3} &\leq \step^2 \left[\mixingparam^2 + \frac{\mixingparam\lambda-\mixingparam^2}{\nagent}\right] \left(2\bConst{A}^2
\left(
\|\avgprojerror{t}\|_2^2
+\normdisagreement{t}
\right)
+
2\avgnoise^2 \right)
\end{align}
Furthermore, as
\begin{align}
\nonumber
\left[\mixingparam^2 + \frac{\mixingparam\lambda-\mixingparam^2}{\nagent}\right] 
=
\mixingparam
\left[
\left(1-\frac1{\nagent}\right)\mixingparam
+\frac{\lambda}{\nagent}
\right]
\leq
\mixingparam\max\{\mixingparam,\lambda\} \eqsp,
\end{align}
we get
\begin{align}
\nonumber
\term{B2} + \term{B3} \leq
2\step^2
\mixingparam\max\{\mixingparam,\lambda\}
\bConst{A}^2
\left(
\|\avgprojerror{t}\|_2^2
+\normdisagreement{t}
\right)
\nonumber +
2\step^2
\left[
\mixingparam^2\left(1-\frac1{\nagent}\right)
+\frac{\mixingparam\lambda}{\nagent}
\right]
\avgnoise^2.
\end{align}
Combining the previous bound with \eqref{eq:unified-deterministic-quadratic}, we obtain
\begin{align}
\term{B}
\leq
3\step^2
\mixingparam\max\{\mixingparam,\lambda\}
\bConst{A}^2
\left(
\|\avgprojerror{t}\|_2^2
+\normdisagreement{t}
\right) +
2\step^2
\left[
\mixingparam^2\left(1-\frac1{\nagent}\right)
+\frac{\mixingparam\lambda}{\nagent}
\right]
\avgnoise^2.
\label{eq:unified-complete-quadratic-bound}
\end{align}

Since $0<\lambda\leq1$,
\begin{align}
\|\avgprojerror{t}\|_2^2+\normdisagreement{t}
\leq
\lambda^{-1}
\left(
\lambda\|\avgprojerror{t}\|_2^2
+\normdisagreement{t}
\right).
\end{align}
Hence
\begin{align}
\nonumber
3\step^2
\mixingparam\max\{\mixingparam,\lambda\}
\bConst{A}^2
\left(
\|\avgprojerror{t}\|_2^2
+\normdisagreement{t}
\right) \leq
3\step^2\mixingparam
\frac{\max\{\mixingparam,\lambda\}}{\lambda}
\bConst{A}^2
\left(
\lambda\|\avgprojerror{t}\|_2^2
+\normdisagreement{t}
\right).
\end{align}
By the second condition in
\eqref{eq:unified-lambda-conditions},
\[
\frac{\step\bConst{A}^2}{a}
\frac{\max\{\mixingparam,\lambda\}}{\lambda}
\leq\frac16,
\]
so that
\begin{align}
3\step^2
\mixingparam\max\{\mixingparam,\lambda\}
\bConst{A}^2
\left(
\|\avgprojerror{t}\|_2^2
+\normdisagreement{t}
\right)
\leq
\frac{\step\mixingparam a}{2}
\left(
\lambda\|\avgprojerror{t}\|_2^2
+\normdisagreement{t}
\right).
\label{eq:unified-direct-quadratic-absorption}
\end{align}

Substituting \eqref{eq:unified-A-bound},
\eqref{eq:unified-complete-quadratic-bound}, and
\eqref{eq:unified-direct-quadratic-absorption} into
\eqref{eq:unified-weighted-exact-expansion} yields
\begin{align}
\nonumber
\PE\left[
\normdisagreement{t+1}
+\mixingparam\lambda\|\avgprojerror{t+1}\|_2^2
\,\middle|\,\filtr^t
\right]
\nonumber
&\leq
\normdisagreement{t}
+\mixingparam\lambda\|\avgprojerror{t}\|_2^2
-\frac32\step\mixingparam a
\left(
\lambda\|\avgprojerror{t}\|_2^2
+\normdisagreement{t}
\right)
\\
\nonumber
&\quad
+\frac12\step\mixingparam a
\left(
\lambda\|\avgprojerror{t}\|_2^2
+\normdisagreement{t}
\right)
\\
&\quad+
2\step^2
\left[
\mixingparam^2\left(1-\frac1{\nagent}\right)
+\frac{\mixingparam\lambda}{\nagent}
\right]\avgnoise^2
\\
\nonumber
&=
\normdisagreement{t}
+\mixingparam\lambda\|\avgprojerror{t}\|_2^2
-\step\mixingparam a
\left(
\lambda\|\avgprojerror{t}\|_2^2
+\normdisagreement{t}
\right)
\\
&\quad+
2\step^2
\left[
\mixingparam^2\left(1-\frac1{\nagent}\right)
+\frac{\mixingparam\lambda}{\nagent}
\right]\avgnoise^2.
\label{eq:unified-pre-final-recursion}
\end{align}
Finally, since $0<\mixingparam\leq1$,
\begin{align}
\lambda\|\avgprojerror{t}\|_2^2
+\normdisagreement{t}
\geq
\mixingparam\lambda\|\avgprojerror{t}\|_2^2
+\normdisagreement{t}.
\end{align}
Applying this inequality to
\eqref{eq:unified-pre-final-recursion} gives
\begin{align}
\nonumber
\PE\left[
\normdisagreement{t+1}
+\mixingparam\lambda\|\avgprojerror{t+1}\|_2^2
\,\middle|\,\filtr^t
\right]
\leq
(1-\step\mixingparam a)
\left(
\normdisagreement{t}
+\mixingparam\lambda\|\avgprojerror{t}\|_2^2
\right)
\nonumber +
2\step^2
\left[
\mixingparam^2\left(1-\frac1{\nagent}\right)
+\frac{\mixingparam\lambda}{\nagent}
\right]\avgnoise^2.
\end{align}
Taking total expectations proves
\eqref{eq:unified-weighted-energy-recursion}.
\end{proof}
The following Lemma shows that the condition on $\lambda$ can always be met. 
\begin{lemma}
\label{lem:lambda-choice}
Assume $
0<\mixingparam\leq1,$ and $\step\leq\frac{a}{16\bConst{A}^2}.$
Then the conditions
\begin{align}
(1-\lambda)\hgty
\leq \frac{a\sqrt{\lambda}}{2},
\qquad
\frac{\step\bConst{A}^2}{a}
\frac{\max\{\mixingparam,\lambda\}}{\lambda}
\leq\frac16
\label{eq:lambda-conditions}
\end{align}
are satisfied with the following choice of $\lambda$
\begin{align}
\lambda
=
\min\left\{
1,\;
\max\left\{
4\frac{\hgty^2}{a^2},
\;
6\frac{\step\mixingparam\bConst{A}^2}{a}
\right\}
\right\}
\label{eq:explicit-lambda-choice}
\end{align}
\end{lemma}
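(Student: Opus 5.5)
We verify the two conditions in \eqref{eq:lambda-conditions} directly for the choice \eqref{eq:explicit-lambda-choice}. Clearly $0<\lambda\le 1$, since $\step\mixingparam\bConst{A}^2>0$. Throughout we use that $\lambda\ge \min\{1,4\hgty^2/a^2\}$ and $\lambda\ge\min\{1,6\step\mixingparam\bConst{A}^2/a\}$.

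\emph{First condition.} We split into two cases.
If $\lambda=1$, the left-hand side $(1-\lambda)\hgty$ vanishes, so the inequality is trivial.
If $\lambda<1$, then $\lambda=\max\{4\hgty^2/a^2,6\step\mixingparam\bConst{A}^2/a\}\ge 4\hgty^2/a^2$. Hence $\sqrt{\lambda}\ge 2\hgty/a$, that is $a\sqrt{\lambda}/2\ge \hgty\ge(1-\lambda)\hgty$.

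\emph{Second condition.} We consider the two possible values of $\max\{\mixingparam,\lambda\}$.
If $\lambda\ge\mixingparam$, the ratio $\max\{\mixingparam,\lambda\}/\lambda$ equals $1$. The condition then reduces to $\step\bConst{A}^2/a\le 1/6$, which follows from $\step\le a/(16\bConst{A}^2)$ since $1/16\le 1/6$.
If $\lambda<\mixingparam$, the ratio equals $\mixingparam/\lambda$, so we need $\lambda\ge 6\step\mixingparam\bConst{A}^2/a$. There are two subcases.
\begin{itemize}
\item If $\lambda<1$, this holds by the max in the definition of $\lambda$.
\item If $\lambda=1$, then $\lambda<\mixingparam\le 1$ is impossible, so this subcase does not occur.
\end{itemize}
Equivalently, we can use $6\step\mixingparam\bConst{A}^2/a\le 6\mixingparam/16<1$. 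This shows that the term $6\step\mixingparam\bConst{A}^2/a$ never triggers the truncation at $1$, so $\lambda\ge 6\step\mixingparam\bConst{A}^2/a$ always holds.

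There is no real obstacle here. The only delicate point is the saturation at $1$ in the second condition, which is handled by observing $6\step\mixingparam\bConst{A}^2/a<1$ under the step-size restriction.
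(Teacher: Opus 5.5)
Your proof is correct and follows essentially the same route as the paper. For the first condition you split on $\lambda=1$ versus $\lambda<1$, using $\lambda\ge 4\hgty^2/a^2$ in the latter case; for the second you split on $\lambda\ge\mixingparam$ versus $\lambda<\mixingparam$, with $\mixingparam\le 1$ ruling out saturation at $1$. The only difference is organizational: you case on each condition separately rather than on whether the minimum equals $1$. Your remark that $6\step\mixingparam\bConst{A}^2/a\le 3/8<1$ is the same observation the paper uses later in the low-heterogeneity corollary.
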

\begin{proof}
We distinguish two cases.

First, suppose that the minimum in
\eqref{eq:explicit-lambda-choice} is attained at $1$, so that
$\lambda=1$. Then the first condition in
\eqref{eq:lambda-conditions} holds trivially since
\[
(1-\lambda)\hgty=0.
\]
Moreover, since $0<\mixingparam\leq1=\lambda$,
\[
\frac{\max\{\mixingparam,\lambda\}}{\lambda}=1.
\]
Therefore, using the step-size condition,
\[
\frac{\step\bConst{A}^2}{a}
\frac{\max\{\mixingparam,\lambda\}}{\lambda}
=
\frac{\step\bConst{A}^2}{a}
\leq\frac1{16}
\leq\frac16.
\]
Hence both conditions are satisfied.

Suppose now that the minimum in
\eqref{eq:explicit-lambda-choice} is strictly smaller than $1$. Then
\begin{align}
\lambda
=
\max\left\{
4\frac{\hgty^2}{a^2},
\;
6\frac{\step\mixingparam\bConst{A}^2}{a}
\right\}.
\end{align}
In particular,
\[
\lambda\geq4\frac{\hgty^2}{a^2},
\]
and hence
\[
\sqrt{\lambda}\geq\frac{2\hgty}{a}.
\]
It follows that
\[
\frac{a\sqrt{\lambda}}2\geq\hgty.
\]
Since $0<\lambda\leq1$,
\[
(1-\lambda)\hgty
\leq\hgty
\leq\frac{a\sqrt{\lambda}}2,
\]
which proves the first condition in
\eqref{eq:lambda-conditions}.

It remains to verify the second condition. If
$\lambda\geq\mixingparam$, then
\[
\frac{\max\{\mixingparam,\lambda\}}{\lambda}=1,
\]
and therefore
\[
\frac{\step\bConst{A}^2}{a}
\frac{\max\{\mixingparam,\lambda\}}{\lambda}
=
\frac{\step\bConst{A}^2}{a}
\leq\frac1{16}
\leq\frac16.
\]
If instead $\lambda<\mixingparam$, then
$\max\{\mixingparam,\lambda\}=\mixingparam$. Furthermore, by
\eqref{eq:explicit-lambda-choice},
\[
\lambda
\geq
6\frac{\step\mixingparam\bConst{A}^2}{a}.
\]
Consequently,
\[
\frac{\step\bConst{A}^2}{a}
\frac{\max\{\mixingparam,\lambda\}}{\lambda}
=
\frac{\step\bConst{A}^2}{a}
\frac{\mixingparam}{\lambda}
\leq
\frac16.
\]
Thus both conditions in \eqref{eq:lambda-conditions} hold in all cases.
\end{proof}
\begin{corollary}
\label{cor:unified-weighted-energy-unrolled}
Under the assumptions of \Cref{lem:unified-weighted-energy}, for every
$t\geq 0$,
\begin{align}
\nonumber
\PE\left[
\normdisagreement{t}
+\mixingparam\lambda\|\avgprojerror{t}\|_2^2
\right]
\leq
(1-\step\mixingparam a)^t
\left(
\normdisagreement{0}
+\mixingparam\lambda\|\avgprojerror{0}\|_2^2
\right) +
\frac{2\step}{a}
\left[
\mixingparam\left(1-\frac1{\nagent}\right)
+\frac{\lambda}{\nagent}
\right]\avgnoise^2.
\label{eq:unified-weighted-energy-unrolled}
\end{align}
\end{corollary}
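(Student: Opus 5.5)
The plan is to unroll the one-step recursion of \Cref{lem:unified-weighted-energy} by a straightforward induction and then sum the resulting geometric series. Set $u_t = \PE[\normdisagreement{t} + \mixingparam\lambda\|\avgprojerror{t}\|_2^2]$, $\rho = 1-\step\mixingparam a$ and
\begin{align}
c = 2\step^2\left[\mixingparam^2\left(1-\tfrac1{\nagent}\right)+\tfrac{\mixingparam\lambda}{\nagent}\right]\avgnoise^2 .
\end{align}
With this notation, \Cref{lem:unified-weighted-energy} reads $u_{t+1}\le \rho u_t + c$ for all $t\ge0$. The initial parameters are deterministic, so $u_0 = \normdisagreement{0}+\mixingparam\lambda\|\avgprojerror{0}\|_2^2$.

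First I will check that $\rho\in[0,1)$. Since $\step\le\step_\infty$ and $\step_\infty a\le 1/2$, and $\mixingparam\le 1$, we have $0<\step\mixingparam a\le 1/2$. This nonnegativity of $\rho$ is what allows us to multiply the inequality by $\rho$ at each step without flipping it. By induction on $t$, we then obtain
\begin{align}
u_t\le \rho^t u_0 + c\sum_{k=0}^{t-1}\rho^k .
\end{align}
The case $t=0$ is trivial. For the inductive step, apply the recursion and substitute the induction hypothesis, which gives $u_{t+1}\le \rho^{t+1}u_0 + c\sum_{k=0}^{t}\rho^k$.

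Next, bound the geometric sum by $\sum_{k\ge0}\rho^k = 1/(\step\mixingparam a)$. This yields
\begin{align}
\frac{c}{\step\mixingparam a} = \frac{2\step}{a}\left[\mixingparam\left(1-\tfrac1{\nagent}\right)+\tfrac{\lambda}{\nagent}\right]\avgnoise^2 ,
\end{align}
where one factor $\mixingparam$ is cancelled between numerator and denominator. This is exactly the stationary term in the statement, and it concludes the proof.

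There is no substantial obstacle here. The only points requiring care are verifying that $\rho\ge0$, so that the induction preserves the inequality direction, and the cancellation of $\mixingparam$ that turns $\mixingparam^2$ and $\mixingparam\lambda$ into $\mixingparam$ and $\lambda$ in the final variance term.
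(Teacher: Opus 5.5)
Your proposal is correct and follows the same approach as the paper, which simply unrolls the one-step recursion of \Cref{lem:unified-weighted-energy} and sums the geometric series $\sum_{k\ge 0}(1-\step\mixingparam a)^k = 1/(\step\mixingparam a)$. Your checks that $0 \le 1-\step\mixingparam a < 1$ (so the induction preserves the inequality) and the cancellation of one factor $\mixingparam$ in the variance term are exactly the details the paper's one-line proof leaves implicit.
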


\begin{proof}
Unrolling the recursion in \Cref{lem:unified-weighted-energy} concludes the proof.
\end{proof}
We now couple the weighted energy with the fast average recursion.

\section{A tighter Analysis of \texorpdfstring{\perlsa}{PF-LSA} through consensus/disagreement analysis}
\label{sec:tighter_analysis}

\begin{lemma}[Fast average recursion]
\label{lem:unified-fast-average}
Under the assumptions and step-size condition of
\Cref{lem:unified-weighted-energy}, for every
$t\geq 0$,
\begin{align}
\nonumber
\PE\left[
\|\avgprojerror{t+1}\|_2^2\mid\filtr^t
\right]
\leq
(1-\step a)\|\avgprojerror{t}\|_2^2
+
\left(
2\step^2\bConst{A}^2+
\frac{4\step\hgty^2}{a}
\right)\normdisagreement{t}
+\frac{2\step^2}{\nagent}\avgnoise^2.
\label{eq:quadratic-average-recursion}
\end{align}
\end{lemma}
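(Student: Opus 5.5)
The plan is to work directly from the average-error recursion \eqref{eq:unified-direct-average-recursion}. This is the part of \perlsa\ where the mixing parameter $\mixingparam$ disappears, since the local and averaged updates both average to the full stochastic update. I would expand $\|\avgprojerror{t+1}\|_2^2$ and take conditional expectations given $\filtr^t$. Using $\PE[\shnfuncA[c]{t+1}e_c^t+\nfuncnoise{t+1}[c]\mid\filtr^t]=\bA[c]e_c^t$, this gives
\begin{align}
\PE\left[\|\avgprojerror{t+1}\|_2^2\mid\filtr^t\right]
=\|\avgprojerror{t}\|_2^2-2\step\left\langle\avgprojerror{t},\frac1{\nagent}\sum_{c=1}^{\nagent}\bA[c]e_c^t\right\rangle
+\step^2\PE\left[\left\|\frac1{\nagent}\sum_{c=1}^{\nagent}\left(\shnfuncA[c]{t+1}e_c^t+\nfuncnoise{t+1}[c]\right)\right\|_2^2\,\middle|\,\filtr^t\right].
\end{align}
I would then treat the drift term and the quadratic term separately, as in the proof of \Cref{lem:unified-weighted-energy}.

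\textbf{Drift term.} Write $\bA=\frac1{\nagent}\sum_{c'}\bA[c']$ and $e_c^t=\avgprojerror{t}+\disagreement{c}{t}$. Since $\sum_c\disagreement{c}{t}=0$, we have
\[
\frac1{\nagent}\sum_c\bA[c]e_c^t=\bA\avgprojerror{t}+\frac1{\nagent}\sum_c(\bA[c]-\bA)\disagreement{c}{t}.
\]
Averaging Lemma \ref{lem:bound_scalar_product} over $c$ gives $\langle\avgprojerror{t},\bA\avgprojerror{t}\rangle\geq a\|\avgprojerror{t}\|_2^2$. For the heterogeneity part, Cauchy--Schwarz and the definition of $\hgty$ give the lower bound $-\hgty\|\avgprojerror{t}\|_2\sqrt{\normdisagreement{t}}$. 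Young's inequality then yields
\[
2\step\hgty\|\avgprojerror{t}\|_2\sqrt{\normdisagreement{t}}\leq\frac{\step a}{2}\|\avgprojerror{t}\|_2^2+\frac{2\step\hgty^2}{a}\normdisagreement{t}.
\]
So the drift contributes at most $-\frac{3}{2}\step a\|\avgprojerror{t}\|_2^2+\frac{2\step\hgty^2}{a}\normdisagreement{t}$. The stated constant $4\step\hgty^2/a$ leaves slack here, so a less aggressive Young weight can be used if more room is needed on $\|\avgprojerror{t}\|_2^2$.

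\textbf{Quadratic term.} I would split it around the conditional mean. The mean part $\step^2\|\frac1{\nagent}\sum_c\bA[c]e_c^t\|_2^2$ is at most $\step^2\bConst{A}^2(\|\avgprojerror{t}\|_2^2+\normdisagreement{t})$, using Jensen and the identity $\frac1{\nagent}\sum_c\|e_c^t\|_2^2=\|\avgprojerror{t}\|_2^2+\normdisagreement{t}$. For the fluctuation part, independence across agents kills the cross terms, which produces the $1/\nagent$ factor. Applying \eqref{eq:unified-innovation-variance} then bounds it by $\frac{2\step^2}{\nagent}\bigl(\bConst{A}^2(\|\avgprojerror{t}\|_2^2+\normdisagreement{t})+\avgnoise^2\bigr)$. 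This is exactly where the linear speed-up $\frac{2\step^2}{\nagent}\avgnoise^2$ appears.

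\textbf{Combining, and the expected obstacle.} The step-size condition $\step\leq a/(16\bConst{A}^2)$ gives $3\step^2\bConst{A}^2\leq\frac{3}{16}\step a$. The total coefficient on $\|\avgprojerror{t}\|_2^2$ is therefore at most $1-\frac32\step a+\frac3{16}\step a\leq1-\step a$. The main obstacle is constant bookkeeping on the $\normdisagreement{t}$ coefficient. The crude bound above gives $\step^2\bConst{A}^2(1+2/\nagent)$ rather than $2\step^2\bConst{A}^2$, which matches only when $\nagent\geq2$. To close the gap at $\nagent=1$, I would first try combining the mean and fluctuation parts into the single second moment $\PE[\|\frac1{\nagent}\sum_c\shnfuncA[c]{t+1}e_c^t\|_2^2\mid\filtr^t]$ plus noise. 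Failing that, I would use the spare factor in the heterogeneity Young step, which is $2$ versus $4$, to absorb the excess.
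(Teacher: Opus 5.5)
Your argument is correct and has the same skeleton as the paper's proof. You expand $\|\avgprojerror{t+1}\|_2^2$ using the $\mixingparam$-free average recursion and split $\frac1{\nagent}\sum_c\bA[c]\projerror{c}{t}=\bA\avgprojerror{t}+\frac1{\nagent}\sum_c(\bA[c]-\bA)\disagreement{c}{t}$. You then use $\langle\avgprojerror{t},\bA\avgprojerror{t}\rangle\ge a\|\avgprojerror{t}\|_2^2$ and apply Young to the heterogeneity cross term, as the paper does.

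The one real difference is the quadratic term. The paper does not split around the conditional mean. It bounds $\|u+v\|_2^2\le2\|u\|_2^2+2\|v\|_2^2$ with $u=\frac1{\nagent}\sum_c\shnfuncA[c]{t+1}\projerror{c}{t}$ and $v=\frac1{\nagent}\sum_c\nfuncnoise{t+1}[c]$. The factor $2$ is needed because $u$ and $v$ are built from the same samples and need not be uncorrelated. It then uses Jensen with $\sup_z\|\nfuncA[c]{z}\|\le\bConst{A}$ on $u$, and centering plus cross-agent independence on $v$. This gives $2\step^2\bConst{A}^2(\|\avgprojerror{t}\|_2^2+\normdisagreement{t})+\frac{2\step^2}{\nagent}\avgnoise^2$ uniformly in $\nagent$, which is exactly your first fallback. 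With Young weight $a/4$ (hence the $4\step\hgty^2/a$) and $1-2\step a\le(1-\step a)^2$, the paper closes via $(1-\step a)^2+\step a/4+2\step^2\bConst{A}^2\le1-\step a$.

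Your variance decomposition instead gives the coefficients $\step^2\bConst{A}^2(1+2/\nagent)$ and $2\step\hgty^2/a$. These are never worse than the paper's when $\nagent\ge2$, and they leave the contraction factor $1-\tfrac{21}{16}\step a$.

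The obstacle you flag at $\nagent=1$ does not exist. With a single agent, $\disagreement{1}{t}=\projerror{1}{t}-\avgprojerror{t}=0$, so $\normdisagreement{t}=0$ and its coefficient is irrelevant. For $\nagent\ge2$ you have $1+2/\nagent\le2$. So your main route already proves the lemma as stated.

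Drop the second fallback, though. At $\nagent=1$ you also have $\hgty=0$, so there is no slack to borrow. More generally, a spare $2\step\hgty^2/a$ cannot absorb an excess of order $\step^2\bConst{A}^2$, because $\hgty$ and $\bConst{A}$ are unrelated quantities.
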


\begin{proof}
Conditionally on $\filtr^t$, the average-error recursion gives
\begin{align}
\nonumber
\PE\left[
\|\avgprojerror{t+1}\|_2^2\mid\filtr^t
\right]
&\leq
\|\avgprojerror{t}\|_2^2
-2\step\left\langle
\avgprojerror{t},
\frac1{\nagent}\sum_{c=1}^{\nagent}
\bA[c]\projerror{c}{t}
\right\rangle
\\
&\quad+
2\step^2\bConst{A}^2
\frac1{\nagent}\sum_{c=1}^{\nagent}
\|\projerror{c}{t}\|_2^2
+\frac{2\step^2}{\nagent}\avgnoise^2.
\label{eq:unified-average-expansion}
\end{align}
The last term uses conditional centering and cross-agent independence.
Since $\sum_c\disagreement{c}{t}=0$,
\begin{align}
\frac1{\nagent}\sum_c\bA[c]\projerror{c}{t}
&=\bA\avgprojerror{t}
+\frac1{\nagent}\sum_c
(\bA[c]-\bA)\disagreement{c}{t}.
\label{eq:unified-average-split}
\end{align}
Jensen's inequality and \Cref{assum:exp_stability} imply
\begin{align}
\|\avgprojerror{t}\|_2^2
-2\step\langle\avgprojerror{t},
\bA\avgprojerror{t}\rangle
&\leq
(1-\step a)^2\|\avgprojerror{t}\|_2^2.
\end{align}
The remaining deterministic cross-term is bounded by
\begin{align}
2\step\hgty\|\avgprojerror{t}\|_2
\sqrt{\normdisagreement{t}}
&\leq
\frac{\step a}{4}\|\avgprojerror{t}\|_2^2
+\frac{4\step\hgty^2}{a}\normdisagreement{t}.
\end{align}
Finally,
$\nagent^{-1}\sum_c\|\projerror{c}{t}\|_2^2
=\|\avgprojerror{t}\|_2^2+\normdisagreement{t}$, and the step-size
condition gives
\[
(1-\step a)^2+\frac{\step a}{4}
+2\step^2\bConst{A}^2\leq1-\step a.
\]
Substitution in \eqref{eq:unified-average-expansion} proves the lemma.
\end{proof}

\begin{theorem}[Unified all-heterogeneity bound]
\label{thm:unified-all-heterogeneity}
Under the assumptions and step-size condition of
\Cref{lem:unified-weighted-energy}, let $0<\mixingparam\leq1$ and let
$0<\lambda\leq1$ satisfy
\eqref{eq:unified-lambda-conditions}. Then, for every $t\geq0$,
\begin{align}
\nonumber
\frac1{\nagent}\sum_{c=1}^{\nagent}
\PE\|\projerror{c}{t}\|_2^2
&\leq
(1-\step a)^t
\|\avgprojerror{0}\|_2^2 +
\left[
2+
\frac{8\hgty^2}{a^2}
\right]
\left(
1-\frac{\step\mixingparam a}{2}
\right)^t \times
\left(
\normdisagreement{0}
+\mixingparam\lambda
\|\avgprojerror{0}\|_2^2
\right)
\\
&+
\frac{2\step}{a\nagent}\avgnoise^2
+
\frac{2\step}{a}
\left(
2+
4\frac{\hgty^2}{a^2}
\right)
\left[
\mixingparam
\left(1-\frac1{\nagent}\right)
+\frac{\lambda}{\nagent}
\right]
\avgnoise^2.
\label{eq:unified-all-heterogeneity-first}
\end{align}
\end{theorem}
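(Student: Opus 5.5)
The plan is to combine the fast average recursion of \Cref{lem:unified-fast-average} with the weighted-energy bound of \Cref{cor:unified-weighted-energy-unrolled}. First, use the orthogonal decomposition $\frac1{\nagent}\sum_c\|\projerror{c}{t}\|_2^2=\|\avgprojerror{t}\|_2^2+\normdisagreement{t}$. This reduces the statement to bounding $\PE\|\avgprojerror{t}\|_2^2$ and $\PE[\normdisagreement{t}]$ separately.

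For the disagreement part, write $\Psi_t\eqdef\PE[\normdisagreement{t}+\mixingparam\lambda\|\avgprojerror{t}\|_2^2]$ and note $\PE[\normdisagreement{t}]\le \Psi_t$. \Cref{cor:unified-weighted-energy-unrolled} then gives
\begin{align}
\PE[\normdisagreement{t}]\le(1-\step\mixingparam a)^t\Psi_0+\frac{2\step}{a}V,
\qquad
V\eqdef\Big[\mixingparam\big(1-\tfrac1{\nagent}\big)+\tfrac{\lambda}{\nagent}\Big]\avgnoise^2.
\end{align}
This accounts for a factor $1\cdot(1-\step\mixingparam a/2)^t\Psi_0$ and $\frac{2\step}{a}V$ in the final bound.

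For the consensus part, take total expectations in \Cref{lem:unified-fast-average} and unroll:
\begin{align}
\PE\|\avgprojerror{t}\|_2^2\le(1-\step a)^t\|\avgprojerror{0}\|_2^2+\sum_{s=0}^{t-1}(1-\step a)^{t-1-s}\Big[\kappa\,\PE[\normdisagreement{s}]+\tfrac{2\step^2}{\nagent}\avgnoise^2\Big],
\end{align}
with $\kappa=2\step^2\bConst{A}^2+4\step\hgty^2/a$. The noise sum is at most $\frac{2\step}{a\nagent}\avgnoise^2$, which is exactly the stated term.

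Plugging in the disagreement bound splits the convolution in two. The stationary part gives at most $\frac{\kappa}{\step a}\cdot\frac{2\step}{a}V$. Since $\step\le a/(16\bConst{A}^2)$, we have $\kappa/(\step a)\le 2\step\bConst{A}^2/a+4\hgty^2/a^2\le 1+4\hgty^2/a^2$, which together with the direct disagreement term yields the factor $(2+4\hgty^2/a^2)$.

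The transient part is the main obstacle. It is the two-rate convolution
\begin{align}
\kappa\Psi_0\sum_{s=0}^{t-1}(1-\step a)^{t-1-s}(1-\step\mixingparam a)^s .
\end{align}
I would handle it by bounding both rates by the slower one, $\rho\eqdef1-\step\mixingparam a/2$. For $\mixingparam\le1$, $(1-\step a)\le(1-\step\mixingparam a)\le\rho^2$ (using $(1-x/2)^2\ge1-x$), so each summand is at most $\rho^{2(t-1)}\cdot\rho^{-s}$ up to constants. A cleaner route is to factor $\rho^{t}$ out and write
\begin{align}
(1-\step a)^{t-1-s}(1-\step\mixingparam a)^s\le \rho^{t-1}\,\Big(\tfrac{1-\step a}{\rho}\Big)^{t-1-s}\Big(\tfrac{1-\step\mixingparam a}{\rho}\Big)^{s}.
\end{align}
The second ratio is at most $1$. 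The first ratio satisfies $(1-\step a)/\rho\le1-\step a/2$, so the geometric sum is at most $2/(\step a)$. Since $\rho^{-1}\le2$, the transient is at most $\frac{4\kappa}{\step a}\rho^t\Psi_0$. With $\kappa/(\step a)\le 1/8+4\hgty^2/a^2$, this gives $\le(1/2+16\hgty^2/a^2)\rho^t\Psi_0$.

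Getting exactly the constant $8\hgty^2/a^2$ requires a slightly tighter handling of $\rho^{-1}$ and of the first ratio; tracking this constant is the delicate bookkeeping step. If the constants come out slightly worse, I would absorb them using $\step a\le1/2$. Summing the consensus and disagreement contributions then yields \eqref{eq:unified-all-heterogeneity-first}.
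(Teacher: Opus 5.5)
Your overall structure is the paper's: the split $\frac1{\nagent}\sum_c\|\projerror{c}{t}\|_2^2=\|\avgprojerror{t}\|_2^2+\normdisagreement{t}$, the weighted-energy bound for $\PE[\normdisagreement{t}]$, and unrolling the fast average recursion with $\kappa=2\step^2\bConst{A}^2+4\step\hgty^2/a$. Your noise and stationary terms come out exactly right. The gap is in the transient convolution, where your argument does not give the stated constant. You use $(1-\step a)/\rho\le 1-\step a/2$ and $\rho^{-1}\le 2$, which gives a consensus transient of at most $(\tfrac12+16\hgty^2/a^2)\rho^t\Psi_0$. Adding the direct disagreement contribution gives a total prefactor of $\tfrac32+16\hgty^2/a^2$. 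This exceeds $2+8\hgty^2/a^2$ as soon as $\hgty^2/a^2>1/16$.

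Your fallback, absorbing the excess via $\step a\le1/2$, cannot work. The excess is $8(\hgty^2/a^2)\rho^t\Psi_0$, proportional to $\hgty^2/a^2$. No step-size condition controls that quantity, and with $\lambda=1$ the hypotheses allow any $\hgty$. Even the sharper $\rho^{-1}\le 4/3$ only yields $32\hgty^2/(3a^2)$. The loss comes from the bound $(1-\step a)/\rho\le1-\step a/2$, which throws away a factor $1/\rho$.

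The fix is to keep that factor. With $r=(1-\step a)/\rho$ you have exactly $1-r=\step a(1-\mixingparam/2)/\rho$, so
\begin{align}
\rho^{t-1}\sum_{k=0}^{t-1}r^k\le\frac{\rho^{t-1}}{1-r}=\frac{\rho^{t}}{\step a(1-\mixingparam/2)}\le\frac{2\rho^t}{\step a},
\end{align}
and the $\rho^{-1}$ cancels. This is what the paper does. It first relaxes $(1-\step\mixingparam a)^s\le\rho^s$, then uses the closed form from Lemma~\ref{lem:sum_exponents}:
\begin{align}
\sum_{j=0}^{t-1}(1-\step a)^{t-1-j}\rho^j=\frac{\rho^t-(1-\step a)^t}{\step a(1-\mixingparam/2)}.
\end{align}
The consensus transient is then
\begin{align}
\frac{2\kappa}{\step a}\rho^t\Psi_0=\Bigl(\frac{8\hgty^2}{a^2}+\frac{4\step\bConst{A}^2}{a}\Bigr)\rho^t\Psi_0\le\Bigl(\frac14+\frac{8\hgty^2}{a^2}\Bigr)\rho^t\Psi_0 .
\end{align}
Adding the direct disagreement term gives a prefactor of at most $2+8\hgty^2/a^2$, as stated.
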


\begin{proof}
Taking total expectations in
Lemma \ref{lem:unified-fast-average} gives
\begin{align}
\PE\|\avgprojerror{t+1}\|_2^2
\leq
(1-\step a)
\PE\|\avgprojerror{t}\|_2^2
+
\step a
\left(
4\frac{\hgty^2}{a^2}
+2\frac{\step\bConst{A}^2}{a}
\right)
\PE[\normdisagreement{t}]
+
\frac{2\step^2}{\nagent}\avgnoise^2.
\label{eq:unified-average-for-coupling}
\end{align}
We next control the disagreement term using
Corollary \ref{cor:unified-weighted-energy-unrolled}. We have
\begin{align}
\PE[\normdisagreement{t}]
\leq
\PE\left[
\normdisagreement{t}
+
\mixingparam\lambda
\|\avgprojerror{t}\|_2^2
\right]  \leq
(1-\step\mixingparam a)^t
\left(
\normdisagreement{0}
+\mixingparam\lambda
\|\avgprojerror{0}\|_2^2
\right) +
\frac{2\step}{a}
\left[
\mixingparam
\left(1-\frac1{\nagent}\right)
+\frac{\lambda}{\nagent}
\right]
\avgnoise^2.
\label{eq:unified-disagreement-exact}
\end{align}
Since
\[
0<1-\step\mixingparam a
\leq
1-\frac{\step\mixingparam a}{2}<1,
\]
we can relax this bound to
\begin{align}
\PE[\normdisagreement{t}]
\leq
\left(1-\frac{\step\mixingparam a}{2}\right)^t
\left(
\normdisagreement{0}
+\mixingparam\lambda
\|\avgprojerror{0}\|_2^2
\right) +
\frac{2\step}{a}
\left[
\mixingparam
\left(1-\frac1{\nagent}\right)
+\frac{\lambda}{\nagent}
\right]
\avgnoise^2.
\label{eq:unified-disagreement-from-weighted}
\end{align}
We now unroll
\eqref{eq:unified-average-for-coupling}. Iterating the recursion from
$0$ to $t-1$ gives
\begin{align}
\nonumber
\PE\|\avgprojerror{t}\|_2^2
&\leq
(1-\step a)^t
\|\avgprojerror{0}\|_2^2 +
\step a
\left(
4\frac{\hgty^2}{a^2}
+2\frac{\step\bConst{A}^2}{a}
\right)
\sum_{j=0}^{t-1}
(1-\step a)^{t-1-j}
\PE[\normdisagreement{j}]
\\
&+
\frac{2\step^2}{\nagent}
\sum_{j=0}^{t-1}
(1-\step a)^{t-1-j}
\avgnoise^2.
\label{eq:unified-average-unrolled}
\end{align}
Substituting
\eqref{eq:unified-disagreement-from-weighted}
into \eqref{eq:unified-average-unrolled} yields
\begin{align}
\nonumber
\PE\|\avgprojerror{t}\|_2^2
&\leq
(1-\step a)^t
\|\avgprojerror{0}\|_2^2
+
\step a
\left(
4\frac{\hgty^2}{a^2}
+2\frac{\step\bConst{A}^2}{a}
\right)
\left(
\normdisagreement{0}
+\mixingparam\lambda
\|\avgprojerror{0}\|_2^2
\right)
\\
\nonumber
&\qquad\qquad\times
\sum_{j=0}^{t-1}
(1-\step a)^{t-1-j}
\left(
1-\frac{\step\mixingparam a}{2}
\right)^j
\\
\nonumber
&\quad+
2\step^2
\left(
4\frac{\hgty^2}{a^2}
+2\frac{\step\bConst{A}^2}{a}
\right)
\left[
\mixingparam
\left(1-\frac1{\nagent}\right)
+\frac{\lambda}{\nagent}
\right]
\avgnoise^2
\times
\sum_{j=0}^{t-1}
(1-\step a)^{t-1-j}
\\
&\quad+
\frac{2\step^2}{\nagent}
\sum_{j=0}^{t-1}
(1-\step a)^{t-1-j}
\avgnoise^2.
\label{eq:unified-average-after-disagreement}
\end{align}

We now bound the two geometric sums. Using Lemma \ref{lem:sum_exponents}, we have
\begin{align}
\nonumber
\sum_{j=0}^{t-1}
(1-\step a)^{t-1-j}
\left(
1-\frac{\step\mixingparam a}{2}
\right)^j
&=
\frac{
\left(1-\frac{\step\mixingparam a}{2}\right)^t
-(1-\step a)^t
}{
(1-\frac{\step\mixingparam a}{2})
-(1-\step a)
}
\\
&=
\frac{
\left(1-\frac{\step\mixingparam a}{2}\right)^t
-(1-\step a)^t
}{
\step a\left(1-\frac{\mixingparam}{2}\right)
}
\\
&\leq
\frac{
\left(1-\frac{\step\mixingparam a}{2}\right)^t
}{
\step a\left(1-\frac{\mixingparam}{2}\right)
} \leq \frac{
2\left(1-\frac{\step\mixingparam a}{2}\right)^t
}{
\step a
}  .
\label{eq:unified-two-clock-convolution}
\end{align}
For the remaining geometric sums,
\begin{align}
\sum_{j=0}^{t-1}
(1-\step a)^{t-1-j}
=
\sum_{j=0}^{t-1}
(1-\step a)^j
\leq
\frac{1}{\step a}.
\label{eq:unified-fast-geometric-sum}
\end{align}

Substituting
\eqref{eq:unified-two-clock-convolution} and
\eqref{eq:unified-fast-geometric-sum} into
\eqref{eq:unified-average-after-disagreement}, we obtain
\begin{align}
\nonumber
\PE\|\avgprojerror{t}\|_2^2
&\leq
(1-\step a)^t
\|\avgprojerror{0}\|_2^2
+
(
8\hgty^2/a^2
+4\step\bConst{A}^2/a
)
\left(
1-\frac{\step\mixingparam a}{2}
\right)^t \times
\left(
\normdisagreement{0}
+\mixingparam\lambda
\|\avgprojerror{0}\|_2^2
\right)
\\
&\quad+
\frac{2\step}{a}
\left(
4\frac{\hgty^2}{a^2}
+2\frac{\step\bConst{A}^2}{a}
\right)
\left[
\mixingparam
\left(1-\frac1{\nagent}\right)
+\frac{\lambda}{\nagent}
\right]
\avgnoise^2 +
\frac{2\step}{a\nagent}
\avgnoise^2.
\label{eq:unified-average-final}
\end{align}

Finally, the consensus--disagreement decomposition gives
\begin{align}
\frac1{\nagent}\sum_{c=1}^{\nagent}
\PE\|\projerror{c}{t}\|_2^2
=
\PE\|\avgprojerror{t}\|_2^2
+
\PE[\normdisagreement{t}].
\label{eq:unified-mse-decomposition}
\end{align}
Combining  \eqref{eq:unified-disagreement-from-weighted} with \eqref{eq:unified-average-final} gives
\begin{align}
\nonumber
\frac1{\nagent}\sum_{c=1}^{\nagent}
\PE\|\projerror{c}{t}\|_2^2
&\leq
(1-\step a)^t
\|\avgprojerror{0}\|_2^2 +
\left[
1+
\frac{8\hgty^2}{a^2}
+\frac{4\step\bConst{A}^2}{a}
\right]
\left(
1-\frac{\step\mixingparam a}{2}
\right)^t \times
\left(
\normdisagreement{0}
+\mixingparam\lambda
\|\avgprojerror{0}\|_2^2
\right)
\\
&+
\frac{2\step}{a\nagent}\avgnoise^2
+
\frac{2\step}{a}
\left(
1+
4\frac{\hgty^2}{a^2}
+2\frac{\step\bConst{A}^2}{a}
\right)
\left[
\mixingparam
\left(1-\frac1{\nagent}\right)
+\frac{\lambda}{\nagent}
\right]
\avgnoise^2.
\end{align}
Finally, using that $
\step \leq 
a/16\bConst{A}^2$ proves
\eqref{eq:unified-all-heterogeneity-first}
\end{proof}

From the previous bound, we can deduce the following two corollaries,
corresponding respectively to the high- and low-heterogeneity regimes.

\begin{corollary}[High-heterogeneity regime]
\label{cor:unified-high-heterogeneity}
Under the assumptions and step-size condition of
\Cref{thm:unified-all-heterogeneity}, assume that
$\hgty^2 \geq a^2/4$.
Then, for every $0<\mixingparam\leq1$ and every $t\geq0$,
\begin{align}
\nonumber
\frac1{\nagent}\sum_{c=1}^{\nagent}
\PE\|\projerror{c}{t}\|_2^2
&\leq
(1-\step a)^t\|\avgprojerror{0}\|_2^2
+
16\frac{\hgty^2}{a^2}
\left(
1-\frac{\step\mixingparam a}{2}
\right)^t
\left(
\normdisagreement{0}
+\mixingparam\|\avgprojerror{0}\|_2^2
\right) +
\frac{\hgty^2}{a^2}
\cdot\frac{\step}{a}
\left[
24\mixingparam
+
\frac{32}{\nagent}
\right]
\avgnoise^2.
\label{eq:unified-high-heterogeneity}
\end{align}
\end{corollary}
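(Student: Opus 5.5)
The plan is to instantiate \Cref{thm:unified-all-heterogeneity} with $\lambda=1$ and then absorb every numerical constant into a multiple of $\hgty^2/a^2$, using $\hgty^2/a^2\ge 1/4$.

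\textbf{Step 1: $\lambda=1$ is admissible.} When $\hgty^2\ge a^2/4$ we have $4\hgty^2/a^2\ge1$, so the explicit choice of \Cref{lem:lambda-choice} saturates at $\lambda=1$. Alternatively, one checks directly that $\lambda=1$ satisfies \eqref{eq:unified-lambda-conditions}. The first condition is trivial because $(1-\lambda)\hgty=0$. For the second, $\max\{\mixingparam,1\}/1=1$, so it reduces to $\step\bConst{A}^2/a\le 1/16\le1/6$, which is the step-size condition. Hence \eqref{eq:unified-all-heterogeneity-first} holds with $\lambda=1$.

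\textbf{Step 2: the transient term.} With $\lambda=1$ the slow-mode amplitude is $\bigl[2+8\hgty^2/a^2\bigr]\bigl(1-\step\mixingparam a/2\bigr)^t\bigl(\normdisagreement{0}+\mixingparam\|\avgprojerror{0}\|_2^2\bigr)$. Since $1\le 4\hgty^2/a^2$, we get $2\le 8\hgty^2/a^2$, so the prefactor is at most $16\hgty^2/a^2$. This matches the claimed transient term.

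\textbf{Step 3: the variance terms.} With $\lambda=1$ the bracket equals $\mixingparam(1-1/\nagent)+1/\nagent\le \mixingparam+1/\nagent$. Using again $1\le4\hgty^2/a^2$, the prefactor satisfies $2+4\hgty^2/a^2\le 12\hgty^2/a^2$. The second variance term is then at most
\[
\tfrac{2\step}{a}\cdot 12\tfrac{\hgty^2}{a^2}\bigl(\mixingparam+\tfrac1\nagent\bigr)\avgnoise^2=\tfrac{\hgty^2}{a^2}\tfrac{\step}{a}\bigl(24\mixingparam+\tfrac{24}{\nagent}\bigr)\avgnoise^2 .
\]
The standalone term satisfies $\frac{2\step}{a\nagent}\avgnoise^2\le \frac{\hgty^2}{a^2}\frac{\step}{a}\frac{8}{\nagent}\avgnoise^2$. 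Summing gives coefficients $24\mixingparam+32/\nagent$, exactly as stated.

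\textbf{Main obstacle.} There is no real obstacle; the only point needing care is constant bookkeeping. One must bound the $1/\nagent$ contributions separately rather than via $\mixingparam\max\{\mixingparam,\lambda\}$, so that the $24/32$ split comes out as claimed. One must also remember that the $(1-\step a)^t\|\avgprojerror{0}\|_2^2$ term is left untouched.
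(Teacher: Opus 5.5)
Your proposal is correct and follows essentially the same route as the paper: set $\lambda=1$ (admissible since $4\hgty^2/a^2\ge 1$, via \Cref{lem:lambda-choice} or directly), plug it into \Cref{thm:unified-all-heterogeneity}, and absorb the numerical constants using $\hgty^2/a^2\ge 1/4$. The only difference is the order of the bookkeeping. You bound $2+4\hgty^2/a^2\le 12\hgty^2/a^2$ before distributing, whereas the paper first expands the variance terms into $(4+8\hgty^2/a^2)\mixingparam+(6+8\hgty^2/a^2)/\nagent$ and then bounds each coefficient; both give exactly $24\mixingparam+32/\nagent$.
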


\begin{proof}
By Lemma \ref{lem:lambda-choice}, we may choose
\begin{align}
\lambda
=
\min\left\{
1,\;
\max\left\{
4\frac{\hgty^2}{a^2},
6\frac{\step\mixingparam\bConst{A}^2}{a}
\right\}
\right\}.
\end{align}
Since $\hgty^2\geq a^2/4$, we have $
4\hgty^2/a^2 \geq1,$
and therefore $\lambda=1$. Applying \Cref{thm:unified-all-heterogeneity} with $\lambda=1$ gives
\begin{align}
\nonumber
\frac1{\nagent}\sum_{c=1}^{\nagent}
\PE\|\projerror{c}{t}\|_2^2
&\leq
(1-\step a)^t\|\avgprojerror{0}\|_2^2 +
\left(
2+8\frac{\hgty^2}{a^2}
\right)
\left(
1-\frac{\step\mixingparam a}{2}
\right)^t
\left(
\normdisagreement{0}
+\mixingparam\|\avgprojerror{0}\|_2^2
\right)
\\
&\quad+
\frac{2\step}{a\nagent}\avgnoise^2
+
\frac{2\step}{a}
\left(
2+4\frac{\hgty^2}{a^2}
\right)
\left[
\mixingparam\left(1-\frac1{\nagent}\right)
+\frac1{\nagent}
\right]
\avgnoise^2.
\label{eq:high-before-simplification}
\end{align}
We now simplify the two coefficients. Since
$\hgty^2/a^2\geq1/4$, we have $
2+8\hgty^2/a^2
\leq
16 \hgty^2/ a^2.
\label{eq:high-transient-simplification} $
For the noise term, using
$1-1/\nagent\leq1$, we obtain
\begin{align}
\frac{2\step}{a\nagent}
+
\frac{2\step}{a}
\left(
2+4\frac{\hgty^2}{a^2}
\right)
\left[
\mixingparam\left(1-\frac1{\nagent}\right)
+\frac1{\nagent}
\right]
\leq
\frac{\step}{a}
\left[
\left(
4+8\frac{\hgty^2}{a^2}
\right)\mixingparam
+
\frac{
6+8\hgty^2/a^2
}{\nagent}
\right].
\label{eq:high-noise-intermediate}
\end{align}
Again, since $\hgty^2/a^2\geq1/4$,
\begin{align}
4+8\frac{\hgty^2}{a^2}
\leq
24\frac{\hgty^2}{a^2} \eqsp, \quad
6+8\frac{\hgty^2}{a^2}
\leq
32\frac{\hgty^2}{a^2}.
\end{align}
Consequently,
\begin{align}
\frac{2\step}{a\nagent}
+
\frac{2\step}{a}
\left(
2+4\frac{\hgty^2}{a^2}
\right)
\left[
\mixingparam\left(1-\frac1{\nagent}\right)
+\frac1{\nagent}
\right]
\leq
\frac{\hgty^2}{a^2}
\cdot\frac{\step}{a}
\left[
24\mixingparam
+\frac{32}{\nagent}
\right].
\end{align}
Combining this inequality with
\eqref{eq:high-transient-simplification} in
\eqref{eq:high-before-simplification} proves the result
\end{proof}

\begin{corollary}[Low-heterogeneity regime]
\label{cor:unified-low-heterogeneity}
Under the assumptions and step-size condition of
\Cref{thm:unified-all-heterogeneity}, assume that
$\hgty^2<a^2/4$.
Then, for every $0<\mixingparam\leq1$ and every $t\geq0$,
\begin{align}
\nonumber
\frac1{\nagent}\sum_{c=1}^{\nagent}
\PE\|\projerror{c}{t}\|_2^2
&\leq
(1-\step a)^t\|\avgprojerror{0}\|_2^2
+
4
\left(
1-\frac{\step\mixingparam a}{2}
\right)^t
\left[
\normdisagreement{0}
+
4\mixingparam\frac{\hgty^2}{a^2}
\|\avgprojerror{0}\|_2^2
\right]
\\
&\quad+
\frac{6\step}{a}
\left[
\mixingparam
+\frac5{\nagent}
\right]
\avgnoise^2
+
24\frac{\step\mixingparam^2\bConst{A}^2}{a}
\|\avgprojerror{0}\|_2^2.
\label{eq:unified-low-heterogeneity}
\end{align}
\end{corollary}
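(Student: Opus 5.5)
The plan is to instantiate \Cref{thm:unified-all-heterogeneity} with the explicit choice of \Cref{lem:lambda-choice},
\[
\lambda=\min\Bigl\{1,\max\Bigl\{4\tfrac{\hgty^2}{a^2},\,6\tfrac{\step\mixingparam\bConst{A}^2}{a}\Bigr\}\Bigr\}.
\]
This choice is admissible by that lemma, so the theorem applies. I would then simplify each of the three groups of terms using $\hgty^2<a^2/4$ and $\step\le a/(16\bConst{A}^2)$.

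First I check that the minimum is not attained at $1$. We have $4\hgty^2/a^2<1$ by assumption. We also have $6\step\mixingparam\bConst{A}^2/a\le 6/16<1$ by the step-size condition and $\mixingparam\le1$. Hence $\lambda$ equals the maximum, and bounding the maximum by the sum gives
\[
\mixingparam\lambda\|\avgprojerror{0}\|_2^2\le 4\mixingparam\tfrac{\hgty^2}{a^2}\|\avgprojerror{0}\|_2^2+6\tfrac{\step\mixingparam^2\bConst{A}^2}{a}\|\avgprojerror{0}\|_2^2 .
\]

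Next, the transient prefactor satisfies $2+8\hgty^2/a^2\le 4$, since $\hgty^2/a^2<1/4$. The transient term in \eqref{eq:unified-all-heterogeneity-first} is therefore at most
\[
4\Bigl(1-\tfrac{\step\mixingparam a}{2}\Bigr)^t\Bigl[\normdisagreement{0}+4\mixingparam\tfrac{\hgty^2}{a^2}\|\avgprojerror{0}\|_2^2\Bigr]
+4\cdot 6\,\tfrac{\step\mixingparam^2\bConst{A}^2}{a}\|\avgprojerror{0}\|_2^2 .
\]
In the second piece I use $(1-\step\mixingparam a/2)^t\le1$. This yields exactly the non-decaying term $24\,\step\mixingparam^2\bConst{A}^2\|\avgprojerror{0}\|_2^2/a$ appearing in \eqref{eq:unified-low-heterogeneity}.

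For the noise terms, I use $2+4\hgty^2/a^2<3$, $1-1/\nagent\le1$ and $\lambda\le1$. This bounds the last term of \eqref{eq:unified-all-heterogeneity-first} by $\tfrac{6\step}{a}[\mixingparam+1/\nagent]\avgnoise^2$. Adding the remaining term $2\step\avgnoise^2/(a\nagent)$ gives a total of $\tfrac{\step}{a}[6\mixingparam+8/\nagent]\avgnoise^2$, which is at most $\tfrac{6\step}{a}[\mixingparam+5/\nagent]\avgnoise^2$. Collecting the three pieces yields \eqref{eq:unified-low-heterogeneity}.

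No step is a genuine obstacle. The only point needing care is the $\lambda$ term. There, $\max\le$ sum has to be applied before the factor $(1-\step\mixingparam a/2)^t$ is dropped from the $\step$-dependent part, so that the heterogeneity part keeps its geometric decay.
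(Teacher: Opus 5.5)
Your proposal is correct and matches the paper's proof step for step. You instantiate the unified bound with the $\lambda$ of \Cref{lem:lambda-choice}, bound $\lambda$ by the sum $4\hgty^2/a^2+6\step\mixingparam\bConst{A}^2/a$, and use $2+8\hgty^2/a^2\le 4$ and $2+4\hgty^2/a^2\le 3$. You then drop the geometric factor only on the $\step$-dependent piece and absorb $6\mixingparam+8/\nagent$ into $6(\mixingparam+5/\nagent)$, exactly as the paper does.
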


\begin{proof}
Since $\hgty^2<a^2/4$, we have $
4\hgty^2/a^2<1.$
Moreover, by the step-size condition and
$0<\mixingparam\leq1$,
\begin{align}
6\frac{\step\mixingparam\bConst{A}^2}{a}
\leq
\frac{6}{16}
=
\frac38
<1.
\end{align}
Hence the condition on $\lambda$ in Lemma \ref{lem:lambda-choice} becomes
\begin{align}
\lambda
=
\max\left\{
4\frac{\hgty^2}{a^2},
6\frac{\step\mixingparam\bConst{A}^2}{a}
\right\}.
\end{align}
In particular,
\begin{align}
\lambda
\leq
4\frac{\hgty^2}{a^2}
+
6\frac{\step\mixingparam\bConst{A}^2}{a}.
\label{eq:low-lambda-sum}
\end{align}

Since $\hgty^2/a^2<1/4$, we also have
\begin{align}
2+8\frac{\hgty^2}{a^2}
\leq4,
\qquad
2+4\frac{\hgty^2}{a^2}
\leq3.
\label{eq:low-prefactors}
\end{align}

We first control the transient term in
\Cref{thm:unified-all-heterogeneity}. Using
\eqref{eq:low-lambda-sum} and
\eqref{eq:low-prefactors},
\begin{align}
&\nonumber
\left(
2+8\frac{\hgty^2}{a^2}
\right)
\left(
1-\frac{\step\mixingparam a}{2}
\right)^t
\left(
\normdisagreement{0}
+
\mixingparam\lambda
\|\avgprojerror{0}\|_2^2
\right)
\\
& \qquad \qquad \leq
4
\left(
1-\frac{\step\mixingparam a}{2}
\right)^t
\Bigg[
\normdisagreement{0}
+
4\mixingparam\frac{\hgty^2}{a^2}
\|\avgprojerror{0}\|_2^2
+
6\frac{\step\mixingparam^2\bConst{A}^2}{a}
\|\avgprojerror{0}\|_2^2
\Bigg]
\\
& \qquad \qquad =
4
\left(
1-\frac{\step\mixingparam a}{2}
\right)^t
\left[
\normdisagreement{0}
+
4\mixingparam\frac{\hgty^2}{a^2}
\|\avgprojerror{0}\|_2^2
\right]
\nonumber +
24\frac{\step\mixingparam^2\bConst{A}^2}{a}
\left(
1-\frac{\step\mixingparam a}{2}
\right)^t
\|\avgprojerror{0}\|_2^2
\\
& \qquad \qquad\leq
4
\left(
1-\frac{\step\mixingparam a}{2}
\right)^t
\left[
\normdisagreement{0}
+
4\mixingparam\frac{\hgty^2}{a^2}
\|\avgprojerror{0}\|_2^2
\right]
+
24\frac{\step\mixingparam^2\bConst{A}^2}{a}
\|\avgprojerror{0}\|_2^2,
\label{eq:low-transient-bound}
\end{align}
where the last inequality uses
$\left(1-\step\mixingparam a/2\right)^t\leq1$.

It remains to control the noise terms. From
\eqref{eq:low-prefactors}, $\lambda<1$, and
$1-1/\nagent\leq1$,
\begin{align}
\nonumber
\frac{2\step}{a\nagent}
+
\frac{2\step}{a}
\left(
2+4\frac{\hgty^2}{a^2}
\right)
\left[
\mixingparam\left(1-\frac1{\nagent}\right)
+\frac{\lambda}{\nagent}
\right]
&\leq
\frac{2\step}{a\nagent}
+
\frac{6\step}{a}
\left[
\mixingparam
+\frac1{\nagent}
\right]
\\
&=
\frac{6\step}{a}\mixingparam
+
\frac{8\step}{a\nagent}
\\
&\leq
\frac{6\step}{a}
\left[
\mixingparam+\frac5{\nagent}
\right].
\label{eq:low-noise-bound}
\end{align}
Substituting
\eqref{eq:low-transient-bound} and
\eqref{eq:low-noise-bound} into
\Cref{thm:unified-all-heterogeneity} proves
\eqref{eq:unified-low-heterogeneity}.
\end{proof}

\begin{corollary}[Sample complexity in the low-heterogeneity regime]
\label{cor:low-heterogeneity-sample-complexity}
Under the assumptions of
\Cref{cor:unified-low-heterogeneity}, assume that 
$\hgty^2<\frac{a^2}{4}$, and set
$\mixingparam=\frac1{\nagent}.$
Let $\epsilon>0$ and choose the step size such that
\begin{align}
\step
\leq
\min\left\{
\step_\infty,\;
\frac{a}{16\bConst{A}^2},\;
\frac{a\nagent\epsilon}{144\avgnoise^2},\;
\frac{a\nagent^2\epsilon}
{96\bConst{A}^2\|\avgprojerror{0}\|_2^2}
\right\}.
\label{eq:low-heterogeneity-sample-step}
\end{align}
Then \perlsa\, satisfies
\begin{align}
\frac1{\nagent}\sum_{c=1}^{\nagent}
\PE\|\projerror{c}{T}\|_2^2
\leq\epsilon
\end{align}
provided
\begin{align}
\nonumber
T
\geq
\frac1a
\max\Bigg\{
&
\frac1{\step_\infty},
\frac{16\bConst{A}^2}{a},
\frac{144\avgnoise^2}{a\nagent\epsilon},
\frac{96\bConst{A}^2\|\avgprojerror{0}\|_2^2}
{a\nagent^2\epsilon}
\Bigg\}
\\
&\qquad\times
\max\Bigg\{
\log\left(
\frac{4\|\avgprojerror{0}\|_2^2}{\epsilon}
\right),
\;
2\nagent
\log\left(
\frac{
16\left(
\normdisagreement{0}
+
\frac{4\hgty^2}{a^2\nagent}
\|\avgprojerror{0}\|_2^2
\right)
}{\epsilon}
\right)
\Bigg\}.
\label{eq:low-heterogeneity-sample-complexity}
\end{align}
\end{corollary}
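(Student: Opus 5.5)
The plan is to plug $\mixingparam=1/\nagent$ into \Cref{cor:unified-low-heterogeneity} and split the resulting bound on $\frac1{\nagent}\sum_c\PE\|\projerror{c}{T}\|_2^2$ into four terms, each to be made at most $\epsilon/4$. With $\mixingparam=1/\nagent$ the bound becomes
\begin{align}
(1-\step a)^T\|\avgprojerror{0}\|_2^2
+4\Bigl(1-\frac{\step a}{2\nagent}\Bigr)^T\Bigl[\normdisagreement{0}+\frac{4\hgty^2}{a^2\nagent}\|\avgprojerror{0}\|_2^2\Bigr]
+\frac{36\step}{a\nagent}\avgnoise^2
+\frac{24\step\bConst{A}^2}{a\nagent^2}\|\avgprojerror{0}\|_2^2 ,
\end{align}
where we used $\frac{6\step}{a}(\frac1\nagent+\frac5\nagent)=\frac{36\step}{a\nagent}$.

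The two stationary terms are handled by the step-size conditions in \eqref{eq:low-heterogeneity-sample-step}. The condition $\step\le a\nagent\epsilon/(144\avgnoise^2)$ gives $36\step\avgnoise^2/(a\nagent)\le\epsilon/4$. The condition $\step\le a\nagent^2\epsilon/(96\bConst{A}^2\|\avgprojerror{0}\|_2^2)$ gives $24\step\bConst{A}^2\|\avgprojerror{0}\|_2^2/(a\nagent^2)\le\epsilon/4$.

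The two transient terms are handled with $1-x\le e^{-x}$.
\begin{itemize}
\item For the first, $(1-\step a)^T\|\avgprojerror{0}\|_2^2\le\epsilon/4$ as soon as $T\ge\frac{1}{\step a}\log(4\|\avgprojerror{0}\|_2^2/\epsilon)$.
\item For the second, it suffices that $T\ge\frac{2\nagent}{\step a}\log\bigl(16(\normdisagreement{0}+\frac{4\hgty^2}{a^2\nagent}\|\avgprojerror{0}\|_2^2)/\epsilon\bigr)$.
\end{itemize}
If either logarithm is nonpositive, the corresponding term is already at most $\epsilon/4$ at $T=0$, since every factor of the form $(1-\cdot)^T$ is at most $1$. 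So that case is trivial.

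To obtain the stated form of $T$, I take the step size equal to the minimum in \eqref{eq:low-heterogeneity-sample-step}. Then $1/\step$ equals the maximum of the reciprocals, $\max\{1/\step_\infty,16\bConst{A}^2/a,144\avgnoise^2/(a\nagent\epsilon),96\bConst{A}^2\|\avgprojerror{0}\|_2^2/(a\nagent^2\epsilon)\}$. Hence $\frac{1}{\step a}$ times the larger of the two logarithmic requirements is exactly the right-hand side of \eqref{eq:low-heterogeneity-sample-complexity}. Summing the four contributions of $\epsilon/4$ gives the claim.

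The main point needing care is that \Cref{cor:unified-low-heterogeneity} must apply for the chosen $\step$. Its hypotheses are $\step<\min\{\step_\infty,a/(16\bConst{A}^2)\}$, which the chosen step satisfies up to strict versus non-strict inequality and can be taken as given, and the admissibility of $\lambda$, which is provided by \Cref{lem:lambda-choice}. Apart from this, the argument is bookkeeping of constants.
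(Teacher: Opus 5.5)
Your proposal is correct and follows the paper's proof essentially step by step: you set $\mixingparam=1/\nagent$ in \Cref{cor:unified-low-heterogeneity} and force each of the four resulting terms below $\epsilon/4$, using the step-size restrictions for the two stationary terms and $1-x\le e^{-x}$ for the two transient ones. Your explicit choice of $\step$ equal to the minimum in the step-size condition, so that $1/(\step a)$ coincides with the stated prefactor in the bound on $T$, is left implicit in the paper's closing ``combining'' sentence; it is needed for the bound on $T$ as stated, and your remark on nonpositive logarithms is a harmless extra.
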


\begin{proof}
Setting $\mixingparam=1/\nagent$ in
\Cref{cor:unified-low-heterogeneity} gives
\begin{align}
\frac1{\nagent}\sum_{c=1}^{\nagent}
\PE\|\projerror{c}{t}\|_2^2
&\leq
(1-\step a)^t
\|\avgprojerror{0}\|_2^2 +
4
\left(
1-\frac{\step a}{2\nagent}
\right)^t
\left[
\normdisagreement{0}
+
\frac{4\hgty^2}{a^2\nagent}
\|\avgprojerror{0}\|_2^2
\right] +
\frac{36\step}{a\nagent}\avgnoise^2
+
\frac{24\step\bConst{A}^2}{a\nagent^2}
\|\avgprojerror{0}\|_2^2.
\label{eq:low-heterogeneity-beta-N}
\end{align}
We require each of the four terms on the right-hand side to be at
most $\epsilon/4$. First, the two non-vanishing terms are bounded by $\epsilon/4$ whenever
\begin{align}
\frac{36\step}{a\nagent}\avgnoise^2
\leq\frac{\epsilon}{4} \eqsp, \quad 
\frac{24\step\bConst{A}^2}{a\nagent^2}
\|\avgprojerror{0}\|_2^2
\leq\frac{\epsilon}{4}.
\end{align}
These two conditions are respectively equivalent to
\begin{align}
\step
\leq
\frac{a\nagent\epsilon}{144\avgnoise^2},
\qquad
\step
\leq
\frac{a\nagent^2\epsilon}
{96\bConst{A}^2\|\avgprojerror{0}\|_2^2}.
\end{align}
Combining them with the original step-size condition gives
\eqref{eq:low-heterogeneity-sample-step}.

It remains to control the two transient terms. Using
$1-x\leq e^{-x}$, we have
\begin{align}
(1-\step a)^T
\|\avgprojerror{0}\|_2^2
&\leq
e^{-\step aT}
\|\avgprojerror{0}\|_2^2.
\end{align}
Thus, the first term is at most $\epsilon/4$ provided
\begin{align}
T
\geq
\frac1{\step a}
\log\left(
\frac{4\|\avgprojerror{0}\|_2^2}{\epsilon}
\right).
\label{eq:low-complexity-fast}
\end{align}
Similarly,
\begin{align}
\nonumber
4
\left(
1-\frac{\step a}{2\nagent}
\right)^T
\left[
\normdisagreement{0}
+
\frac{4\hgty^2}{a^2\nagent}
\|\avgprojerror{0}\|_2^2
\right]
\leq
4
\exp\left(
-\frac{\step aT}{2\nagent}
\right)
\left[
\normdisagreement{0}
+
\frac{4\hgty^2}{a^2\nagent}
\|\avgprojerror{0}\|_2^2
\right].
\end{align}
Hence the second transient term is at most $\epsilon/4$ whenever
\begin{align}
T
\geq
\frac{2\nagent}{\step a}
\log\left(
\frac{
16\left(
\normdisagreement{0}
+
\frac{4\hgty^2}{a^2\nagent}
\|\avgprojerror{0}\|_2^2
\right)
}{\epsilon}
\right).
\label{eq:low-complexity-slow}
\end{align}
Combining \eqref{eq:low-heterogeneity-sample-step},
\eqref{eq:low-complexity-fast}, and
\eqref{eq:low-complexity-slow}
gives \eqref{eq:low-heterogeneity-sample-complexity} which concludes the proof.
\end{proof}

\begin{corollary}[Sample complexity in the high-heterogeneity regime]
\label{cor:high-heterogeneity-sample-complexity}
Under the assumptions of
\Cref{cor:unified-high-heterogeneity}, assume that $\hgty^2\geq\frac{a^2}{4}$, and set
$\mixingparam=\frac1{\nagent}.$
Let $\epsilon>0$ and assume that
\begin{align}
\step
\leq
\min\left\{
\step_\infty,\;
\frac{a}{16\bConst{A}^2},\;
\frac{a^3\nagent\epsilon}
{168\hgty^2\avgnoise^2}
\right\}.
\label{eq:high-heterogeneity-step-complexity}
\end{align}
Then \perlsa\, achieves
\begin{align}
\frac1{\nagent}\sum_{c=1}^{\nagent}
\PE\|\projerror{c}{T}\|_2^2
\leq\epsilon
\end{align}
provided
\begin{align}
\nonumber
T
\geq
\frac1a
\max\left\{
\frac1{\step_\infty},
\frac{16\bConst{A}^2}{a},
\frac{168\hgty^2\avgnoise^2}
{a^3\nagent\epsilon}
\right\}
\max\Bigg\{
&
\log\left(
\frac{3\|\avgprojerror{0}\|_2^2}{\epsilon}
\right),
\\
&
2\nagent
\log\left(
\frac{
48\hgty^2
\left(
\normdisagreement{0}
+\nagent^{-1}\|\avgprojerror{0}\|_2^2
\right)
}{
a^2\epsilon
}
\right)
\Bigg\}.
\label{eq:high-heterogeneity-sample-complexity}
\end{align}
\end{corollary}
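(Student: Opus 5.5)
The argument mirrors the proof of \Cref{cor:low-heterogeneity-sample-complexity}. We start from \Cref{cor:unified-high-heterogeneity}, which applies because $\hgty^2\geq a^2/4$, and substitute $\mixingparam=1/\nagent$. For every $t\geq0$ this gives
\begin{align}
\nonumber
\frac1{\nagent}\sum_{c=1}^{\nagent}
\PE\|\projerror{c}{t}\|_2^2
\leq
(1-\step a)^t\|\avgprojerror{0}\|_2^2
+
16\frac{\hgty^2}{a^2}
\Bigl(1-\frac{\step a}{2\nagent}\Bigr)^t
\Bigl(\normdisagreement{0}+\nagent^{-1}\|\avgprojerror{0}\|_2^2\Bigr)
+
\frac{56\hgty^2\step}{a^3\nagent}\avgnoise^2 ,
\end{align}
using $24/\nagent+32/\nagent=56/\nagent$. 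We then require each of the three terms to be at most $\epsilon/3$.

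The stationary noise term is at most $\epsilon/3$ exactly when $\step\leq a^3\nagent\epsilon/(168\hgty^2\avgnoise^2)$, since $3\cdot 56=168$. Together with the standing constraints $\step\leq\min\{\step_\infty,a/(16\bConst{A}^2)\}$ inherited from \Cref{thm:unified-all-heterogeneity}, this is precisely \eqref{eq:high-heterogeneity-step-complexity}.

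For the transient terms we use $1-x\leq e^{-x}$.
\begin{itemize}
\item The fast consensus term satisfies $e^{-\step aT}\|\avgprojerror{0}\|_2^2\leq\epsilon/3$ as soon as $T\geq(\step a)^{-1}\log(3\|\avgprojerror{0}\|_2^2/\epsilon)$.
\item The slow personalization term is at most $16(\hgty^2/a^2)\,e^{-\step aT/(2\nagent)}\bigl(\normdisagreement{0}+\nagent^{-1}\|\avgprojerror{0}\|_2^2\bigr)$. This is at most $\epsilon/3$ as soon as $T\geq \frac{2\nagent}{\step a}\log\bigl(48\hgty^2(\normdisagreement{0}+\nagent^{-1}\|\avgprojerror{0}\|_2^2)/(a^2\epsilon)\bigr)$, since $3\cdot16=48$.
\end{itemize}
Both conditions hold whenever $T\geq (\step a)^{-1}\max\{\log(\cdot),2\nagent\log(\cdot)\}$.

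To express this in the stated form, we take $\step$ equal to the minimum in \eqref{eq:high-heterogeneity-step-complexity}. Then
\[
1/\step=\max\bigl\{1/\step_\infty,\;16\bConst{A}^2/a,\;168\hgty^2\avgnoise^2/(a^3\nagent\epsilon)\bigr\},
\]
so $(\step a)^{-1}$ equals the first maximum in \eqref{eq:high-heterogeneity-sample-complexity}, and the bound on $T$ follows.

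There is no real obstacle here. The only care needed is in the constant bookkeeping: the $24\mixingparam+32/\nagent$ prefactor collapses to $56/\nagent$, and the $\epsilon/3$ split must be applied consistently across the three terms. It should also be noted that if the logarithms are negative, the corresponding conditions hold trivially.
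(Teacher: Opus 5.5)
Your proof is correct and follows the paper's argument: you specialize \Cref{cor:unified-high-heterogeneity} to $\mixingparam=1/\nagent$ (so that $24\mixingparam+32/\nagent$ collapses to $56/\nagent$), require each of the three terms to be at most $\epsilon/3$, and invert the two exponential bounds via $1-x\leq e^{-x}$. You also take $\step$ equal to the minimum in the step-size condition, so that $(\step a)^{-1}$ coincides with the first maximum in the bound on $T$; the paper leaves this step implicit, and it is what is needed to read the stated bound on $T$ literally.
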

\begin{proof}
Setting $\mixingparam=1/\nagent$ in
\Cref{cor:unified-high-heterogeneity} gives
\begin{align}
\frac1{\nagent}\sum_{c=1}^{\nagent}
\PE\|\projerror{c}{t}\|_2^2
\leq
(1-\step a)^t
\|\avgprojerror{0}\|_2^2 +
16\frac{\hgty^2}{a^2}
\left(
1-\frac{\step a}{2\nagent}
\right)^t
\left(
\normdisagreement{0}
+\frac1{\nagent}\|\avgprojerror{0}\|_2^2
\right)
+
\frac{56\step\hgty^2}{a^3\nagent}
\avgnoise^2.
\label{eq:high-heterogeneity-beta-N}
\end{align}
Indeed,
\begin{align}
\frac{\hgty^2}{a^2}
\cdot\frac{\step}{a}
\left[
24\mixingparam+\frac{32}{\nagent}
\right]
=
\frac{56\step\hgty^2}{a^3\nagent}.
\end{align}

To guarantee an accuracy $\epsilon$, we require each of the three
terms on the right-hand side of
\eqref{eq:high-heterogeneity-beta-N} to be at most $\epsilon/3$.

We first control the non-vanishing variance term. The condition
\begin{align}
\frac{56\step\hgty^2}{a^3\nagent}
\avgnoise^2
\leq
\frac{\epsilon}{3}
\end{align}
is equivalent to
\begin{align}
\step
\leq
\frac{a^3\nagent\epsilon}
{168\hgty^2\avgnoise^2}.
\end{align}
Combining this requirement with the original step-size condition gives
\eqref{eq:high-heterogeneity-step-complexity}.

It remains to control the two transient terms. Since
$1-x\leq e^{-x}$,
\begin{align}
(1-\step a)^T
\|\avgprojerror{0}\|_2^2
&\leq
e^{-\step aT}
\|\avgprojerror{0}\|_2^2.
\end{align}
Thus, the first transient term is at most $\epsilon/3$ whenever
\begin{align}
T
\geq
\frac1{\step a}
\log\left(
\frac{3\|\avgprojerror{0}\|_2^2}{\epsilon}
\right).
\label{eq:high-fast-complexity}
\end{align}
Similarly,
\begin{align}
\nonumber
16\frac{\hgty^2}{a^2}
\left(
1-\frac{\step a}{2\nagent}
\right)^T
\left(
\normdisagreement{0}
+\frac1{\nagent}\|\avgprojerror{0}\|_2^2
\right)
\leq
16\frac{\hgty^2}{a^2}
\exp\left(
-\frac{\step aT}{2\nagent}
\right)
\left(
\normdisagreement{0}
+\frac1{\nagent}\|\avgprojerror{0}\|_2^2
\right).
\end{align}
Therefore, this term is at most $\epsilon/3$ whenever
\begin{align}
T
\geq
\frac{2\nagent}{\step a}
\log\left(
\frac{
48\hgty^2
\left(
\normdisagreement{0}
+\nagent^{-1}\|\avgprojerror{0}\|_2^2
\right)
}{
a^2\epsilon
}
\right).
\label{eq:high-slow-complexity}
\end{align}
Combining
\eqref{eq:high-heterogeneity-step-complexity}, \eqref{eq:high-fast-complexity}, and
\eqref{eq:high-slow-complexity} concludes the proof.
\end{proof}

\section{Application to Personalized Federated TD-Learning}
We now specialize the preceding results
to personalized federated TD learning. Recall that, for each agent
$c\in[N]$, the local projected Bellman equation can be written as
\[
\bar A_c\theta_c^\star=\bar b_c,
\]
where
\[
\bar A_c
=
\mathbb E_{s\sim\mu_c,\,s'\sim P^{\pi,c}(\cdot\mid s)}
\left[
\phi(s)\{\phi(s)-\gamma\phi(s')\}^{\top}
\right],
\qquad
\bar b_c
=
\mathbb E_{s\sim\mu_c,\,a\sim\pi(\cdot\mid s)}
\left[
\phi(s)r_c(s,a)
\right].
\]
We denote
\[
\bar A=\frac1N\sum_{c=1}^N\bar A_c,
\qquad
\zeta_{\rm TD}
:=
\max_{c\in[N]}\|\bar A_c-\bar A\|,
\]
and
\[
\bar\sigma_{\varepsilon,{\rm TD}}^2
:=
\frac1N\sum_{c=1}^N\trace(\Sigma_c^\varepsilon).
\]
Under \Cref{assum:generative_model} and \Cref{assum:feature_design}, Claim \ref{prop:condition_check} gives
\[
\bar\sigma_{\varepsilon,{\rm TD}}^2
\leq
2(1+\gamma)^2
\left(
1+\frac1N\sum_{c=1}^N\|\theta_c^\star\|_2^2
\right).
\]

\begin{corollary}[Personalized federated TD: high-heterogeneity regime]
\label{cor:pf-td-high-heterogeneity}
Assume \Cref{assum:generative_model} and \Cref{assum:feature_design}. Let
\[
\eta
\in
\left(
0,
\min\left\{
\frac{1-\gamma}{4},
\frac{(1-\gamma)\nu}{32(1+\gamma)^2}
\right\}
\right),
\]
and let $0<\beta\leq1$. Assume that
$
\zeta_{\rm TD}^2
\geq (1-\gamma)^2\nu^2/16.$
Then, for every $t\geq0$,
\begin{align}
\nonumber
\frac1N\sum_{c=1}^N
\PE\|\theta_c^t-\theta_c^\star\|_2^2
&\leq
\left(
1-\frac{\eta(1-\gamma)\nu}{2}
\right)^t
\|\bar e^0\|_2^2
\\
\nonumber
&\quad+
\frac{64\zeta_{\rm TD}^2}
{(1-\gamma)^2\nu^2}
\left(
1-\frac{\eta\beta(1-\gamma)\nu}{4}
\right)^t
\left(
D^0+\beta\|\bar e^0\|_2^2
\right)
\\
&\quad+
\frac{\eta\zeta_{\rm TD}^2}
{(1-\gamma)^3\nu^3}
\left(
192\beta+\frac{256}{N}
\right)
\bar\sigma_{\varepsilon,{\rm TD}}^2.
\label{eq:pf-td-high-heterogeneity}
\end{align}
\end{corollary}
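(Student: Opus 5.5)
The plan is to obtain the statement as a direct specialization of \Cref{cor:unified-high-heterogeneity}. We substitute the TD constants supplied by Claim~\ref{prop:condition_check}: $a=(1-\gamma)\nu/2$, $\bConst{A}=1+\gamma$ and $\step_\infty=(1-\gamma)/4$. First we check that Claim~\ref{prop:condition_check} yields \Cref{assum:exp_stability} and \Cref{assum:noise-level-flsa} with these constants. Then we translate the step-size condition: $\min\{\step_\infty, a/(16\bConst{A}^2)\}$ becomes $\min\{(1-\gamma)/4,\,(1-\gamma)\nu/(32(1+\gamma)^2)\}$, which is exactly the range assumed for $\eta$. Likewise, the heterogeneity condition $\hgty^2\ge a^2/4$ becomes $\zeta_{\rm TD}^2\ge(1-\gamma)^2\nu^2/16$, with $\hgty=\zeta_{\rm TD}$ and $\avgnoise^2=\bar\sigma^2_{\varepsilon,{\rm TD}}$ by definition. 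The choice $\lambda=1$ is admissible by \Cref{lem:lambda-choice}, since $4\hgty^2/a^2\ge1$ saturates the minimum; this is what the proof of \Cref{cor:unified-high-heterogeneity} already uses.

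Next we plug in term by term, using $\hgty^2/a^2 = 4\zeta_{\rm TD}^2/((1-\gamma)^2\nu^2)$.
\begin{itemize}
\item The transient prefactor $16\hgty^2/a^2$ becomes $64\zeta_{\rm TD}^2/((1-\gamma)^2\nu^2)$.
\item The slow rate $1-\step\mixingparam a/2$ becomes $1-\eta\beta(1-\gamma)\nu/4$.
\item The noise term $\frac{\hgty^2}{a^2}\frac{\step}{a}[24\beta+32/N]$ becomes $\frac{8\eta\zeta_{\rm TD}^2}{(1-\gamma)^3\nu^3}[24\beta+32/N]$, which equals $\frac{\eta\zeta_{\rm TD}^2}{(1-\gamma)^3\nu^3}(192\beta+256/N)$. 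This matches the claimed constants.
\item The fast term $(1-\step a)^t$ becomes $(1-\eta(1-\gamma)\nu/2)^t$, again as stated.
\end{itemize}

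The only genuine obstacle is verifying that Claim~\ref{prop:condition_check} truly delivers \Cref{assum:exp_stability} in the form used, namely the mean-square contraction $(1-\step a)$ for all $\step<\step_\infty$ with $\step_\infty a\le1/2$. This holds because $\step_\infty a=(1-\gamma)^2\nu/8\le1/2$ when $\nu\le1$, which follows from $\|\varphi\|\le1$. We take the contraction itself as given from \cite{mangold2024scafflsa}. We also check that the bound $\sup\|\zmfuncA[c]{z}\|\le 1+\gamma$ used for $\bConst{A}$ is consistent; if it is only $2(1+\gamma)$, the constants would change, so we rely on the claim as stated. The finiteness of the noise second moment follows from the trace bound in the claim. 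With these checks in place, the corollary follows immediately.
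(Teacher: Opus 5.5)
Your proposal is correct and follows the route the paper intends. The paper gives no separate proof; the statement is obtained by substituting $a=(1-\gamma)\nu/2$, $\bConst{A}=1+\gamma$ and $\step_\infty=(1-\gamma)/4$ from Claim~\ref{prop:condition_check} into \Cref{cor:unified-high-heterogeneity} with $\lambda=1$, and your constants ($64$, $192$, $256$, and both rates) all check out.

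Your concern about $\sup_z\|\zmfuncA[c]{z}\|$ is harmless. The appendix bounds only need $\|\bA[c]\|\le 1+\gamma$ and $\PE\|(\shnfuncA[c]{t+1}-\bA[c])e\|^2\le\PE\|\shnfuncA[c]{t+1}e\|^2\le(1+\gamma)^2\|e\|^2$. Both follow from $\sup_z\|\nfuncA[c]{z}\|\le1+\gamma$.
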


\begin{corollary}[Personalized federated TD: low-heterogeneity regime]
\label{cor:pf-td-low-heterogeneity}
Assume \Cref{assum:generative_model} and \Cref{assum:feature_design}. Let
\[
\eta
\in
\left(
0,
\min\left\{
\frac{1-\gamma}{4},
\frac{(1-\gamma)\nu}{32(1+\gamma)^2}
\right\}
\right),
\]
and let $0<\beta\leq1$. Assume that
$ \zeta_{\rm TD}^2
< (1-\gamma)^2\nu^2/16.$
Then, for every $t\geq0$,
\begin{align}
\nonumber
\frac1N\sum_{c=1}^N
\PE\|\theta_c^t-\theta_c^\star\|_2^2
&\leq
\left(
1-\frac{\eta(1-\gamma)\nu}{2}
\right)^t
\|\bar e^0\|_2^2
\\
\nonumber
&\quad+
4
\left(
1-\frac{\eta\beta(1-\gamma)\nu}{4}
\right)^t
\left[
D^0
+
\frac{16\beta\zeta_{\rm TD}^2}
{(1-\gamma)^2\nu^2}
\|\bar e^0\|_2^2
\right]
\\
\nonumber
&\quad+
\frac{12\eta}{(1-\gamma)\nu}
\left(
\beta+\frac5N
\right)
\bar\sigma_{\varepsilon,{\rm TD}}^2
+
\frac{48\eta\beta^2(1+\gamma)^2}
{(1-\gamma)\nu}
\|\bar e^0\|_2^2.
\label{eq:pf-td-low-heterogeneity}
\end{align}
\end{corollary}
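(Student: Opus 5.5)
This is a direct instantiation of \Cref{cor:unified-low-heterogeneity} with the TD constants of Claim~\ref{prop:condition_check}. The plan is to substitute
\[
a=\tfrac{(1-\gamma)\nu}{2},\qquad \bConst{A}=1+\gamma,\qquad \step_\infty=\tfrac{1-\gamma}{4},
\]
check that every hypothesis of the general corollary holds, and then translate each of its four terms.

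First, the hypotheses. By Claim~\ref{prop:condition_check}, the TD(0) updates satisfy \Cref{assum:exp_stability} and \Cref{assum:noise-level-flsa} under \Cref{assum:generative_model} and \Cref{assum:feature_design}. The step-size condition of \Cref{cor:unified-low-heterogeneity} is $\step<\min\{\step_\infty,a/(16\bConst{A}^2)\}$. With the constants above,
\[
\frac{a}{16\bConst{A}^2}=\frac{(1-\gamma)\nu}{32(1+\gamma)^2},
\]
which is exactly the stated range for $\eta$. The heterogeneity condition $\hgty^2<a^2/4$ becomes $\zeta_{\rm TD}^2<(1-\gamma)^2\nu^2/16$, since $\hgty=\zeta_{\rm TD}$ by definition and $a^2/4=(1-\gamma)^2\nu^2/16$. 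Finally, $\avgnoise^2=\bar\sigma_{\varepsilon,{\rm TD}}^2$.

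Next, the substitution, term by term.
\begin{itemize}
\item Fast clock: $(1-\step a)^t$ becomes $\bigl(1-\eta(1-\gamma)\nu/2\bigr)^t$.
\item Slow clock: $(1-\step\mixingparam a/2)^t$ becomes $\bigl(1-\eta\beta(1-\gamma)\nu/4\bigr)^t$.
\item Heterogeneity factor inside the bracket: $4\mixingparam\hgty^2/a^2=16\beta\zeta_{\rm TD}^2/((1-\gamma)^2\nu^2)$.
\item Noise term: $6\step/a=12\eta/((1-\gamma)\nu)$, so $\tfrac{6\step}{a}(\mixingparam+5/\nagent)\avgnoise^2$ becomes $\tfrac{12\eta}{(1-\gamma)\nu}(\beta+5/N)\bar\sigma_{\varepsilon,{\rm TD}}^2$.
\item Last term: $24\step\mixingparam^2\bConst{A}^2/a=48\eta\beta^2(1+\gamma)^2/((1-\gamma)\nu)$.
\end{itemize}
Together these reproduce the displayed bound exactly, term by term.

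The only real obstacle is bookkeeping: the factor $2$ from $a=(1-\gamma)\nu/2$ has to be propagated correctly through every constant. I will check each term separately as above. Note that no factor of $a$ sits inside the exponent that could create a mismatch: the clocks become exactly $\eta(1-\gamma)\nu/2$ and $\eta\beta(1-\gamma)\nu/4$.

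If one wants to remove $\theta^\star$-dependence from the noise, the bound on $\bar\sigma_{\varepsilon,{\rm TD}}^2$ stated just before the corollary can be inserted afterward. It is not needed for the statement.
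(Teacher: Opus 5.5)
Your proposal is correct and follows the paper's intended route. The paper gives no separate proof of this corollary; it is meant as the direct substitution of $a=(1-\gamma)\nu/2$, $\bConst{A}=1+\gamma$, $\step_\infty=(1-\gamma)/4$ and $\hgty=\zeta_{\rm TD}$ from Claim~\ref{prop:condition_check} into Corollary~\ref{cor:unified-low-heterogeneity}, and your checks of the step-size range, the threshold $a^2/4=(1-\gamma)^2\nu^2/16$, both clocks, and all translated constants are accurate.
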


\begin{corollary}[Sample complexity of personalized federated TD:
low-heterogeneity regime]
\label{cor:pf-td-low-sample-complexity}
Assume \Cref{assum:generative_model} and \Cref{assum:feature_design}, and suppose that $
\zeta_{\rm TD}^2
< (1-\gamma)^2\nu^2/16,$
and set $\beta=1/N$.
Let $\epsilon>0$ and choose the step size such that
\begin{align}
\eta
\leq
\min\left\{
\frac{1-\gamma}{4},
\frac{(1-\gamma)\nu}{32(1+\gamma)^2},
\frac{(1-\gamma)\nu N\epsilon}
{288\bar\sigma_{\varepsilon,{\rm TD}}^2},
\frac{(1-\gamma)\nu N^2\epsilon}
{192(1+\gamma)^2\|\bar e^0\|_2^2}
\right\}.
\label{eq:pf-td-low-step-complexity}
\end{align}
Then \perlsa\, satisfies
\begin{align}
\frac1N\sum_{c=1}^N
\PE\|\theta_c^T-\theta_c^\star\|_2^2
\leq\epsilon
\end{align}
provided
\begin{align}
\nonumber
T
\gtrsim
\frac{2}{(1-\gamma)\nu}
\max\Bigg\{
&
\frac{4}{1-\gamma},
\frac{32(1+\gamma)^2}{(1-\gamma)\nu},
\frac{288\bar\sigma_{\varepsilon,{\rm TD}}^2}
{(1-\gamma)\nu N\epsilon},
\frac{
192(1+\gamma)^2\|\bar e^0\|_2^2
}{
(1-\gamma)\nu N^2\epsilon
}
\Bigg\}
\\
&\qquad\times
\max\Bigg\{
\log\left(
\frac{4\|\bar e^0\|_2^2}{\epsilon}
\right),
2N
\log\left(
\frac{
16\left(
D^0+
\frac{
16\zeta_{\rm TD}^2
}{
(1-\gamma)^2\nu^2N
}
\|\bar e^0\|_2^2
\right)
}{
\epsilon
}
\right)
\Bigg\}.
\label{eq:pf-td-low-sample-complexity}
\end{align}
\end{corollary}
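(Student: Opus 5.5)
This corollary follows from \Cref{cor:low-heterogeneity-sample-complexity} once we plug in the TD constants. The plan is to substitute the values of Claim~\ref{prop:condition_check}, namely $a=(1-\gamma)\nu/2$, $\bConst{A}=1+\gamma$ and $\step_\infty=(1-\gamma)/4$, with $\hgty=\zeta_{\rm TD}$ and $\avgnoise^2=\bar\sigma_{\varepsilon,{\rm TD}}^2$. We then check that every hypothesis of \Cref{cor:low-heterogeneity-sample-complexity} holds, and translate its step-size and iteration conditions.

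First, the low-heterogeneity condition $\hgty^2<a^2/4$ becomes $\zeta_{\rm TD}^2<(1-\gamma)^2\nu^2/16$, which is exactly what is assumed. Second, each of the four step-size thresholds of \eqref{eq:low-heterogeneity-sample-step} is rewritten in TD form:
\begin{itemize}
\item $\step_\infty=(1-\gamma)/4$;
\item $a/(16\bConst{A}^2)=(1-\gamma)\nu/(32(1+\gamma)^2)$;
\item $a\nagent\epsilon/(144\avgnoise^2)=(1-\gamma)\nu N\epsilon/(288\bar\sigma_{\varepsilon,{\rm TD}}^2)$;
\item $a\nagent^2\epsilon/(96\bConst{A}^2\|\avgprojerror{0}\|_2^2)=(1-\gamma)\nu N^2\epsilon/(192(1+\gamma)^2\|\bar e^0\|_2^2)$.
\end{itemize}
These are exactly the four terms in \eqref{eq:pf-td-low-step-complexity}. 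So the step-size requirement of \Cref{cor:low-heterogeneity-sample-complexity} is met, and with $\beta=1/N$ that corollary yields $\frac1N\sum_c\PE\|\theta_c^T-\theta_c^\star\|_2^2\le\epsilon$ as soon as $T$ satisfies \eqref{eq:low-heterogeneity-sample-complexity}.

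Finally, we rewrite that iteration bound.
\begin{itemize}
\item The prefactor $1/a$ becomes $2/((1-\gamma)\nu)$.
\item In the first max, $1/\step_\infty=4/(1-\gamma)$ and $16\bConst{A}^2/a=32(1+\gamma)^2/((1-\gamma)\nu)$. Likewise $144\avgnoise^2/(a\nagent\epsilon)=288\bar\sigma^2_{\varepsilon,{\rm TD}}/((1-\gamma)\nu N\epsilon)$ and $96\bConst{A}^2\|\bar e^0\|^2/(a\nagent^2\epsilon)=192(1+\gamma)^2\|\bar e^0\|^2/((1-\gamma)\nu N^2\epsilon)$.
\item In the logarithms, $4\hgty^2/(a^2\nagent)=16\zeta_{\rm TD}^2/((1-\gamma)^2\nu^2N)$.
\end{itemize}
This reproduces \eqref{eq:pf-td-low-sample-complexity}; the $\gtrsim$ there even leaves slack.

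The hardest part is making the reduction rigorous rather than the algebra. One needs to confirm that Claim~\ref{prop:condition_check} really delivers \Cref{assum:exp_stability} and \Cref{assum:noise-level-flsa} for the i.i.d.\ TD(0) updates under \Cref{assum:generative_model}--\Cref{assum:feature_design}, with $\step_\infty a\le 1/2$ holding automatically. One also needs that $\zeta_{\rm TD}$ and $\bar\sigma_{\varepsilon,{\rm TD}}^2$ coincide with $\hgty$ and $\avgnoise^2$ for the TD matrices $\bA[c]$ of \eqref{eq:system_matrix}. Both are immediate from the definitions. After that, the proof is a one-line invocation of \Cref{cor:low-heterogeneity-sample-complexity} followed by the substitution above.
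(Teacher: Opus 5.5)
Your proposal is correct and follows the route the paper intends; the paper gives no separate proof of this corollary. You substitute $a=(1-\gamma)\nu/2$, $\bConst{A}=1+\gamma$, $\step_\infty=(1-\gamma)/4$, $\hgty=\zeta_{\rm TD}$ and $\avgnoise^2=\bar\sigma^2_{\varepsilon,{\rm TD}}$ from Claim~\ref{prop:condition_check} into \Cref{cor:low-heterogeneity-sample-complexity}, and all your constant conversions check out. One caveat you inherit from that corollary: its iteration bound actually comes from $T\geq(\step a)^{-1}\log(\cdot)$ and $T\geq 2\nagent(\step a)^{-1}\log(\cdot)$, so the displayed $T$-condition suffices only when $\step$ is taken equal to (a constant fraction of) the stated minimum, not for arbitrarily smaller $\step$.
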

\begin{corollary}[Sample and communication complexity of personalized federated TD:
high-heterogeneity regime]
\label{cor:pf-td-high-sample-complexity}
Assume \Cref{assum:generative_model} and \Cref{assum:feature_design}, and suppose that
$
\zeta_{\rm TD}^2
\geq (1-\gamma)^2\nu^2/16$, and set $
\beta=1/N.$
Let $\epsilon>0$ and choose the step size such that
\begin{align}
\eta
\leq
\min\left\{
\frac{1-\gamma}{4},
\frac{(1-\gamma)\nu}{32(1+\gamma)^2},
\frac{
(1-\gamma)^3\nu^3N\epsilon
}{
1344\zeta_{\rm TD}^2
\bar\sigma_{\varepsilon,{\rm TD}}^2
}
\right\}.
\label{eq:pf-td-high-step-complexity}
\end{align}
Then \perlsa\, satisfies
\begin{align}
\frac1N\sum_{c=1}^N
\PE\|\theta_c^T-\theta_c^\star\|_2^2
\leq\epsilon
\end{align}
provided
\begin{align}
\nonumber
T
\gtrsim
\frac{2}{(1-\gamma)\nu}
\max\Bigg\{
&
\frac{4}{1-\gamma},
\frac{32(1+\gamma)^2}{(1-\gamma)\nu},
\frac{
1344\zeta_{\rm TD}^2
\bar\sigma_{\varepsilon,{\rm TD}}^2
}{
(1-\gamma)^3\nu^3N\epsilon
}
\Bigg\}
\\
&\qquad\times
\max\Bigg\{
\log\left(
\frac{3\|\bar e^0\|_2^2}{\epsilon}
\right),
2N
\log\left(
\frac{
192\zeta_{\rm TD}^2
\left(
D^0+\frac1N\|\bar e^0\|_2^2
\right)
}{
(1-\gamma)^2\nu^2\epsilon
}
\right)
\Bigg\}.
\label{eq:pf-td-high-sample-complexity}
\end{align}
\end{corollary}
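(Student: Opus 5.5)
The plan is to derive this from the TD high-heterogeneity bound, \Cref{cor:pf-td-high-heterogeneity}, with $\beta=1/N$. The argument mirrors the proof of \Cref{cor:high-heterogeneity-sample-complexity}. One could instead plug the constants of Claim~\ref{prop:condition_check} ($\bConst{A}=1+\gamma$, $a=(1-\gamma)\nu/2$, $\step_\infty=(1-\gamma)/4$) directly into \Cref{cor:high-heterogeneity-sample-complexity}. However, the TD corollary already carries the explicit constants: contraction rates $\eta(1-\gamma)\nu/2$ and $\eta\beta(1-\gamma)\nu/4$, and prefactors $64$ and $192\beta+256/N$. Working from it therefore keeps the bookkeeping consistent with \eqref{eq:pf-td-high-step-complexity}.

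First, I will set $\beta=1/N$ in \eqref{eq:pf-td-high-heterogeneity}. The variance term becomes
\[
\frac{448\,\eta\zeta_{\rm TD}^2}{(1-\gamma)^3\nu^3N}\bar\sigma_{\varepsilon,{\rm TD}}^2 .
\]
The second condition in the high-heterogeneity hypothesis, $\zeta_{\rm TD}^2\ge(1-\gamma)^2\nu^2/16$, is precisely $\zeta^2\ge a^2/4$ with $a=(1-\gamma)\nu/2$. This is what legitimises the use of \Cref{cor:pf-td-high-heterogeneity}, together with the base step-size range $\eta<\min\{(1-\gamma)/4,(1-\gamma)\nu/(32(1+\gamma)^2)\}$.

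Next, I will split $\epsilon$ into three parts of $\epsilon/3$ each.
\begin{enumerate}
\item \emph{Variance term.} This term is at most $\epsilon/3$ exactly when $\eta\le (1-\gamma)^3\nu^3N\epsilon/(1344\,\zeta_{\rm TD}^2\bar\sigma_{\varepsilon,{\rm TD}}^2)$, since $3\cdot 448=1344$. This explains the third entry of \eqref{eq:pf-td-high-step-complexity}.
\item \emph{Fast transient.} Using $1-x\le e^{-x}$, the term $(1-\eta(1-\gamma)\nu/2)^T\|\bar e^0\|^2$ is at most $\epsilon/3$ once
\[
T\ge \frac{2}{\eta(1-\gamma)\nu}\log\bigl(3\|\bar e^0\|^2/\epsilon\bigr).
\]
\item \emph{Slow transient.} Similarly, the term $\frac{64\zeta_{\rm TD}^2}{(1-\gamma)^2\nu^2}(1-\eta(1-\gamma)\nu/(4N))^T(D^0+N^{-1}\|\bar e^0\|^2)$ is at most $\epsilon/3$ once
\[
T\ge \frac{4N}{\eta(1-\gamma)\nu}\log\Bigl(\frac{192\,\zeta_{\rm TD}^2(D^0+N^{-1}\|\bar e^0\|^2)}{(1-\gamma)^2\nu^2\epsilon}\Bigr).
\]
\end{enumerate}

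Finally, I will take $\eta$ equal to the minimum in \eqref{eq:pf-td-high-step-complexity}. This gives
\[
1/\eta=\max\Bigl\{\frac{4}{1-\gamma},\ \frac{32(1+\gamma)^2}{(1-\gamma)\nu},\ \frac{1344\zeta_{\rm TD}^2\bar\sigma^2}{(1-\gamma)^3\nu^3N\epsilon}\Bigr\}.
\]
Substituting this into the two horizon conditions of steps 2 and 3, and taking the larger of them, yields \eqref{eq:pf-td-high-sample-complexity}. The prefactor $2/((1-\gamma)\nu)$ combines with $\max\{\log(\cdot),2N\log(\cdot)\}$, and the extra factor $2$ in the slow rate is absorbed into the $2N$.

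There is no genuine obstacle here; the only delicate point is the constant bookkeeping. Specifically, I need to check that the halving of the rates in \Cref{cor:pf-td-high-heterogeneity} matches the leading factor $2/((1-\gamma)\nu)$ and the $2N$ multiplier. Since the statement is written with $\gtrsim$, any residual absolute-constant mismatch there is harmless.
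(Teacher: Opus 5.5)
Your proposal is correct and is essentially the paper's argument: the paper obtains this corollary by instantiating \Cref{cor:high-heterogeneity-sample-complexity} with the constants of Claim~\ref{prop:condition_check} ($a=(1-\gamma)\nu/2$, $\bConst{A}=1+\gamma$, $\step_\infty=(1-\gamma)/4$). You instead substitute these constants first (through \Cref{cor:pf-td-high-heterogeneity}) and then redo the same $\epsilon/3$ split, which gives identical constants: $3\cdot 448=8\cdot 168=1344$, $3\cdot 64=4\cdot 48=192$, and horizons $2/(\eta(1-\gamma)\nu)=1/(\eta a)$ and $4N/(\eta(1-\gamma)\nu)=2N/(\eta a)$.

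The only nit, shared with the paper, is that \Cref{cor:pf-td-high-heterogeneity} is stated for an open step-size interval, whereas you set $\eta$ equal to the minimum. Taking $\eta$ equal to half the minimum instead costs only a factor $2$, which the $\gtrsim$ absorbs.
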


\section{Technical Lemmas}
\begin{lemma}
\label{lem:sum_exponents}
Let $x,y \in \rset$ with $x \neq y$, and let $t \in \nset$. Then
\begin{align}
\sum_{k=0}^{t} x^k y^{\,t-k}=\frac{x^{t+1}-y^{t+1}}{x-y}.
\end{align}
\end{lemma}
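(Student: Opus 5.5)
The plan is to prove the geometric-sum identity of \Cref{lem:sum_exponents} by a telescoping argument: multiply the sum by $(x-y)$ and observe that almost all terms cancel. Set $S_t = \sum_{k=0}^{t} x^k y^{t-k}$. Then
\begin{align}
xS_t = \sum_{k=0}^{t} x^{k+1} y^{t-k} = \sum_{k=1}^{t+1} x^{k} y^{t+1-k}, \qquad yS_t = \sum_{k=0}^{t} x^{k} y^{t+1-k}.
\end{align}
Subtracting, the terms with indices $k=1,\dots,t$ appear in both sums with the same exponents and cancel. Only the $k=t+1$ term of the first sum, $x^{t+1}$, and the $k=0$ term of the second sum, $y^{t+1}$, survive. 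This gives $(x-y)S_t = x^{t+1}-y^{t+1}$.

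Because $x\neq y$, we may divide by $x-y$ and conclude $S_t = (x^{t+1}-y^{t+1})/(x-y)$.

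An alternative route is induction on $t$. The base case $t=0$ reads $1 = (x-y)/(x-y)$. For the inductive step, use $S_{t+1} = yS_t + x^{t+1}$. Substituting the inductive hypothesis gives
\begin{align}
\frac{y x^{t+1} - y^{t+2} + x^{t+2} - x^{t+1}y}{x-y} = \frac{x^{t+2}-y^{t+2}}{x-y},
\end{align}
which is the claimed formula at $t+1$.

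There is no real obstacle here. The only point requiring care is the index shift in the telescoping step, so that the surviving boundary terms are identified correctly. In the paper this lemma is applied with $x = 1-\step\mixingparam a/2$ and $y = 1-\step a$, after shifting $t\mapsto t-1$; there $x\neq y$ holds because $\mixingparam\leq 1<2$.
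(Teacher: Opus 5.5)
Your proposal is correct and follows essentially the same route as the paper: multiply the sum by $(x-y)$, reindex $\sum_{k=0}^{t} x^{k+1}y^{t-k}$ as $\sum_{k=1}^{t+1} x^{k}y^{t+1-k}$, let the interior terms cancel to leave $x^{t+1}-y^{t+1}$, and divide by $x-y\neq 0$. Your induction alternative is also correct, as is your remark that $x\neq y$ in the paper's application because $\beta\leq 1<2$.
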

\begin{proof}
We write
\[
(x-y)\sum_{k=0}^{t} x^k y^{\,t-k}
=
\sum_{k=0}^{t} x^{k+1}y^{\,t-k}
-
\sum_{k=0}^{t} x^k y^{\,t+1-k}.
\]
Reindexing the first sum yields
\[
\sum_{k=0}^{t} x^{k+1}y^{\,t-k}
=
\sum_{k=1}^{t+1} x^k y^{\,t+1-k}.
\]
Therefore,
\[
(x-y)\sum_{k=0}^{t} x^k y^{\,t-k}
=
\sum_{k=1}^{t+1} x^k y^{\,t+1-k}
-
\sum_{k=0}^{t} x^k y^{\,t+1-k}
=
x^{t+1}-y^{t+1}.
\]
Dividing by $x-y$ concludes the proof
\end{proof}

\begin{lemma}
\label{lem:pythagoras}
Let $(x_c)_{c=1}^\nagent$, and $(y_c)c_{c=1}^\nagent$ be $\nagent$ vectors of $\rset^{d}$. Denote $\bar{x}_\nagent = (1/\nagent) \sum_{c=1}^\nagent x_c$ and $\bar{y}_\nagent= (1/\nagent) \sum_{c=1}^\nagent y_c$. Then,
\[
\nagent \norm{\bar{x}_\nagent - \bar{y}_\nagent}^2 = \sum_{c=1}^\nagent \norm{x_c-y_c}^2
- \sum_{c=1}^{\nagent} \norm{x_c - \bar{x}_\nagent - (y_c - \bar{y}_\nagent)}^2
\]
\end{lemma}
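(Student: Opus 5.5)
The plan is a direct expansion. Set $u_c = x_c - y_c$ and $\bar u = \bar{x}_\nagent - \bar{y}_\nagent$. Averaging is linear, so $\bar u = \frac1\nagent\sum_{c=1}^\nagent u_c$. The summand in the second sum on the right-hand side can then be written as $x_c - \bar{x}_\nagent - (y_c - \bar{y}_\nagent) = u_c - \bar u$. The claim therefore reduces to the identity
\[
\nagent\norm{\bar u}^2 = \sum_{c=1}^\nagent \norm{u_c}^2 - \sum_{c=1}^\nagent \norm{u_c - \bar u}^2 .
\]
This is the same Pythagorean-type identity as \eqref{eq:pythagoras}, multiplied by $\nagent$ and applied to the vectors $x_c = u_c$.

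To prove it, expand each squared norm as
\[
\norm{u_c - \bar u}^2 = \norm{u_c}^2 - 2\pscal{u_c}{\bar u} + \norm{\bar u}^2,
\]
and sum over $c \in [\nagent]$. The key simplification is that the cross terms collapse:
\[
\sum_{c=1}^\nagent \pscal{u_c}{\bar u} = \pscal{\sum_{c=1}^\nagent u_c}{\bar u} = \nagent\norm{\bar u}^2 .
\]
Hence
\[
\sum_{c=1}^\nagent \norm{u_c - \bar u}^2 = \sum_{c=1}^\nagent \norm{u_c}^2 - 2\nagent\norm{\bar u}^2 + \nagent\norm{\bar u}^2 = \sum_{c=1}^\nagent \norm{u_c}^2 - \nagent\norm{\bar u}^2 .
\]
Rearranging gives the stated identity.

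There is no real obstacle. The only step needing care is the initial bookkeeping, namely checking that the centred differences combine into $u_c - \bar u$; this uses nothing beyond linearity of the empirical mean.
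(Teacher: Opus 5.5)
Your proof is correct, and it reaches the identity by a more elementary route than the paper. The paper stacks the vectors into $\operatorname{x}, \operatorname{y} \in \rset^{\nagent d}$. It then identifies $[\bar{x}_\nagent^\top,\dots,\bar{x}_\nagent^\top]^\top$ as the orthogonal projection $\operatorname{P}\operatorname{x}$ onto the consensus subspace $\mathcal{E}$, by checking that $\operatorname{x}-\operatorname{P}\operatorname{x} \perp \mathcal{E}$. Finally it reads off the identity from Pythagoras' theorem applied to $\operatorname{P}(\operatorname{x}-\operatorname{y})$ and $(\operatorname{I}-\operatorname{P})(\operatorname{x}-\operatorname{y})$, with $\norm{\operatorname{P}\operatorname{x}-\operatorname{P}\operatorname{y}}^2 = \nagent\norm{\bar{x}_\nagent-\bar{y}_\nagent}^2$.

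You instead reduce to the differences $u_c$ by linearity of the mean and expand the squares directly. Your cross-term collapse, equivalently $\sum_c \pscal{u_c-\bar u}{\bar u} = 0$, is exactly the orthogonality relation the paper verifies, written in coordinates. So the two arguments perform the same computation.

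Your version is fully self-contained and needs no projector or stacking. The paper's version makes explicit that the consensus/disagreement split $\frac1{\nagent}\sum_c\norm{e_c^t}^2 = \norm{\bar e^t}^2 + D^t$, used throughout the analysis, is an orthogonal decomposition of $\rset^{\nagent d}$. That viewpoint carries over verbatim to any other orthogonal projector.
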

\begin{proof}
Define $\operatorname{x}= [x_1^\top,\dots,x_\nagent^\top]^{\top}$ and
$\operatorname{y}= [y_1^\top,\dots,y_\nagent^\top]^{\top} \in \rset^{\nagent d}$.
Define by $\operatorname{P}$ the orthogonal projector on
\[
\mathcal{E}= \left\{ \operatorname{x} \in \rset^{\nagent d}: \operatorname{x}= [x^\top,\dots,x^\top]^{\top}, x \in \rset^d \right\} \eqsp.
\]
We  show that $\operatorname{P} \operatorname{x} = [\bar{x}_\nagent^\top, \dots, \bar{x}_{\nagent}^\top]^\top$. Note indeed that for any $\operatorname{z}=[z^\top,\dots,z^\top]^\top \in \mathcal{E}$, we get (with  a slight abuse of notations, $\pscal{\cdot}{\cdot}$ denotes the scalar product in $\rset^{\nagent d}$ and $\rset^d$)
\[
\pscal{\operatorname{x}-\operatorname{P} \operatorname{x}}{\operatorname{z}}=
\sum_{c=1}^\nagent \left\{ \pscal{x_c}{z} - \pscal{\bar{x}_\nagent}{z} \right\} = 0 \eqsp.
\]
The  proof follows from Pythagoras identity which shows that
\[
\norm{\operatorname{P} \operatorname{x} - \operatorname{P} \operatorname{y}}^2=
\norm{\operatorname{x}-\operatorname{y}}^2 -
\norm{(\operatorname{x}-\operatorname{P} \operatorname{x}) -(\operatorname{y}- \operatorname{P} \operatorname{y})}^2 )
\]
\eqsp.
\end{proof}

\begin{lemma}
\label{lem:bound_scalar_product}
Assume \Cref{assum:exp_stability} and \Cref{assum:noise-level-flsa}. For any $c \in [\nagent]$, and $x \in \rset^{d}$, and $\step \in (0, \step_{\infty})$, it holds that
\begin{align}
\pscal{x}{\bA[c]x} 
 \ge
  a \norm{x}^2 \eqsp.
\end{align}
\end{lemma}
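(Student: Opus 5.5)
The plan is to derive the inequality $\pscal{x}{\bA[c]x}\ge a\norm{x}^2$ directly from \Cref{assum:exp_stability}, in three steps: expand a second moment, compare it with the contraction bound, then let the step size go to zero.

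First, fix $c\in[\nagent]$, $x\in\rset^d$ and $\step\in(0,\step_\infty)$, and let $\State_0^c\sim\invariantQ_c$. Expanding the square gives
\begin{align}
\PE\bigl[\norm{(\Id-\step\nfuncA[c]{\State_0^c})x}^2\bigr] = \norm{x}^2 - 2\step\pscal{x}{\bA[c]x} + \step^2\PE\bigl[\norm{\nfuncA[c]{\State_0^c}x}^2\bigr] \eqsp.
\end{align}
Here we used $\PE[\nfuncA[c]{\State_0^c}]=\bA[c]$ from \Cref{assum:noise-level-flsa}. The last expectation is finite and at most $\bConst{A}^2\norm{x}^2$.

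Second, \Cref{assum:exp_stability} bounds the left-hand side by $(1-\step a)^2\norm{x}^2 = (1-2\step a+\step^2a^2)\norm{x}^2$. Cancelling $\norm{x}^2$ and dividing by $2\step>0$ yields
\begin{align}
\pscal{x}{\bA[c]x} \ge a\norm{x}^2 - \frac{\step}{2}\bigl(a^2\norm{x}^2 - \PE\bigl[\norm{\nfuncA[c]{\State_0^c}x}^2\bigr]\bigr) \ge a\norm{x}^2 - \frac{\step a^2}{2}\norm{x}^2 \eqsp.
\end{align}
The second inequality holds because the expectation term is nonnegative.

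Third, the assumption holds for every step size in $(0,\step_\infty)$, so the inequality is true for all such $\step$. The left-hand side does not depend on $\step$, so letting $\step\downarrow 0$ gives $\pscal{x}{\bA[c]x}\ge a\norm{x}^2$ for every admissible $\step$, which is the claim.

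The only delicate point is this limiting step. For a single fixed $\step$ we only get $a(1-\step a/2)\norm{x}^2$, so we must use that \Cref{assum:exp_stability} is uniform over $\step\in(0,\step_\infty)$. If one preferred to avoid the limit, the fixed-$\step$ bound would replace $a$ by $a(1-\step a/2)\ge 3a/4$ (using $\step a\le 1/2$), which would only change constants in the earlier arguments. The statement as written, however, follows from the limit.
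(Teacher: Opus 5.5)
Your proof is correct and is essentially the paper's argument. You expand $\PE[\norm{(\Id-\step\nfuncA[c]{\State_0^c})x}^2]$ and compare it with the squared contraction bound $(1-\step a)^2\norm{x}^2$ from \Cref{assum:exp_stability}, then divide by $2\step$ and let the step size tend to $0$, using that the assumption holds uniformly on $(0,\step_\infty)$. The only cosmetic difference is that you discard the nonnegative term $\frac{\step}{2}\PE[\norm{\nfuncA[c]{\State_0^c}x}^2]$ before passing to the limit, whereas the paper keeps it and lets it vanish by boundedness.
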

\begin{proof}
For any $c\in[\nagent]$,
$u\in\mathbb R^d$, and $0<\step'<\step_\infty$,
\Cref{assum:exp_stability} gives
\begin{align}
\PE\left[
\left\|
(I-\step'\shnfuncA[c]{t+1})u
\right\|_2^2
\right]
\leq
(1-\step'a)^2\|u\|_2^2.
\end{align}
Expanding both sides and using
$\PE[\shnfuncA[c]{t+1}]=\bA[c]$ gives
\begin{align}
\nonumber
\|u\|_2^2
-2\step'\langle u,\bA[c]u\rangle
+\step'^2
\PE\left[
\|\shnfuncA[c]{t+1}u\|_2^2
\right]
\leq
\left(
1-2\step'a+\step'^2a^2
\right)\|u\|_2^2.
\end{align}
After cancelling $\|u\|_2^2$, dividing by $2\step'$, and rearranging,
\begin{align}
\langle u,\bA[c]u\rangle
\geq
a\|u\|_2^2
+
\frac{\step'}2
\left(
\PE\left[\|\shnfuncA[c]{t+1}u\|_2^2\right]
-a^2\|u\|_2^2
\right).
\end{align}
Letting $\step'\to0$ concludes the proof.
\end{proof}

\section{Additional experiments}

\begin{figure}[t]
\centering
\subfigure[$\zeta_{\text{ker}}=0.0,\zeta_{\text{rew}}=0.0$][$\zeta_{\text{ker}}=0$, $\zeta_{\text{rew}}=0$]{\includegraphics[width=0.24\linewidth]{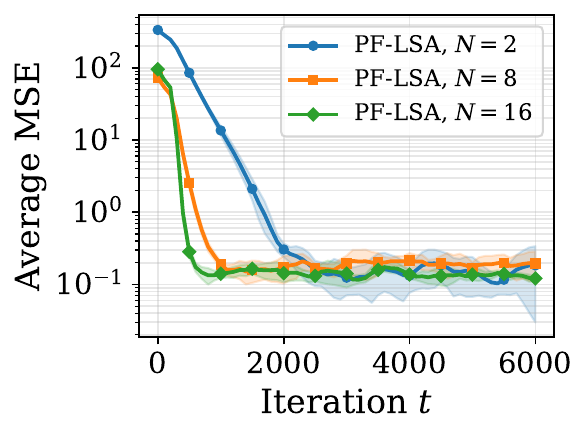}\label{fig:1app}}%
\subfigure[$\zeta_{\text{ker}}\!=\!0.01,\zeta_{\text{rew}}\!=\!0.01$][$\zeta_{\text{ker}}\!=\!0.01$, $\zeta_{\text{rew}}\!=\!0.01$]{\includegraphics[width=0.24\linewidth]{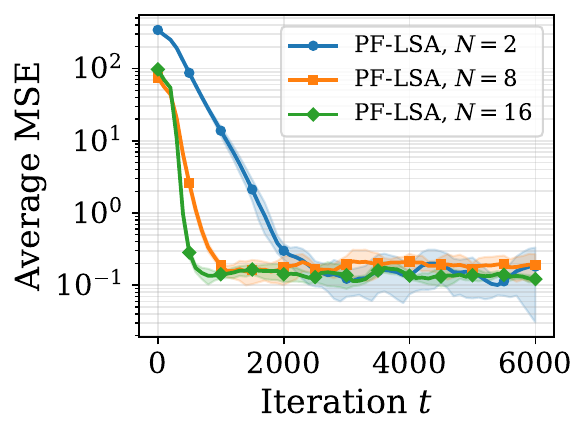}\label{fig:2app}}%
\subfigure[$\zeta_{\text{ker}}=0.5,\zeta_{\text{rew}}=0.5$][$\zeta_{\text{ker}}=0.5$, $\zeta_{\text{rew}}=0.5$]{\includegraphics[width=0.24\linewidth]{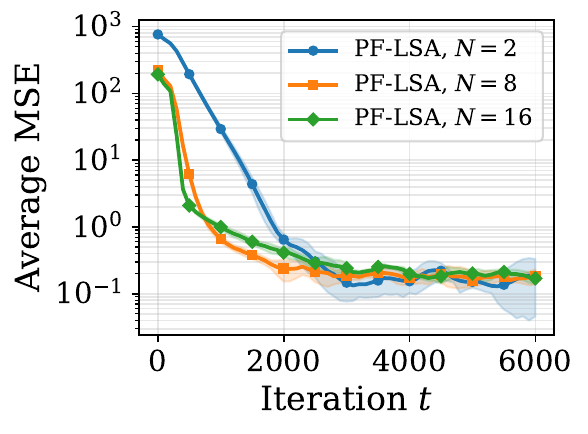}\label{fig:3app}}
\subfigure[$\zeta_{\text{ker}}=0.9,\zeta_{\text{rew}}=0.9$][$\zeta_{\text{ker}}=0.9$, $\zeta_{\text{rew}}=0.9$]{\includegraphics[width=0.24\linewidth]{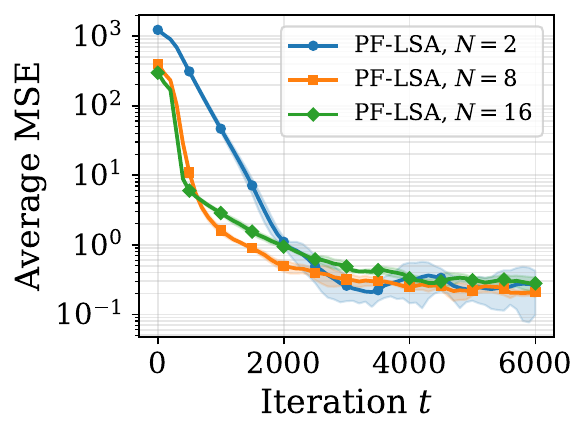}\label{fig:4app}}

\vspace{-0.1em}

\caption{Average MSE across agents,
$\frac{1}{\nagent}\sum_{c=1}^{\nagent}
\left\|\theta_c^t-\theta_c^\star\right\|_2^2$,
as a function of the number of iterations for \perlsa with
$\nagent\in\{2,8,16\}$, under increasing levels of transition-kernel
heterogeneity $\zeta_{\mathrm{ker}}$ and reward heterogeneity
$\zeta_{\mathrm{rew}}$.
The step size is set to $\step = 0.2 \times \nagent / 2$.
Each curve reports the average MSE over agents, with variability computed
over $5$ independent runs.
Consistent with our theoretical results, increasing the number of agents
yields a substantial acceleration in low-heterogeneity regimes, while the
benefit of collaboration progressively diminishes as heterogeneity increases.}
\label{fig:numerical_experiments_app}
\vspace{-0.2em}
\end{figure}

We complement the experiments of Section~\ref{sec:numerical}
by investigating directly the effect of the number of agents on the
convergence speed of \perlsa. In particular, our theoretical results predict
that, in the low-heterogeneity regime, \perlsa\, achieve a linear speedup
in the number of agents while still converging toward each agent's
personalized solution. We therefore compare the performance of \perlsa for
$\nagent \in \{2,8,16\}$ under increasing levels of transition-kernel and
reward heterogeneity.

Figure~\ref{fig:numerical_experiments_app} shows the average MSE across
agents as a function of the number of iterations. We use $\step = 0.1 \times \nagent$ and set $\mixingparam = 1/\nagent$, in accordance with our theoretical results.

\paragraph{Increasing the number of agents accelerates convergence under low heterogeneity.}
In the homogeneous setting
(Figure~\ref{fig:numerical_experiments_app}(a)), increasing the number of
agents leads to a substantial acceleration of convergence. In particular,
$\nagent=16$ reaches the low-error regime considerably earlier than
$\nagent=8$, which itself converges faster than $\nagent=2$. The same
behavior remains clearly visible under very mild heterogeneity
(Figure~\ref{fig:numerical_experiments_app}(b)). This is consistent with
our theoretical prediction that, when the agents' local problems are
sufficiently similar, averaging information across agents reduces the
effective stochastic variance and allows \perlsa\, to benefit from a speedup
that scales with the number of agents.

\paragraph{\perlsa preserves personalization across all heterogeneity levels.}

As the heterogeneity increases, the separation between the curves
corresponding to different values of $\nagent$ becomes progressively
smaller. For moderate heterogeneity
(Figure~\ref{fig:numerical_experiments_app}(c)), using more agents still
improves the transient convergence behavior, but the gain is significantly
weaker than in the nearly homogeneous cases. Under strong heterogeneity
(Figure~\ref{fig:numerical_experiments_app}(d)), the three curves become
much closer, and increasing the number of agents provides only a limited
acceleration. However, in all cases, the method successfully converges to each agent's personalized solution.

Overall, Figure~\ref{fig:numerical_experiments_app} provides empirical evidence for the best-of-both-worlds behavior predicted by our theory: \perlsa benefits increasingly from additional agents when collaboration is useful, while converging to each agent’s personalized solution across all heterogeneity levels.

\end{document}